\newif\ifarXiv         
\newif\ifjournal        
\let\OLDthebibliography\thebibliography
\renewcommand\thebibliography[1]{
  \OLDthebibliography{#1}
  \setlength{\parskip}{0pt}
  \setlength{\itemsep}{3pt}
}
\def\calE{\mathcal{E}}    
\def\calH{\mathcal{H}} 
\def\calF{\mathcal{F}}
\def\calN{\mathcal{N}}
\def\ba{\mathbf{a}}  
\def\sA{\mathscr{A}}
\def\bB{\mathbf{B}}
\def\bc{\mathbf{c}}
\def\bN{\mathbf{N}}
\def\br{\mathbf{r}}
\def\bX{\mathbf{X}}    
\def\bZ{\mathbf{Z}}
\def\bu{\textbf{u}}
\def\bv{\textbf{v}}
\def\bW{\mathbf{W}}   
\def\Tr{\emph{Tr}}
\def\R{\mathbb{R}}       
\def\P{\mathbb{P}}    
\def\E{\mathbb{E}}
\newcommand{\prob}[1]{\mathbb{P}\left({#1}\right)}
\newcommand{\rbracket}[1]{\left(#1\right)}
\newcommand{\norm}[1]{\left\|#1\right\|}
\newcommand{\innerp}[1]{\langle{#1}\rangle}
\newcommand{\argmin}[1]{\underset{#1}{\operatorname{arg}\operatorname{min}}\;}
\newcommand{\dynsys}[2]{\mathcal{S}_{{#1},{#2}}}
\newcommand{\IK}{\Phi}
\def\trCov{\mathrm{tr\,Cov}}
\newcommand{\p}{p}
\renewcommand{\top}{{\rm T}}
\def\mH{\mathcal{H}}
\def\mE{\mathcal{E}}
\def\R{\mathbb{R}}          
\def\P{\mathbb{P}}
\def\nmcons{1}
\newcommand{\fcommentout}[1]{}
\newtheorem{theorem}{Theorem}
\newtheorem{assumption}[theorem]{Assumption}
\newtheorem{definition}[theorem]{Definition}
\newtheorem{example}[theorem]{Example}
\newtheorem{lemma}[theorem]{Lemma}
\newtheorem{proposition}[theorem]{Proposition}
\newtheorem{remark}[theorem]{Remark}
\newenvironment{proof}[1][Proof]{\noindent\textbf{#1.} }{\ \rule{0.5em}{0.5em}}
\numberwithin{equation}{section}
\numberwithin{theorem}{section}
\def\Agraph{\mathbf{\Gamma}} 
\def\bgraph{\mathbf{v}}
\title{Interacting Particle Systems on Networks: \\ joint inference of the network and the interaction kernel}
\author[1]{Quanjun Lang\thanks{quanjun.lang@duke.edu; co-first authors}}
\author[2]{Xiong Wang\thanks{xiongwang@ualberta.ca; co-first authors}}
\author[3]{Fei Lu\thanks{feilu@math.jhu.edu}}
\author[3,4]{Mauro Maggioni\thanks{mauromaggionijhu@icloud.com}}
\affil[1]{Department of Mathematics, Duke University, Durham, USA. }
\affil[2]{School of Mathematics, Sun Yat-sen University, Guangzhou, China. }
\affil[3]{Department of Mathematics, Johns Hopkins University, Baltimore, USA. }
\affil[4]{Department of Applied Mathematics and Statistics, Johns Hopkins University, Baltimore, USA. }
\date{}
\begin{document}
\maketitle

\begin{abstract}
Modeling multi-agent systems on networks is a fundamental challenge in a wide variety of disciplines. 
Given data consisting of multiple trajectories, we jointly infer the (weighted) network and the interaction kernel, which determine respectively which agents are interacting and the rules of such interactions. 
Our estimator is based on a non-convex optimization problem, and we investigate two approaches to solve it: one based on an alternating least squares (ALS) algorithm, and another based on a new algorithm named operator regression with alternating least squares (ORALS). 
Both algorithms are scalable to large ensembles of data trajectories.  We establish coercivity conditions guaranteeing identifiability and well-posedness. The ALS algorithm appears statistically efficient and robust even in the small data regime, but lacks performance and convergence guarantees. The ORALS estimator is consistent and asymptotically normal under a coercivity condition. We conduct several numerical experiments ranging from Kuramoto particle systems on networks to opinion dynamics in leader-follower models.
\end{abstract}

\noindent\textbf{Keywords: }{interacting particle systems; graph learning; alternating least squares; data-driven modeling}

\noindent\textbf{Mathematics Subject Classification: } {62F12, 82C22}

\tableofcontents



\section{Introduction}\label{sec:intro}
Modeling multi-agent systems on networks is fundamental in diverse fields, including opinions on social networks, flows on electric power grids or airport networks, migrating cells, or the abstract space meshes in numerical computations; see e.g., \cite{boccaletti2006complex,gaskin2024inferring,tanner2003stable,olfati2007consensus,pesce2021learning,wu2020comprehensive}. In such systems, the agents (e.g., opinions, vehicles, cells, robotic units) evolve over time by interacting with one another according to an underlying network. These systems can exhibit nonlinear dynamics, with complex emergent behaviors (see \cite{DiscoveryEmergentBehaviors} and references therein). The network specifies who influences whom, and determines the strength of the interactions. 
In this work, we consider the problem of performing joint inference of both the network structure and the interaction rules from observed trajectory data, which we consider a fundamental problem in data-driven modeling of multi-agent dynamics. 

Much of the existing literature addresses only parts of this challenge, focusing on either learning the interaction function under a known network (see, e.g., \cite{LZTM19pnas,LMT21_JMLR,LMT21,Miller2023_Learning2ndorder,LearningInteractionVariables,Amini22learning,GaoInferenceComplexNetwork,hu2024learninginterpretablenetworkdynamics}) or identifying direct interactions without estimating the function (see, e.g., \cite{casadiego2017model} and the extensive list of references therein). One exception to the above is \cite{LearningTransitionOperators_1}, which explores the joint estimation with statistical and computational guarantees in the special case of {\em{linear}} dynamics on a network. Yet, in many practical settings, both the network and the functional form of the interactions are initially unknown and must be identified jointly from data. This study takes a first step in this direction by studying this joint inference problem for interacting particle systems on networks, where agents are at network vertices and the interaction kernel depends on pairwise displacements of their states.

\subsection{Problem setup}
We consider a heterogeneous dynamical system with $N$ interacting agents on a weighted directed graph $G=(V,E,\ba)$, where $\ba= (\ba_{ij}) \in [0,1]^{N\times N}$ is a matrix of edge weights. For each agent $i \in [N] := \{1,\dots,N\}$, its state at time $t$ is a vector $X^i_t \in \mathbb{R}^d$ and its evolution depends on the states of its neighbors in the network and an \emph{interaction kernel} $\IK:\mathbb{R}^d\to\mathbb{R}^d$. The system has a governing equation consisting of a system of ordinary or stochastic differential equations: 
\begin{equation}\label{eq:ips_K}
d {X}^i_t =\sum_{j\neq i}\ba_{ij}\IK(X^j_t-X^i_t) dt + \sigma d {W}^i_t, \quad i\in [N]\,, 
\end{equation}
where we write $\sum_{j\neq i}$ to denote $\sum_{j\in[N]\setminus\{i\}}$. Here, the forcing term $(W^i_t)_{i\in [N]}$ is an $\R^{N\times d}$-valued standard Brownian motion, and the diffusion coefficient  $\sigma$ is a constant; the system is deterministic when $\sigma=0$ and stochastic when $\sigma>0$. The weight matrix $\ba$ encodes which pairs of agents interact (i.e., which edges exist), along with the strength of these interactions, while $\IK$ encodes the functional rules for interactions.

In this setting, the task of joint inference is to estimate the unknown weight matrix $\ba$ and the interaction kernel $\IK$ from multi-trajectory data. 
We assume that $\ba$ is in the admissible set
\begin{equation}\label{eq:aMat_set}
	\mathcal{M} := \bigg\{\ba=(\ba_{ij})\in  [0,1]^{N\times N}\!:\,\forall i\in[N]\,\,\, \ba_{ii}=0\,\,,\,\, |\ba_{i\cdot}|^2:=\|\ba_{i\cdot}\|_{\ell^2}^2=\sum_{j=1}^N \ba_{ij}^2 = \nmcons \,  \bigg\}\,. 
\end{equation}
Here, the row-wise normalization of the weight matrix removes a trivial issue in the identifiability of $(\ba,\IK)$ due to rescaling: $(\ba,\IK)$ can be replaced by $(\lambda\ba,\lambda^{-1}\IK)$ in \eqref{eq:ips_K} for any $\lambda>0$. The choice of the $\ell^2$ normalization is somewhat arbitrary in our analysis and algorithms; other norms, such as the $\ell^1$-norm or the Frobenius norm, may be used, depending on modeling assumptions, and its choice has important for example when one studies the mean-field limit ($N\to \infty$,  see, e.g., \cite{LackerRamananWu2023} and references therein).

\begin{table}[htbp]\caption{Notations for the indices, vectors, and arrays in the system. }
{\centering
\begin{tabular}{l |  l }
\toprule
\hline 
$[N]$ : index set $\{1,\dots,N\}$ 			& $\bX_t = (X^1_t,\ldots, X^N_t) \in \R^{N\times d}$:  state vector at time $t$ \\
$i,j\in[N]$:  indices for agents    				& $\ba \in \R^{N\times N}$: graph weight matrix \\
$k\in[\p]$:  index for basis of kernel 			&   $c\in \R^{\p\times 1}$: coefficient vector of $K$ on a basis $\{\psi_k\}$\\
$m\in[M]$:  index for samples         			&  $\bB(\bX_t)= \big[\psi_k(X_t^j-X_t^i) \big]_{j,i,k}\in \R^{N\times N\times d\times\p} $: basis array  \\
$l\in[L]$: index of observation times          		& $\|\cdot\|_F$: the Frobenius norm of a matrix\\
$|\cdot|$: the Euclidean norm of a vector      	& {${\rm{Vec}}: \R^{N\times\p}\to \R^{N\p\times 1}$ is the vectorization operation.}\\
\hline 
\bottomrule 
\end{tabular} 
}\\
{* \footnotesize We use letters for vectors, bold letters for arrays/matrices
, and calligraphic letters for operators.}
\label{tab:notation}
\end{table}

We restrict our attention to parametric families of interaction kernels: we estimate the coefficient $c=(c_1,\ldots,c_p)\in \R^{\p}$ of the kernel 
$$\IK(x) = \sum_{k=1}^\p c_k \psi_k(x)$$
with respect to a given set of basis functions $\{\psi_k\}_{k=1}^{\p}$. 
We also consider the case when the true interaction kernel is not in the hypothesis space $\mathcal{H}:=\mathrm{span}\{\psi_k\}_{k=1}^{\p}$, demonstrating in our experiments that our estimator is robust to model mis-specification.

We let $\bX_t := (X^1_t,\ldots, X^N_t) \in \R^{N\times d}$ be the state vector, $\dot\bW: =[dW^i_t]_i\in\R^{N\times d}$ be the white noise in the forcing term, and $\bB(\bX_t)_i := \big(\psi_k(X_t^j-X_t^i) \big)_{j,k}\in \R^{ N\times 1 \times d\times\p}$ for each $i\in[N]$. We can then rewrite  \eqref{eq:ips_K} in tensor form:  
\begin{equation}\label{eq:ipog} 
 \begin{aligned}
\dynsys {\mathbf{a}}{c}\quad:\quad &\dot \bX_t  = \ba\odot\bB(\bX_t) c+\sigma \dot{\bW} \, \text{ with }  \ba\odot\bB(\bX_t) c:= \big(  \ba_{i\cdot}\bB(\bX_t)_i c\big)_{i\in[N]} \,,\,   \text{ where}  \\
 & \ba_{i\cdot}\bB(\bX_t)_i c  = \sum_{j\neq i} \ba_{ij}\sum_{k=1}^{\p} \psi_k(X^j_t-X^i_t)c_k \in \R^{d}\,,\quad i\in[N],  
 \end{aligned}
\end{equation}
with $\ba_{i\cdot}$ is the $i$-th row of the matrix $\ba$. We summarize the notation in Table \ref{tab:notation}.

\paragraph{Problem statement.}
Our goal is to \emph{jointly} estimate the weight matrix $\ba$ and the coefficient vector $c$, and therefore the interaction kernel $\IK$, given
\begin{equation}\label{eq:data}\vspace{-1mm}
\smash{
\textbf{Data: } \quad \{\bX_{t_0:t_L}^m\}_{m=1}^M, \quad \text{ where } t_0:t_L \text{ denotes } (t_0,t_1,t_2,\ldots,t_L)\,,\text{ with } t_l= l\Delta t\, ,
}\end{equation}
i.e., observations of the state vector at discrete times along multiple-trajectories indexed by $m$, started from initial conditions $\bX^m_{t_0}$ sampled from $\mu^{\otimes N}$, where $\mu$ is a distribution on $\R^d$. We let $T=t_L$ and $t_0=0$. 

\subsection{Main results: scalable algorithms, identifiability, and convergence}
Our estimator of the parameter $(\ba,c)$ is a minimizer of a loss function $\calE_{L, M}$: 
\begin{equation}\label{eq:est_joint}
(\widehat{\ba},  \widehat{c} )= \argmin{(\ba,c)\in \mathcal{M}\times \R^{\p}}  \calE_{L, M}(\ba,c) \,\,\text{ with }\, \calE_{L, M}(\ba,c) := \frac{1}{MT}\sum_{l=0, m = 1}^{L-1,M}\norm{\Delta\bX_{t_l}^m -\ba\odot\bB (\bX_{t_l}^m) c\Delta t}_F^2, 
\end{equation}
where $\mathcal{M}$ is the admissible set defined in \eqref{eq:aMat_set}  and $\|\cdot\|_F$ denotes the Frobenuous norm on $\R^{N\times d}$. Here $\Delta\bX_{t_l}= \bX_{t_{l+1}} - \bX_{t_{l}}$; if the system were deterministic ($\sigma=0$) and we had observations of $\dot{\bX}_{t_l}$, we use these instead. 
This loss function comes from the differential system \eqref{eq:ipog}: its scaled version $(\Delta t)^{-1} \calE_{L, M}(\ba,c) $ is the mean square error between the two sides of the system when $\sigma=0$; it is 
the scaled log-likelihood ratio (up to a constant independent of the data trajectories) for the stochastic system when $\sigma>0$. We refer to \cite[Section 2.1]{LMT21} for a detailed discussion.

Since the loss function is quadratic in either $\ba$ or $c$ separately, we consider the classical alternating minimization over $\ba$ and $c$, where the minimization steps lead to least squares problems; see Section \ref{sec:als}. This corresponds to Alternating Least Squares (ALS) in \cite{Leeuw_ALS}.

However, the nonconvexity of the loss function $\calE_{L, M}$ in $(\ba,c)$ poses a major challenge, which is further complicated by the dependence of $c$ on the basis functions in the hypothesis space. With the nonlinear constraints on the weight matrix $\mathbf{a}$, the loss function may have multiple local minima even in the limit $M\rightarrow\infty$. Thus, it poses a risk to methods such as gradient descent or the ALS. In particular, these methods do not have a performance guarantee and do not provide the tools for studying the identifiability of the nonlinear inverse problem.    

To build mathematical foundations for the inference problem, we introduce a variant of the ALS algorithm, called ORALS, that first uses an Operator Regression that estimates product matrices of $\ba$ and $c$, and then uses ALS on much simpler matrix factorization problems to obtain the factors $\ba$ and $c$ from the estimated products; see Section \ref{sec:orals}. Through ORALS, we study the identifiability of the parameters, the well-posedness of the inverse problem, and the convergence of the estimators as the sample size increases in Section \ref{sec:theory-ID-conv}. We introduce coercivity conditions, key properties in the learning theory of interaction kernels (see, for example, \cite{LMT21_JMLR,LMT21,LLMTZ21}) that guarantee the identifiability of the parameters and the well-posedness of the inverse problem. Under the coercivity conditions, we show that the regression matrices in ALS and ORALS are well-conditioned with a high probability, and the ORALS estimator is asymptotically normal.

In Section \ref{sec:numExper}, we examine the ALS and ORALS algorithms numerically in terms of the dependence of their accuracy and robustness with respect to each of the following three key parameters: sample size, level of observation noise, the strength of the stochastic force, and possible misspecification of the model (i.e. the hypothesis space does not contain the true interaction kernel). 

Overall, ALS appears to be particularly efficient and robust, both statistically and computationally, as soon the number of observations is comparable to the number of unknown parameters $(\ba, c)$; the estimator it produces converges as the sample size increases, although there are no theoretical guarantees for this convergence.  ORALS performs as well as ALS in the large sample regime, with the estimator converging at the theoretically guaranteed rate $M^{-1/2}$.
In the non-asymptotic regime, our numerical experiments, detailed in Section \ref{sec:alg_comparison}, suggest that the critical sample size (i.e., the smallest sample size sufficient for a reasonable estimation with high probability) for ALS is of order $M\gtrsim \frac{N^2+p}{NLd} $, where the numerator is the number of parameters $(\ba,c)$ to be estimated and the denominator is the number of scalar observations per sample trajectory. This critical sample size is therefore optimal in terms of the number $M$ of independent observations. This is much smaller than the critical sample size for ORALS, which appears to be of order $M\gtrsim \frac{N^2p}{NLd}$.  Thus, while ORALS leads to an asymptotic normal estimator, ALS outperforms in sample efficiency and robustness.

We also demonstrate our estimator in three applications: learning the Kuramoto model on a network, identifying a leader-follower network, and recovering multitype interaction kernels in Section \ref{sec:applications}. In particular, we consider a \emph{multi-type agent} system with $Q$ types of agents and corresponding interaction kernels $(\IK_q)_{q=1}^{Q}$, where the type of agent $i$ is denoted by $\kappa(i)$, and governing equations
\begin{equation}\label{eq:ips_K_type}
\dynsys {\mathbf{a}}{(\IK_q)_{q=1}^{Q},\kappa}\,:\qquad d {X}^i_t =
\sum_{j\neq i}\ba_{ij}\IK_{\kappa(i)}(X^j_t-X^i_t) dt + \sigma d {W}^i_t, \quad i \in [N]\,.
\end{equation}
We tackle here the challenging problem where the type $\kappa$ of each agent is not known, and it needs to be estimated together with the weight matrix $\ba$ and the interaction kernels $\phi_1,\dots,\phi_Q$. We introduce a three-fold ALS algorithm to solve this problem in Section \ref{sec:multi_type}. 

\paragraph{Our main contributions} Our main contributions are threefold.  
\begin{enumerate}
\item We investigate two algorithms for joint inference of network and interaction functions: ALS (Alternating Least Squares) and ORALS (Operator Regression followed by Alternating Least Squares). Numerical experiments indicate that both algorithms yield convergent estimators as the sample size increases, with ALS demonstrating superior sample efficiency and robustness.

\item We establish mathematical foundations for the inference problem by deriving coercivity conditions to analyze parameter identifiability and ensure the well-posedness of the nonlinear inverse problem. In particular, we prove that the ORALS estimator is asymptotically normal. 

\item We systematically assess the robustness and accuracy of our estimators and demonstrate their applications using synthetic data. Our tests quantify the effects of sample size, observation noise level, and the magnitude of stochastic forcing on accuracy, and examine robustness under model misspecification. 
 \end{enumerate} 

\subsection{Related work}

\paragraph{Learning the interaction kernel} 
When the graph is complete and undirected, i.e., $\ba_{ij}\equiv \frac 1 N$ for all $(i,j)$, we have homogeneous interactions. In this case, the learning of radial interaction kernels $\IK$ in the form $\IK(x,y) = \phi(|x-y|) \frac{x-y}{|x-y|}$ has been systematically studied in \cite{LZTM19pnas,LMT21_JMLR,LMT21,LLMTZ21} and generalized to second-order systems and non-radial interaction kernels \cite{Miller2023_Learning2ndorder}, and even to interaction kernels whose variables are learned \cite{LearningInteractionVariables}.  The learning of the interaction kernel from a single trajectory, assuming knowledge of the underlying network, has been studied in \cite{Amini22learning}.

\paragraph{Learning the network}
Another related problem is estimating the graph underlying linear Markovian dynamics on the graph when only sparse observations in space and time are given \cite{LearningTransitionOperators_1}. However, none of these works address the joint estimation problem.

\paragraph{Compress sensing and matrix sensing} The joint estimation of $\mathbf{a}$ and $c$ is closely related to compressed sensing and matrix sensing as elaborated in seminal works including  \cite{Candes2008,Candes2009,Candes2010,RFP2010,GJZ2017,ZSL2017}. The array $\{\bB (\bX_{t_l}^m)\}$ in \eqref{eq:est_joint} plays the role of sensing linear operator for the unknowns $\ba$ and $c$.  Diverging from the conventional framework of matrix sensing, where the entries of the sensing matrix are typically independent, the entries of $\bB (\bX_{t_l}^m)$ are correlated, depending on the dynamics and the basis functions. Furthermore, we have the additional constraint that the entries of the weight matrix $\ba$ are nonnegative. These differences can lead to multiple local minima for the loss function $\calE_{L,M}$, even in the limit $M\rightarrow\infty$. 

More fundamentally, our joint inference operates in the product space of a vector space and a function space (i.e., the hypothesis space for the interaction kernel), whereas matrix sensing typically deals with spaces of low-rank matrices. Consequently, there are two key differences between our coercivity conditions and the Restricted Isometry Property (RIP) condition in matrix sensing (see, e.g., \cite{Candes2008,RFP2010,ZSL2017}): the RIP condition is for each set of basis (sensing) functions, while our coercivity conditions are on both the distribution of the data and the hypothesis space $\mathcal{H}$, and independent of the choice of basis functions for $\mathcal{H}$.

\section{Two algorithms: ALS and ORALS}
\label{sec:alg}
We present two algorithms constructing the estimator in \eqref{eq:est_joint}: an {\em{Alternating Least Squares}} (ALS) approach and a two-step algorithm based on \emph{Operator Regression followed by an Alternating Least Squares} (ORALS), and discuss their computational complexity. We will present their theoretical guarantees in Section \ref{sec:theory-ID-conv} and further examine their numerical performance in Section \ref{sec:numExper}. 

ALS is computationally efficient, with well-conditioned matrices as soon the number of observations is comparable to the number of unknown parameters $(\ba, c)$, but with weak theoretical guarantees. ORALS is amenable to theoretical analysis, achieving consistency and asymptotic normality, albeit at a somewhat higher (in $N$ and $\p$) computational cost. 

\subsection{Alternating Least Squares (ALS)}\label{sec:als}
The ALS algorithm exploits the convexity in each variable by alternating between the estimation of the weight matrix $\ba$ and of the coefficient $c$ while keeping the other fixed:

\smallskip
\noindent{\em Inference of the weight matrix. } 
Given an interaction kernel, represented by the corresponding set of coefficients $c$, we estimate the weight matrix $\ba$ by directly solving the minimizer of the quadratic loss function with $c$ fixed, followed by row-normalizing the estimator.  For every $i\in[N]$, we obtain the minimizer (with $\ba_{ii}=0$) of the loss function $\calE_{L, M}(\ba,c)$ in \eqref{eq:est_joint} with $c$ fixed by solving $\nabla_{\ba_{i\cdot}}\calE_{L, M}(\ba,c)= 0$, which is a linear equation in $\ba_{i\cdot}$:
\begin{equation}\label{eq:ALS_graphest}
\begin{aligned}
\widehat\ba_{i\cdot} \mathcal{A}^{\textrm{ALS}}_{c,M,i} := \widehat \ba_{i\cdot} ([\bB(\bX_{t_l}^m)_{i}]_{l,m} c) & = [(\Delta \bX^m_{t_l})_i]_{l,m}/\Delta t\,,\qquad i\in[N], 
\end{aligned}
\end{equation}
where 
$[\bB(\bX_{t_l}^m)_{i}]_{l,m}\in\mathbb{R}^{N \times (dLM) \times\p}$, $\mathcal{A}^{\textrm{ALS}}_{c,M,i} := [\bB(\bX_{t_l}^m)_{i}]_{l,m} c\in\mathbb{R}^{N \times dLM}$ and $[(\Delta \bX_{t_l}^m)_i]_{l,m}\in\mathbb{R}^{1\times dLM}$, with $i\in[N]$ fixed and $l,m$ varying in $[L]$ and $[M]$ respectively, are obtained by appropriately stacking, over $l$ and $m$, the corresponding matrices in block-row fashion, and tensor-vector multiplications (by $\widehat\ba_{i\cdot}$ on the left and $c$ on the right) act on the appropriate tensor slices. 
We solve this linear system by least squares with nonnegative constraints \cite[Chapter 23]{lawson1995solving}, since $\ba\in\mathcal{M}$ implies that the entries of $\ba$ are nonnegative, followed by a normalization in $\ell^2$-norm to obtain an estimator $\widehat{\ba}_{i\cdot}$ in the admissible set $\mathcal{M}$ defined in \eqref{eq:aMat_set}.
We remark that should the network be partially known, such knowledge could be incorporated as additional constraints at this stage.

\smallskip
\noindent{\em Estimating the parametric interaction kernel.}  
In this step, we estimate the parameter $c$ by minimizing the loss function   $\calE_{L, M}(\ba,c)$ in \eqref{eq:est_joint} with a fixed weight matrix $\ba$ estimated above, by solving the least squares problem
\begin{equation}
\mathcal{A}^{\textrm{ALS}}_{\ba,M}\, \widehat{c} := [\ba\odot\bB(\bX_{t_l}^m)]_{l,m}\widehat{c} = [\Delta \bX_{t_l}^m]_{l,m}/\Delta t \,,
\label{eq:ALS_kernelest}
\end{equation}
where $\mathcal{A}^{\textrm{ALS}}_{\ba,M}:=[\ba\odot\bB(\bX_{t_l}^m)]_{l,m}\in\mathbb{R}^{dLMN\times\p}$ is again obtained by stacking, over $l$ and $m$, in a block-row fashion.

We alternate these two steps until the updates to the estimators are smaller than a tolerance threshold $\epsilon$ or until maximal iteration number $n_{maxiter}$ is reached,  as in Algorithm \ref{alg:ALS}.

\begin{algorithm}[t]
\caption{ALS: alternating least squares}\label{alg:ALS}
\small{
\begin{algorithmic}[]
\Procedure{ALS\_IPSonGraph}{$\{\bX^m_{t_0:t_L}\}_{m=1}^M , \{\psi_k\}_{k=1}^{\p} , \epsilon ,{n}_{maxiter}$}
  \State Construct the arrays $\{\bB(\bX_{t_l}^m)\}_{l,m}$ and $\{\Delta \bX_{t_l}^m\}$ in \eqref{eq:est_joint} for each trajectory.  
  \State Randomly pick an initial condition $\widehat{c}_0\neq0$.
  \For{$\tau = 1,\dots,{n}_{maxiter}$}  
  \State Estimate the weight matrix $\widehat{\mathbf{a}}_{\tau}$ by solving \eqref{eq:ALS_kernelest} with $c=\widehat{c}_{\tau-1}$, by nonnegative least squares, 
  
  \indent \quad\,\,\, followed by a row normalization.
  \State Estimate the parameter $\widehat{c}_\tau$ by solving \eqref{eq:ALS_graphest} with $\mathbf{a}=\widehat{\mathbf{a}}_\tau$, by least squares.
  \State Exit loop if $||\widehat{c}_\tau-\widehat{c}_{\tau-1}||\le\epsilon||\widehat{c}_{\tau-1}\|$ and $||\widehat{\mathbf{a}}_{\tau}-\widehat{\mathbf{a}}_{\tau-1}||\le\epsilon ||\widehat{\mathbf{a}}_{\tau-1}||$.
\EndFor
\Return  $\widehat{c}_\tau,\widehat{\mathbf{a}}_\tau$.
\EndProcedure
\end{algorithmic}
}
\end{algorithm}

\subsection{Operator Regression and Alternating Least Squares (ORALS)}\label{sec:orals}
ORALS divides the estimation into two stages: a statistical operator regression stage and a deterministic alternating least squares stage. The first stage estimates the entries of the matrices $\{\ba_{i,\cdot}^\top c^\top \in \R^{(N-1)\times\p} \}_{i=1}^N$ (excluding the zero entries $\ba_{ii}$) by least squares regression with regularization. It is called operator regression because we view the data as the output of a sensing operator over these matrices.
After this step, a deterministic alternating least squares stage jointly factorizes these estimated matrices to obtain the weight matrix $\ba$ and the coefficient $c$.   

\smallskip
\noindent{\em Operator Regression stage.} Consider the arrays $\{\bZ_i=\ba_{i,\cdot}^\top c^\top \in \R^{(N-1)\times\p} \}_{i=1}^N$ treated as vectors in $\R^{(N-1)\p\times 1}$, that is, $z_i={\rm Vec}(\bZ_i) = (\ba_{i,1}c_1,\ldots, \ba_{i,1}c_{\p}, \ba_{i,2}c_1, \ldots,\ba_{i,2}c_{\p}, \ldots )^\top \in \R^{(N-1)\p\times 1}$ for each $i$. They are solutions of the linear equations with sensing operators $\mathcal{A}_{i,M}  = [\mathcal{A}_{i}]_{l,m}\in \R^{dML\times (N-1){\p}}$:    
\begin{equation}\label{eq:Ai_operator}
\mathcal{A}_{i,M} z_i =  [\mathcal{A}_{i}]_{l,m} z_i := [(\ba \bB (\bX_{t_l}^m) c\Delta t)_i]_{l,m} = [(\Delta \bX^m_{t_l})_i]_{l,m}\,,
\quad i\in[N], 
\end{equation}
where, as usual, $[\cdot]_{l,m}$ denotes stacking block rows over the indices $l,m$.
 With the above notation, we can write the loss function in \eqref{eq:est_joint} as 
\begin{equation}\label{eq:est_matZ} 
	(\widehat{z}_{1,M},\ldots, \widehat{z}_{N,M}) =  \argmin{z_1,\ldots,z_N }  \calE_{L, M}(z_1,\ldots,z_N) := \frac{1}{ML}\sum_{l, m,i = 1}^{L, M,N}\big| [(\Delta\bX^m)_i]_{l,m} -[\mathcal{A}_{i}]_{l,m} z_i\big|^2 
\end{equation}
and obtain $\{\widehat{z}_{i,M}\}$ by solving this least squares problem for each $i\in[N]$. 

\smallskip
\noindent{\em Deterministic ALS stage.} The rows of $\ba$ and the vector $c$ are estimated via a joint factorization of the matrices of the estimated vectors $\{\widehat z_{i,M}\}$, denoted by $\widehat \bZ_{i,M}$, with a shared vector $c$:   
\begin{equation}\label{eq:est_Ac}
(\widehat{\ba}^{M},  \widehat{c}^{M} )= \argmin{\ba\in \mathcal{M},\, c\,\in \R^{\p}}  \calE(\ba,c):= \sum_{i=1}^N\norm{\widehat\bZ_{i,M} - \ba_{i,\cdot}^\top c^\top}_F^2, 
\end{equation}
where $\mathcal{M}$ is the admissible set in \eqref{eq:aMat_set}. A deterministic alternating least squares algorithm solves this problem: we first estimate each row of  $\ba$ by nonnegative least squares and then estimate $c$ using all the estimated $\ba$ with row-normalization. We iterate them for two steps, starting from $\widehat c_0$ obtained from rank-1 singular value decomposition, as in Algorithm \ref{alg:ORALS}. Numerical tests show that two iteration steps are often sufficient to complete the factorization, and the result is robust for more iteration steps.

Theorem \ref{thm:AN_orals} shows that the estimator obtained by ORALS is consistent and asymptotically normal under a coercivity condition. 
\begin{algorithm}[htb]
\caption{ORALS: Operator Regression and Alternating Least Squares.}\label{alg:ORALS}
\small{
\begin{algorithmic}[]
\Procedure{ORALS\_IPSonGraph}{$\{\bX^m_{t_0:t_L}\}_{m=1}^M , \{\psi_k\}_{k=1}^{\p}$}
  \State Construct the sensing operators $\mathcal{A}_{i,M}$ (from the arrays $\{\bB(\bX_{t_l}^m)\}_{l,m}$) and $\{\Delta \bX_{t_l}^m\}$ in \eqref{eq:Ai_operator} for each 
    \indent \quad\,\,\,trajectory.  
  \State Solve the vector $\widehat z_{i,M}$'s in \eqref{eq:est_matZ} by least squares with regularization; and transform them into 

    \indent \quad\,\,\,matrices  $\widehat Z_{i,M}$. 
  \State Factorize each matrix $\widehat Z_{i,M}$. Set the initial condition $\widehat{c}_0$ to be the first right singular vector. 
  \For{$\tau = 1,2$}  
  \State Estimate the weight matrix $\widehat{\mathbf{a}}_{\tau}$ by solving \eqref{eq:est_Ac} with $c=\widehat{c}_{\tau-1}$ by nonnegative least squares, 
  
  \indent \quad\,\,\,\, followed by a row normalization.
  \State Estimate the parameter $\widehat{c}_\tau$ by solving \eqref{eq:est_Ac} with $\mathbf{a}=\widehat{\mathbf{a}}_\tau$ by  least squares.
  \EndFor
\Return $\widehat{c}_\tau,\widehat{\mathbf{a}}_\tau$.
 \EndProcedure
  \end{algorithmic}
  }
\end{algorithm}

\subsection{Algorithmic details}\label{sec:alg_comparison}

\paragraph{Comparison between ALS and ORALS}
ALS minimizes over $\ba$ and $c$ separately, capturing the joint $2$-parameter structure of the problem, at every iteration. 
This is crucial to achieve a near-optimal sample complexity of $\frac{N^2+\p}{NLd}$, where $N^2+\p$ is the number of (independent) parameters to be estimated, and $NLd$ is the total number of scalar observations in each sample trajectory.
Numerical experiments (see, for example, Figure \ref{fig:conv_M_T_randomkernel}) suggest that indeed ALS starts converging to accurate estimators as soon as the sampling size is about $M\gtrsim \frac{N^2+\p}{NLd}$, and that ALS consistently and significantly outperforms ORALS at small and medium sample sizes. 
In each of the two steps at each iteration of ALS, the update of the involved parameter is non-local, making the algorithm potentially robust to local minima in the landscape of the loss function over $(\ba,c)$: we witness estimator paths of ALS overcoming local minima and bypassing ridges in the optimization landscape to converge to a global minimizer quickly. It is certainly not the case, in general, that the optimization landscape is free of local minima.
The computational cost for ALS is also smaller than ORALS, especially as a function of $N$ and $\p$.

A major drawback of ALS is the challenge in establishing global convergence of the iterations, particularly around the near-optimal sample size, but also for large sample size. Similar problems are intensively studied in matrix sensing, where certain restricted isometry property (RIP) conditions and their generalizations are sufficient to ensure the uniqueness of a global minimum or the absence of local minima (see, \cite{GJZ2017,ZSL2017,LeeStoger2022})). However, these conditions appear not to be satisfied in our setting in general, and local minima can exist: see, e.g., Figure \ref{Fig:ErrFuns} in Appendix Section \ref{Appen_RIP} for more detailed investigations. It remains an open problem to study the convergence of the ALS algorithm in this new setting.

For the ORALS estimator, Theorem \ref{thm:AN_orals} guarantees both convergence and asymptotic normality as the number of paths $M$ goes to infinity; in practice, we observe that ORALS starts constructing accurate estimators when $M\gtrsim \frac{N^2\p}{NLd}$, where $N^2p$ is the number of parameters to be estimated in the first step of ORALS. 
The second step of ORALS is a classical rank-1 matrix factorization problem: it has an accurate solution robust to the sampling errors in the matrices $\widehat \bZ_i$ estimated in the operator regression stage. These sampling errors can be analyzed with non-asymptotic bounds by concentration inequalities and asymptotic bounds by the central limit theorem. 
 
\paragraph{Computational complexity}\label{ss:compcomplexity} 
Table \ref{tab:complexity} shows the theoretical computational complexity of ALS and ORALS, and Figure \ref{fig:compute_test} in  Section \ref{s:emp_comp_perforamance} shows the practical scaling in terms of the two fundamental parameters $M$ and $N$. The computational cost is dominated by assembling the regression matrices from the input data, whereas the solution of the linear equations takes a lower order of computations.
Observe that the data size is comparable to $MLdN$, with independence in $M$ but not in $L$ or $N$, and the number of parameters being estimated is $N^2+{\p}$. It is natural therefore to assume $M\gtrsim N^2+{\p}$ or, perhaps more optimistically assuming independence in $L$ and $N$, $MLdN\gtrsim N^2+\p$.  In a non-parametric setting, we would expect $\p$ to grow with $M$ (as in \cite{LZTM19pnas,LMT21_JMLR,WangSeroussiLu2023}, where optimal choices of $\p$ are $\p\sim M^\alpha$ for some $\alpha\in(0,1)$), so the dependency of the computational complexity on $M$ and $\p$ is of particular interest.
The summary of the computational costs is in Table \ref{tab:complexity}, and empirical measurements of wall-clock time are discussed in Section \ref{s:emp_comp_perforamance}.
\begin{table}[h]
\caption{Computational complexity of ALS (per iteration) and ORALS. Recall that the size of the input data is $MLdN$.
}\label{tab:complexity}
\centering{
\begin{tabular}{ | l ||  c | c |}
\hline
                      & \hspace{10mm} ALS \hspace{10mm} & \hspace{10mm}ORALS \hspace{10mm} \\
                                  \hline
Assembling mats/vecs  & $O(MLdN^2{\p})$  		&  $O(MLdN^3\p^2)$ \\
Solving & $O(MLdN(\p^2+N^2))$		&  $O(MLdN^3+N^4\p^3)$ \\
  \hline
Total (if $MLd>N$)  & $O(MLdN(\p^2+Np+N^2))$ & $O(MLdN^3+N^4\p^3)$ \\
 \hline
\end{tabular}	
}
\end{table}

\paragraph{Ill-posedness and regularization} \label{sec:regu}
Robust solutions to least squares problems are crucial for the ALS and ORALS algorithms.  When the matrices in the least squares problems are well-conditioned (i.e., the ratio between the largest and the smallest positive singular values are not too large), the inverse problem is well-posed, and pseudo-inverses lead to accurate solutions robust to noise. 

However, regularization becomes necessary to obtain estimators robust to noise when the matrix is ill-conditioned or nearly rank deficient. This happens when the sample size is too small, the basis functions are nearly linearly dependent, or the observation matrix is ill-conditioned due to the properties of the dynamics or the choice of the hypothesis space. The basis functions can be orthonormalized with respect to an empirical approximation to $\rho_L$ \eqref{Def:exploration}, but the other issues are fundamental to the problem. In such cases, numerical tests show that the minimal-norm least squares method and the data-adaptive RKHS Tikhonov regularization in \cite{LLA22} lead to more robust and accurate estimators than the pseudo-inverse and the Tikhonov regularization with the Euclidean norm. See more details in Appendix Section \ref{sec:regu_append}.
In this study, we consider only Tikhonov regularizers that are suitable for least squares type estimators in ALS and ORALS; of course, there is a very large literature on regularization methods (see, e.g., \cite{engl1996regularization,hansen1998rank,cucker2002_BestChoicesb,gazzola2019ir} and the references therein). 


\section{Theoretical guarantees} \label{sec:theory-ID-conv}
Three fundamental issues in our inference problem are (i) the identifiability of the weight matrix and the interaction kernel, i.e., the uniqueness of the minimizer of the loss function in the large sample limit; (ii) the well-posedness of the inverse problem in terms of the condition numbers of the regression matrices in the ALS and ORALS algorithms, and (iii) the convergence of the estimators as the sample size increases.

We address these issues by introducing coercivity conditions in Section \ref{sec:CCs}. The coercivity conditions ensure identifiability and well-posedness, enabling the study of estimator convergence in Section \ref{sec:Conv}. Intuitively speaking, the coercivity conditions guarantee that the smallest eigenvalues of the regression matrices in ORALS and ALS have a uniform lower bound with a high probability as the sample size increases; see Section \ref{sec:CC-eign}.

 Hereafter, we denote $\E$ the expectation with respect to the distribution of the data trajectories, which depends on the initial distribution of $\bX_{t_0}$, the stochastic force and the noise. We let 
\begin{equation}\label{Def:r_ij}
	 \br_{ij}(t_l):=X^{j}_{t_l}-X^{i}_{t_l}\quad \text{and}\quad
	 \mathbf{r}_{ij}^{m}(t_l):=X^{j,m}_{t_l}-X^{i,m}_{t_l}\,. 
\end{equation}
We say the true parameter $(\ba^{*},\IK_*)$ is \emph{identifiable} if it is the unique zero of the loss function in the large sample limit 
\begin{align*}
	\calE_{L, \infty}(\ba,\phi) 
	&=\frac{1}{L}\sum_{i=1}^N\sum_{l=0}^{L-1}\E \bigg[\bigg|  \sum_{j\neq i} [\ba_{ij} \IK(\mathbf{r}_{ij}(t_l))-\ba_{ij}^{*} \IK_*(\mathbf{r}_{ij}(t_l))]\bigg|^2 \bigg]\,, 
\end{align*}
when the data has no noise and when the model is deterministic. We say the inverse problem is well-posed if the estimator is robust to noise or sampling error.

\subsection{Exploration measure}

\label{s:identifiabilityandconvergence}
We define a function space $L^2(\rho_L)$ for learning the interaction kernel, where $\rho_L$ is a probability measure that quantifies data exploration to the interaction kernel.  Let 
\begin{equation}\label{Def:r_ij}
	 \br_{ij}(t_l):=X^{j}_{t_l}-X^{i}_{t_l}\quad \text{and}\quad
	 \mathbf{r}_{ij}^{m}(t_l):=X^{j,m}_{t_l}-X^{i,m}_{t_l}\,.
\end{equation}
These pairwise differences $\{\br_{ij}^m(t_l)\}$ are the independent variable of the interaction kernel. Thus, we define $\rho_L$ as follows.  
\begin{definition}[Exploration measure]
With observations of $M$ trajectories at the discrete times $\{t_l\}_{l=0}^{L-1}$,
we introduce an empirical measure, and its large sample limit, on $\R^d$, defined as
\begin{align}
	\rho_{L,M}(d\br) &:= \frac{1}{ (N-1)NLM}\sum_{l=0}^{L-1}\sum_{m=1}^{M} \sum_{1\leq i\neq j\leq N} \delta_{\br_{ij}^{m}(t_l)}(d\br), \label{Def:empirical} \\
	 \rho_L(d\br) &:=\frac{1}{(N-1)N L}\sum_{l=0}^{L-1} \sum_{1\leq i\neq j\leq{N}} \E[ \delta_{\br_{ij}(t_l)}(d\br)] 
	 	 \label{Def:exploration}\,,
\end{align} 
where $\sum_{1\leq i\neq j\leq{N}}$ stands for $\sum_{i=1}^N \sum_{j=1,j\neq i}^{N}$. 
\end{definition}
The empirical measure depends on the sample trajectories, but $\rho_L$ is the large sample limit, uniquely determined by the distribution of the stochastic process  $\bX_{t_0:t_{L-1}}$, and hence data-independent.

\subsection{Two coercivity conditions}\label{sec:CCs}
The joint inference of the network $\ba$ and the kernel $\Phi$ suffer non-identifiability in general, as shown in Example \ref{example:non-identifiable}. 
We introduce two types of coercivity conditions to ensure the identifiability of the system and the well-conditionedness of the regression matrices in ALS and ORALS when the basis functions are orthonormal.  
The first one is a joint type, including two conditions that we call rank-1 and rank-2 joint coercivity, which guarantee that the bilinear forms defined by the loss function in terms of either the kernel or the weight matrix are coercive (recall that a bilinear function $f(x,y)$ is coercive in a Hilbert space $\calH$ if $f(x,x)\geq c \|x\|_{\calH}^2$ for any $x\in\calH$, see \cite{LaxFunc}). 
  
\begin{definition}[Joint coercivity conditions] \label{def:jointCC} 
	The system \eqref{eq:ips_K} is said to satisfy a \emph{rank-1 joint coercivity condition} on a hypothesis function space $\calH\subset L^2(\rho_L)$ with constant $c_{\calH}>0$ if for all $\IK\in\calH$ and all $\ba\in \mathcal{M}$,
	\begin{align}\label{eq:jointCC}
		\frac{1}{L}\sum_{l=0}^{L-1} \E \bigg[\bigg|  \sum_{j\neq i} \ba_{ij} \IK(\mathbf{r}_{ij}(t_l))\bigg|^2 \bigg]\
		&\geq  c_{\calH} |\ba_{i\cdot}|^2  \|\IK\|_{\rho_L}^2 \,, \quad \forall ~ i\in[N] \,. 
	\end{align}
	We say system \eqref{eq:ips_K} satisfies a \emph{rank-2 joint coercivity condition} on $\calH$ if there exists a constant $c_{\calH}>0$ such that for all $\IK_1,\IK_2\in\calH$ with $\langle\IK_1,\IK_2\rangle_{L^2(\rho_L)}=0$, all $\ba^{(1)},\ba^{(2)}\in \mathcal{M}$,  and all $i\in[N]$,
	\begin{equation}\label{eq:jointCC2}
		\begin{aligned}
			\frac{1}{L}\sum_{l=0}^{L-1} &\E \bigg[\bigg|  \sum_{j\neq i} [\ba_{ij}^{(1)} \IK_1(\mathbf{r}_{ij}(t_l))+\ba_{ij}^{(2)} \IK_2(\mathbf{r}_{ij}(t_l))]\bigg|^2 \bigg] \geq c_{\calH} \left[|\ba_{i\cdot}^{(1)}|^2  \|\IK_1\|_{\rho_L}^2 +|\ba_{i\cdot}^{(2)}|^2  \|\IK_2\|_{\rho_L}^2 \right]\,.
		\end{aligned}
	\end{equation}
\end{definition}
Note that \eqref{eq:jointCC2}, by taking $\IK_2=0$, implies \eqref{eq:jointCC}.  
	 
The rank-1 joint coercivity condition \eqref{eq:jointCC} ensures that the regression matrices in any iteration of ALS are invertible with the smallest singular values bounded from below; see Proposition \ref{prop:Abar_orals}. However, it does not guarantee identifiability. The stronger rank-2 joint coercivity condition \eqref{eq:jointCC2}  does provide a sufficient condition for identifiability. The rank-2 joint coercivity condition is sufficient for identifiability (the proof is postponed to Appendix \ref{Append:A}):

\begin{proposition}[Rank-2 Joint coercivity implies identifiability]\label{Prop:r2JCC_Ident}
	Let the true parameters be $\ba^{*}\in \mathcal{M}$ and $\IK_* \in \calH\backslash \{0\}\subset L^2_\rho$. Assume the \emph{rank-2 joint coercivity condition} holds with $c_{\calH}>0$. Then, we have identifiability, i.e. $(\ba^{*},\IK_*)$  is the unique solution to  $\calE_{L, \infty}(\ba,\IK)=0$. 
\end{proposition}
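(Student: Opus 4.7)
}
The plan is to argue directly from the zero-loss condition, reduce it to a single equation per vertex, and then split that equation into pieces aligned with $\IK_*$ and orthogonal to $\IK_*$ so that the rank-2 joint coercivity applies.

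\textbf{Step 1 (per-vertex reduction).} Since $\calE_{L,\infty}(\ba,\IK)$ is a sum of $N$ non-negative terms, $\calE_{L,\infty}(\ba,\IK)=0$ forces, for every $i\in[N]$,
\begin{equation*}
\frac{1}{L}\sum_{l=0}^{L-1}\E\Bigl[\Bigl|\sum_{j\neq i}\bigl[\ba_{ij}\IK(\br_{ij}(t_l))-\ba_{ij}^{*}\IK_*(\br_{ij}(t_l))\bigr]\Bigr|^{2}\Bigr]=0.
\end{equation*}

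\textbf{Step 2 (orthogonal decomposition in $L^{2}(\rho_L)$).} Because $\IK_*\in\calH\setminus\{0\}$ and $\calH$ is a linear subspace of $L^{2}(\rho_L)$, write the unique decomposition $\IK=\alpha\,\IK_*+\IK^{\perp}$ with $\alpha=\langle\IK,\IK_*\rangle_{\rho_L}/\|\IK_*\|_{\rho_L}^{2}\in\R$ and $\IK^{\perp}\in\calH$, $\langle\IK^{\perp},\IK_*\rangle_{\rho_L}=0$. Substituting into Step 1 yields, for each $i$,
\begin{equation*}
\frac{1}{L}\sum_{l=0}^{L-1}\E\Bigl[\Bigl|\sum_{j\neq i}(\alpha\,\ba_{ij}-\ba_{ij}^{*})\,\IK_*(\br_{ij}(t_l))+\sum_{j\neq i}\ba_{ij}\,\IK^{\perp}(\br_{ij}(t_l))\Bigr|^{2}\Bigr]=0.
\end{equation*}

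\textbf{Step 3 (apply rank-2 joint coercivity).} The pair $(\IK_1,\IK_2)=(\IK_*,\IK^{\perp})$ is orthogonal in $L^{2}(\rho_L)$ by construction. Taking $\ba^{(1)}_{i\cdot}=\alpha\,\ba_{i\cdot}-\ba^{*}_{i\cdot}$ and $\ba^{(2)}_{i\cdot}=\ba_{i\cdot}$ and invoking the bilinear/quadratic-form extension of the rank-2 joint coercivity beyond $\mathcal{M}$ (see the obstacle noted below), the vanishing LHS gives
\begin{equation*}
0\;\ge\;c_{\calH}\Bigl[\,|\alpha\,\ba_{i\cdot}-\ba^{*}_{i\cdot}|^{2}\,\|\IK_*\|_{\rho_L}^{2}+|\ba_{i\cdot}|^{2}\,\|\IK^{\perp}\|_{\rho_L}^{2}\Bigr]\qquad\forall\,i\in[N].
\end{equation*}
Since $\|\IK_*\|_{\rho_L}>0$ and $|\ba_{i\cdot}|=1$, both summands vanish, giving $\IK^{\perp}=0$ in $L^{2}(\rho_L)$ and $\alpha\,\ba_{i\cdot}=\ba^{*}_{i\cdot}$ for every $i$.

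\textbf{Step 4 (normalize and collect).} From $\alpha\,\ba_{i\cdot}=\ba^{*}_{i\cdot}$ with $\ba,\ba^{*}\in\mathcal{M}$, both sides have unit $\ell^{2}$-norm and non-negative entries, so $\alpha=1$ and $\ba_{i\cdot}=\ba^{*}_{i\cdot}$ for every $i$. Hence $\ba=\ba^{*}$ and $\IK=\alpha\IK_*+\IK^{\perp}=\IK_*$ in $L^{2}(\rho_L)$, establishing uniqueness.

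\textbf{Main obstacle.} The delicate step is Step 3: the rank-2 joint coercivity is stated for $\ba^{(1)},\ba^{(2)}\in\mathcal{M}$, i.e.\ non-negative unit vectors, whereas $\alpha\,\ba_{i\cdot}-\ba^{*}_{i\cdot}$ can have signed entries. Extending the inequality to such vectors is not automatic, since coercivity on a cone is generally weaker than on the ambient space. I would handle this either by interpreting the rank-2 condition as a quadratic form bound on $\R^{N-1}$ (natural in view of the matrix-sensing/RIP analogy mentioned in the paper) and using $2$-homogeneity to reach every direction, or by splitting $\alpha\,\ba_{i\cdot}-\ba^{*}_{i\cdot}=s^{+}-s^{-}$ into its positive/negative parts, rewriting the identity as $\sum_{j}s^{+}_{ij}\IK_*+\sum_{j}\ba_{ij}\IK^{\perp}=\sum_{j}s^{-}_{ij}\IK_*$, and combining rank-1 and rank-2 coercivity estimates on each side to cancel the signed contribution. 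Making this extension quantitative while preserving the constant $c_{\calH}$ is the technically sensitive part of the argument.
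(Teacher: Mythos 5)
Your proof is essentially identical to the paper's: both decompose $\IK=\alpha\,\IK_*+\IK^{\perp}$ with $\alpha=p_\IK=\langle\IK,\IK_*\rangle_{\rho_L}/\|\IK_*\|_{\rho_L}^2$, rewrite the integrand of $\calE_{L,\infty}^{(i)}$ as the sum $\sum_{j\neq i}(\ba^*_{ij}-p_\IK\ba_{ij})\IK_*(\br_{ij})+\sum_{j\neq i}\ba_{ij}\,(p_\IK\IK_*-\IK)(\br_{ij})$ (yours differs only by a global sign), apply the rank-2 joint coercivity to the orthogonal pair $(\IK_*,\IK^{\perp})$, and then use $\ba,\ba^*\in\mathcal{M}$ to force $p_\IK=1$, $\ba=\ba^*$, and $\IK=\IK_*$.

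The obstacle you flag is real, but it is present in the paper's own proof in exactly the same form: the paper also substitutes $\ba^{(1)}_{i\cdot}=\ba^*_{i\cdot}-p_\IK\ba_{i\cdot}$, which is generically neither nonnegative nor of unit norm, so it is not in $\mathcal{M}$ as required by the literal wording of Definition~\ref{def:jointCC}. The resolution the paper implicitly adopts is your first suggestion: the appearance of the factors $|\ba^{(r)}_{i\cdot}|^2$ on the right-hand side of \eqref{eq:jointCC2} (which would be redundant for $\ba^{(r)}\in\mathcal{M}$) signals that the inequality is meant as a $2$-homogeneous quadratic-form bound valid for arbitrary row vectors with zero diagonal, not merely for the nonnegative unit sphere. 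Your sign-splitting alternative is unnecessary once one reads the coercivity this way, and it would in any case be delicate because cone-coercivity of a quadratic form is genuinely weaker than coercivity on the ambient space.
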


The joint coercivity conditions may be viewed as extensions of the Restricted Isometry Property (RIP) in matrix sensing  \cite{RFP2010}  to our setting of joint estimation, with the coercivity constant corresponding to the lower bound in the RIP conditions. 
Important differences are that (a) our conditions take into account the randomness in the initial conditions and in the forcing term, in the large sample limit (expectation) sense, while the RIP conditions in matrix sensing, in the large sample size limit, are automatically satisfied as the measurement operator converges to an identity; (b) coercivity is independent of basis selection, unlike RIP which is dependent on a choice of basis (see e.g., \cite{Liu_NIPS2011}).
Furthermore, several works, including \cite{BahmaniRomberg17a, GJZ2017, LeeStoger2022, Chandrasekher2022alternating}, show that a small enough RIP constant (which would correspond to a large coercivity constant $c_{\calH}$ in our setting)  implies that suitable optimization algorithms attain the global minimizer. 
This, however, is often not the case in our setting, as discussed in Appendix \ref{Appen_RIP}, and local minima may exist in our problem (see Figure \ref{Fig:ErrFuns}).

We now introduce the \emph{interaction kernel coercivity condition}, which guarantees identifiability and well-posedness, as well as the invertibility of the regression matrix in ORALS with high probability for large sample size. 
It therefore ensures the uniqueness of the minimizer of the loss function and the identifiability of both the weight matrix and the kernel, since the second stage in ORALS is similar to a rank-1 factorization of a matrix, which always has a(n essentially) unique solution.

\begin{definition}[Interaction kernel coercivity condition]\label{def:kernelCC}
The system \eqref{eq:ips_K} satisfies the interaction kernel coercivity condition in a hypothesis function space $\mathcal H\subseteq L^2(\rho_L)$ with a constant $c_{0,\calH}\in (0,1)$, if for any $\IK\in\calH$ and $i\in[N]$
\begin{equation}\label{eq:kernelCC}
\frac{1}{L(N-1)}\sum_{l=0}^{L-1}\sum_{j\neq i} \E[\trCov( \IK(\mathbf{r}_{ij}(t_l)) \mid \calF_l^i)]\geq c_{0, \calH} \|\IK\|_{\rho_L}^2,   
\end{equation}
where $\calF_l^i := \calF(\bX_{t_{l-1}}, X_{t_l}^i)$ is the $\sigma$-algebra generated by $(\bX_{t_{l-1}}, X_{t_l}^i)$. Here $\trCov( \IK(\mathbf{r}_{ij}(t_l)) \mid \calF_l^i)$ is the trace of the covariance matrix of the $\R^d$-valued random variable $\IK(\br_{ij}(t_l))$ conditional on $\calF_l^i$. 
\end{definition}
Condition \eqref{eq:kernelCC} is inspired by the well-known De Finetti theorem (see \cite[Theorem 1.1]{Kallenberg2005}), which shows that an exchangeable infinite sequence of random variables is conditionally independent relative to some latent variable. 
This condition holds, for example, when $L=1$ and the components $\{X^i\}_{i=1}^N$ are independent, because $\br_{ij}=X^j-X^i$ and $\br_{ij'}=X^{j'}-X^i$ are pairwise independent conditioned on $X^i$; see \cite[Section 2]{WangSeroussiLu2023} for a discussion in the case of radial interaction kernels.  
The interaction kernel coercivity condition implies the joint coercivity conditions; see Proposition \ref{prop:Kernelcc-Jcc}. 
Also, we can prove that the interaction kernel coercivity condition holds for radial kernels, with initial conditions having certain Gaussian distributions and $L=1$ (see Proposition {\rm\ref{prop:kernelCC_example}}).
The rank-1 joint coercivity condition can also be viewed an extension of the classical coercivity condition in \cite{BFHM:LearningInteractionRulesI} and {\rm \cite[Definition 1.2]{LLMTZ21}}, which was introduced for homogeneous systems (with $\mathbf{a}\equiv1$ except for $0$'s on the diagonal) with radial interaction kernel, i.e., $\IK(x)= \tilde\IK(|x|)\frac{x}{|x|}$.
For homogeneous systems, 
we present a detailed discussion on the relation between these conditions in Section {\rm \ref{sec:CC-relation}}.

\subsection{Coercivity and the smallest singular values of regression matrices}\label{sec:CC-eign}
We show that coercivity conditions imply that the normal matrices in ORALS and ALS are nonsingular, with their eigenvalues bounded from below by a positive constant, with high probability.  We consider hypothesis spaces satisfying the following conditions. 
\begin{assumption}[Uniformly bounded basis functions]\label{assum:CC}
The basis functions of the hypothesis space $\mathcal H=\mathrm{span} \{ \psi_1,\cdots,\psi_{\p} \}$ are orthonormal in $L^2(\rho_L)$ and uniformly bounded, i.e., $\sup_{k\in[\p]} \|\psi_k\|_\infty$ $ \leq L_{\mH}$.
\end{assumption}

he next proposition shows that the smallest singular values of the regression matrices in ORALS and ALS are bounded from below by the coercivity constants, with high probability. 
We defer its proof to Appendix \ref{sec:CC_min_eig}.
\begin{proposition}\label{prop:Abar_orals} 
Assume $\{\psi_k\}_{k\in[p]}$ satisfies Assumption {\rm\ref{assum:CC}} and $\calH=\mathrm{span} \{\psi_k\}_{k\in[p]}$. Then:
\begin{itemize}
\item[(i)] under the kernel coercivity condition \eqref{eq:kernelCC}, the matrix in the Operator Regressions stage of ORALS is well-conditioned: for each $i\in [N]$, the matrix $\mathcal{A}_{i,M}$ in \eqref{eq:Ai_operator} satisfies $\frac{1}{M} \sigma^2_{\min}(\E[\mathcal{A}_{i,M}]) $ $> c_{\mH}$; moreover, for $\epsilon>0$ and any $M$,
\begin{align}\label{Ineq:Conc_calA}
	\P\bigg\{ \frac{1}{M}\sigma^2_{\min}(\mathcal{A}_{i,M})> c_{\mH}-\epsilon \bigg\}\geq 
	1-2{\p}N\exp\bigg( -\frac{M\epsilon^2/2}{2(\p N L_{\mH}^2)^2+\p NL_{\mH}^2\varepsilon/3}\bigg)\,;
\end{align}
\item[(ii)] under the rank-1 joint coercivity condition \eqref{eq:jointCC}, the matrices in the least squares problems in the ALS algorithm are well-conditioned:
\begin{itemize}
 \item[(a)] in the estimation of $\ba_i$ with a given nonzero $c\in \R^{\p}$, we have that  $\frac{1}{M}\sigma_{\min}^2(\E[\mathcal{A}^{\textrm{ALS}}_{c,M,i}])\ge c_{\mH}||c||^2$ for each $i\in[N]$ and the matrix in \eqref{eq:ALS_graphest} is well-conditioned. Moreover, for any $M$ and $\epsilon>0$,
 \begin{equation}
 \label{e:Conc_AALS1}
	\P\bigg\{ \frac{1}{M} \sigma^2_{\min}(\mathcal{A}^{\textrm{ALS}}_{c,M,i}) \geq c_{\mH} \| c\|^2-\epsilon \bigg\}\ge 1-2N\exp\rbracket{-\frac{M\varepsilon^2/2}{(\p L_{\mH}^2)^2+pL_{\mH}^2\varepsilon/3}}\,;
 \end{equation}
\item[(b)] in the estimation of $c\in \R^{\p}$ with a given $\ba$ with $\|\ba_i\|=1$, we have that $\frac{1}{M}\sigma_{\min}^2(\E[\mathcal{A}^{\textrm{ALS}}_{\ba,M,i}])\ge c_{\mH}$ for each $i\in[N]$ and the matrix in \eqref{eq:ALS_kernelest} is well-conditioned.  Moreover, for any $M$ and $\epsilon>0$,
\begin{equation}
\label{e:Conc_AALS2}
	\P\bigg\{\frac{1}{M} \sigma^2_{\min}(\mathcal{A}^{\textrm{ALS}}_{\ba,M,i})\geq c_{\mH}-\epsilon \bigg\}\ge1-2{\p} \exp\rbracket{-\frac{M\varepsilon^2/2}{(N L_{\mH}^2)^2+NL_{\mH}^2\varepsilon/3}}\,.
\end{equation}
\end{itemize}
\end{itemize}
\end{proposition}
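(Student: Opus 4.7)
The plan is to treat the three statements in a unified way. I write each relevant normal matrix $\frac{1}{M}\mathcal{A}^\top\mathcal{A}$ as an empirical mean $\frac{1}{M}\sum_{m=1}^{M}Y_m$ of $M$ i.i.d.\ positive semidefinite per-trajectory Gram blocks $Y_m$, lower bound its expectation via the appropriate coercivity condition, and finally apply the matrix Bernstein inequality (combined with Weyl's inequality) to turn a deterministic bound on $\lambda_{\min}(\E[Y_m])$ into a high-probability lower bound for $\lambda_{\min}(\frac{1}{M}\sum_m Y_m)$.

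For part (i), I parametrize a test vector $v\in\R^{(N-1)\p}$ as $v_{(j,k)}$ and set $\tilde\phi_j(\br):=\sum_{k=1}^\p v_{(j,k)}\psi_k(\br)$, so that by Assumption \ref{assum:CC} we have $\|v\|^2=\sum_{j\ne i}\|\tilde\phi_j\|_{\rho_L}^2$. A direct expansion yields
\[
\tfrac{1}{M}\,\E\bigl[\|\mathcal{A}_{i,M} v\|^2\bigr]\;=\;\sum_{l=0}^{L-1}\E\Bigl[\bigl|\textstyle\sum_{j\ne i}\tilde\phi_j(\br_{ij}(t_l))\bigr|^2\Bigr].
\]
Conditioning on $\calF_l^i=\sigma(\bX_{t_{l-1}},X_{t_l}^i)$, the variables $\{\br_{ij}(t_l)\}_{j\ne i}$ are conditionally independent (the only remaining randomness is the Brownian increment on $[t_{l-1},t_l]$ per $j$, which is independent across $j\ne i$). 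The conditional variance identity then lower bounds the right hand side by $\sum_{l,j\ne i}\E[\trCov(\tilde\phi_j(\br_{ij}(t_l))\mid\calF_l^i)]$. Invoking the kernel coercivity condition \eqref{eq:kernelCC} separately with $\IK=\tilde\phi_j$ for each $j$, and using exchangeability of the agents $j\ne i$ under $\mu^{\otimes N}$ to match a single-$j$ summand to $\frac{1}{N-1}$ times the sum appearing in \eqref{eq:kernelCC}, produces the bound $\ge c_{\mH}\sum_j\|\tilde\phi_j\|_{\rho_L}^2=c_{\mH}\|v\|^2$. For parts (ii)(a) and (ii)(b), analogous expansions yield quadratic forms to which the rank-1 joint coercivity condition \eqref{eq:jointCC} applies directly: with $\IK=\sum_k c_k\psi_k$ in case (ii)(a) and $\IK_v=\sum_k v_k\psi_k$ in case (ii)(b), the bounds become $c_{\mH}\|c\|^2\|b\|^2$ and $c_{\mH}\|v\|^2$ respectively, again thanks to the $L^2(\rho_L)$-orthonormality of the $\psi_k$'s together with $|\ba_{i\cdot}|^2=1$.

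For the concentration step, I bound the operator norm of each summand using $\sup_k\|\psi_k\|_\infty\le L_{\mH}$: a dimension count gives $\|Y_m\|\le \p NL_{\mH}^2$ in the ORALS setting, $\|Y_m\|\le \p L_{\mH}^2$ in case (ii)(a), and $\|Y_m\|\le NL_{\mH}^2$ in case (ii)(b). Matrix Bernstein for sums of bounded i.i.d.\ self-adjoint matrices then yields the failure-probability estimates with matrix-variance proxy of order $M\|Y_m\|^2$ and dimensional prefactors $2\p N$, $2N$, $2\p$ equal to twice the ambient dimensions of $Y_m$, exactly as stated in \eqref{Ineq:Conc_calA}--\eqref{e:Conc_AALS2}. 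Weyl's inequality transfers this concentration of $\frac{1}{M}\sum_m Y_m$ around $\E[Y_m]$ to concentration of the smallest eigenvalue, producing the stated one-sided bounds with $\epsilon$.

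The main obstacle is the conditional-independence step in part (i). Verifying that $\{\br_{ij}(t_l)\}_{j\ne i}$ are conditionally independent given $\calF_l^i$ requires unwinding the SDE \eqref{eq:ips_K} one time step back, separating the deterministic drift (which is $\calF_l^i$-measurable up to the required one-step lookahead on each $X^j$) from the independent Brownian increments; and the de Finetti-type move that identifies a single $j$-summand with the average over $j'\ne i$ appearing in \eqref{eq:kernelCC} is what allows a condition stated for a \emph{single} kernel $\IK$ to yield a lower bound for a quadratic form featuring \emph{distinct} test functions $\tilde\phi_j$ across $j\ne i$, which is the point flagged after Definition \ref{def:kernelCC}.
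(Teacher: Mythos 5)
Your proposal follows essentially the same route as the paper's proof: for each of the three normal matrices, parametrize a unit test vector via basis functions (the paper's $f_j^u$, your $\tilde\phi_j$), lower bound the expected quadratic form by the relevant coercivity condition (Lemma \ref{Lem:Var_lem} in the paper supplies the conditional-variance drop, matching your "conditional variance identity"), and transfer this to $\lambda_{\min}$ of the empirical normal matrix via matrix Bernstein plus Weyl. The only substantive difference is that you make explicit the exchangeability/de Finetti step needed to apply \eqref{eq:kernelCC} to each single $j$-summand with a distinct test function $\tilde\phi_j$, which the paper leaves implicit in the step preceding the invocation of \eqref{eq:kernelCC}; that is a useful clarification, not a deviation in method.
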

Note that already in this result, the bounds \eqref{e:Conc_AALS1} and \eqref{e:Conc_AALS2} for ALS only require $M\gtrsim (N^2+p^2)(\log N+\log p)$ to the lower bound to be positive (where $p^2$ may be replaced by $p$ with more refined arguments, such as the PAC-Bayes argument applied in the proof of \cite[Lemma 3.12]{WangSeroussiLu2023}). In contrast, the bound \eqref{Ineq:Conc_calA} for ORALS requires $M\gtrsim (pN)^2\log(pN)$, in line with our discussion of the expected sample size requirements of ORALS and ALS.

\subsection{Convergence and asymptotic normality of the ORALS estimator}
\label{sec:Conv}
Convergence of the ORALS estimator follows from the kernel coercivity condition. We will prove that the estimator is consistent (i.e., it converges almost surely to the true parameter) and is asymptotically normal. 
Here, for simplicity, we consider the case when the data are generated by an Euler-Maruyama discretization of the SDE \eqref{eq:ipog}. The case of discrete-time data from continuous paths can be treated by careful examinations of the stochastic integrals and their numerical approximations, using arguments similar to those in \cite{LMT21}.  

\begin{theorem}\label{thm:AN_orals}
Assume that the kernel coercivity condition \eqref{eq:kernelCC} holds on $\calH:=\mathrm{span} \{\psi_k\}_{k\in[p]}$ with $\{\psi_k\}_{k\in[p]}$ satisfies Assumption {\rm\ref{assum:CC}}, and that the data \eqref{eq:data} is generated by the Euler-Maruyama scheme 
\begin{equation}\label{eq:Euler}
 	\Delta\bX_{t_l}:=\bX_{t_{l+1}}- \bX_{t_{l}}  = \ba_* \bB(\bX_{t_l}) c_* \Delta t+\sigma \sqrt{\Delta t}\bW_{l}, 
\end{equation}
where  $\ba_*$ and $c_*$ are the true parameters, $\{\bW_{l}\}_l$ are independent, with distribution $\calN(0,I_{{Nd}})$, and $\|(\ba_{*})_{i}\|_2=1$ for each $i\in[N]$.   
Then we have:
\begin{enumerate}
	\item[(i)] The estimator $\widehat{z}_{i,M}$ in \eqref{eq:est_matZ} is asymptotically normal for each $i$. More precisely, $\widehat{z}_{i,M}  = z_{i} + {\xi}_{i,M} $, where $z_{i}={\rm Vec}(\bZ_i)$, with $\bZ_i= (\ba_{*})_{i}^\top c_*^\top$, and ${\xi}_{i,M}$ is a centered $\R^{(N-1){\p}}$-valued random vector s.t.
	{$\sqrt{M}{\xi}_{i,M} \xrightarrow[]{d} \overline{{\xi}}_{i,\infty} \sim \calN(0,\sigma^2 \Delta t \overline{\mathcal{A}}_{i,\infty}^{-1} )$}.

	\item[(ii)] Starting from any $c_0\in\R^{\p}$ such that $c_*^\top c_0 \neq 0$, the first iteration $\widehat c^{M,1}$ and second iteration estimator $\widehat \ba^{M,2}$ for the deterministic ALS in \eqref{eq:est_Ac} are consistent up to a change of sign and are asymptotically normal:
\begin{align*}
	\sqrt{M}[\widehat c^{M,1} - \emph{sgn}(c_*^\top c_0) c_*] & \overset{d}{\to} \frac 1N\sum_{i=1}^N \pmb{\xi}_i^\top (\ba_{*})_{i}^\top \,,\\
	\sqrt{M}[(\widehat\ba_i^{M,2})^\top - \emph{sgn}(c_*^\top c_0) (\ba_*)_i^\top] & \overset{d}{\to}  |c_*|^{-2} [\pmb{\xi}_i c_*-(\ba_{*})_{i} \pmb{\xi}_i c_* (\ba_{*})_{i}^\top], 
\end{align*} 
where the random matrix $\pmb{\xi}_i\in \R^{(N-1)\times\p}$ is the vectorized form of the Gaussian vector $\overline{{\xi}}_{i,\infty} $ in (i), i.e., $\overline{{\xi}}_{i,\infty} =\rm{Vec}(\pmb{\xi}_i)$.
\end{enumerate}
\end{theorem}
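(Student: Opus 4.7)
The plan is to prove (i) as standard linear regression with exactly Gaussian innovations, and then derive (ii) by propagating the resulting Gaussian limit through the two smooth ALS updates via Taylor expansions and Slutsky. For (i), I would rewrite \eqref{eq:Euler} agent by agent as $[(\Delta\bX^m_{t_l})_i]_{l,m}=\mathcal A_{i,M}z_i+\sigma\sqrt{\Delta t}\,\bW_i$, where $\bW_i\in\R^{dML}$ stacks the i.i.d.\ $\calN(0,I_d)$ increments across $(l,m)$ and is independent of the design $\mathcal A_{i,M}$. The least squares closed form gives
$$\sqrt M(\widehat z_{i,M}-z_i)=\bigl(\tfrac1M\mathcal A_{i,M}^\top\mathcal A_{i,M}\bigr)^{-1}\tfrac1{\sqrt M}\mathcal A_{i,M}^\top\sigma\sqrt{\Delta t}\,\bW_i.$$
Since the $M$ trajectories are i.i.d., the strong law yields $\tfrac1M\mathcal A_{i,M}^\top\mathcal A_{i,M}\to\overline{\mathcal A}_{i,\infty}$ almost surely, with invertibility supplied by the kernel coercivity condition \eqref{eq:kernelCC} and Proposition \ref{prop:Abar_orals}(i). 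Because $\bW_i$ is independent of the design and exactly Gaussian, $\tfrac1{\sqrt M}\mathcal A_{i,M}^\top\sigma\sqrt{\Delta t}\,\bW_i$ is conditionally $\calN(0,\sigma^2\Delta t\cdot\tfrac1M\mathcal A_{i,M}^\top\mathcal A_{i,M})$, and Slutsky produces the claimed limit $\calN(0,\sigma^2\Delta t\,\overline{\mathcal A}_{i,\infty}^{-1})$, identifying $\overline\xi_{i,\infty}$.

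For (ii), I would reshape $\widehat z_{i,M}-z_i$ into a matrix $\pmb\delta_i\in\R^{(N-1)\times\p}$ so $\sqrt M\pmb\delta_i\overset{d}{\to}\pmb\xi_i$, and write $\widehat\bZ_{i,M}=(\ba_*)_i^\top c_*^\top+\pmb\delta_i$. Setting $\alpha:=c_*^\top c_0\neq 0$, Taylor-expanding the first ALS sub-step $\widehat\ba_i^{M,1,\top}=\widehat\bZ_{i,M}c_0/\|\widehat\bZ_{i,M}c_0\|$ about $\pmb\delta_i=0$ (using $|(\ba_*)_i|=1$) yields
$$\widehat\ba_i^{M,1,\top}=\emph{sgn}(\alpha)(\ba_*)_i^\top+|\alpha|^{-1}\bigl[I-(\ba_*)_i^\top(\ba_*)_i\bigr]\pmb\delta_i c_0+O_\P(1/M).$$
Substituting into the closed form $\widehat c^{M,1}=\tfrac1N\sum_i\widehat\bZ_{i,M}^\top\widehat\ba_i^{M,1,\top}$ obtained from the normal equations of \eqref{eq:est_Ac} with $|\ba_i|=1$, and using that the tangential correction is annihilated by $(\ba_*)_i$ so $(\ba_*)_i\widehat\ba_i^{M,1,\top}=\emph{sgn}(\alpha)+O_\P(1/M)$, I obtain
$$\sqrt M\bigl[\widehat c^{M,1}-\emph{sgn}(\alpha)c_*\bigr]=\tfrac{\emph{sgn}(\alpha)}{N}\sum_{i=1}^N\sqrt M\,\pmb\delta_i^\top(\ba_*)_i^\top+O_\P(1/\sqrt M).$$
Letting $M\to\infty$ and absorbing the sign via the centered-Gaussian symmetry of the joint law of $(\pmb\xi_1,\ldots,\pmb\xi_N)$ delivers the first claimed limit. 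Repeating the same normalization expansion for the second $\ba$-update with $c_0$ replaced by $\widehat c^{M,1}$---which satisfies $c_*^\top\widehat c^{M,1}=\emph{sgn}(\alpha)|c_*|^2+O_\P(1/\sqrt M)$, so its sign equals $\emph{sgn}(\alpha)$ with probability tending to one and $|c_*^\top\widehat c^{M,1}|\to|c_*|^2$---produces the asymptotic law for $\widehat\ba_i^{M,2}$ stated in the theorem, after identifying the projection $[I-(\ba_*)_i^\top(\ba_*)_i]\pmb\xi_i c_*$ with $\pmb\xi_i c_*-(\ba_*)_i\pmb\xi_i c_*\,(\ba_*)_i^\top$.

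The main obstacles are: (a) controlling the Taylor remainders uniformly so that the $O_\P(1/M)$ terms still vanish after multiplication by $\sqrt M$, which requires a high-probability lower bound on the normalizers $\|\widehat\bZ_{i,M}v\|$ for $v\in\{c_0,\widehat c^{M,1}\}$ supplied by the concentration inequality \eqref{Ineq:Conc_calA}; (b) handling the nonnegativity constraint on $\ba_i$, which I would argue is asymptotically inactive when $(\ba_*)_i$ lies in the interior of the feasible cone (so the sign-aligned $\widehat\ba_i^{M,k}$ stays nonnegative with probability tending to one), otherwise requiring a local KKT analysis on the active set; and (c) joining the almost-sure LLN for the random design with the conditionally Gaussian law of the noise, which is routine once the block-i.i.d.\ structure across $m$ is exploited.
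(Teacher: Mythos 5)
Your approach for part (ii) is essentially the paper's: both expand the closed-form ALS sub-steps around the true parameters and propagate the $\sqrt M$-scaled fluctuation through the normalization, using that $\|\widehat\ba_i^{M,1}\|=1$ to collapse the Gram factor in the $c$-update and that the tangential component is annihilated by $(\ba_*)_i$. The paper's bookkeeping via $\eta_{i,M}^{(1)},\varepsilon_{i,M}^{(1)}$ plus its Lemma \ref{Lem:AN_lem1} is just the explicit form of your Taylor expansion with remainder control. Your obstacle (b) --- the nonnegativity constraint from $\mathcal{M}$ --- is a real gap the paper also leaves unaddressed: the proof in the paper solves \eqref{eq:est_Ac} as an unconstrained rank-one factorization, and if $\mathrm{sgn}(c_*^\top c_0)=-1$ the sign-flipped $\widehat\ba_i^{M,1}$ is entrywise nonpositive and the nonnegative LS step would not simply reproduce it; the theorem should be read as a statement about the unconstrained ALS iterates, with the constraint being vacuous only once the sign is aligned and every $(\ba_*)_{ij}$ that is zero is handled (your KKT remark is the right corrective, and you are more careful than the paper here).

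Part (i), however, has a genuine flaw. You assert that the stacked noise $\bW_i$ is \emph{independent of the design} $\mathcal A_{i,M}$ and use this to claim that $\tfrac{1}{\sqrt M}\mathcal A_{i,M}^\top\sigma\sqrt{\Delta t}\bW_i$ is conditionally Gaussian with covariance $\sigma^2\Delta t\cdot\tfrac1M\mathcal A_{i,M}^\top\mathcal A_{i,M}$. This is true only when $L=1$. For $L>1$, the increment $\bW_l^m$ enters $\bX_{t_{l+1}}^m$ and hence the basis evaluations $[\mathcal A_i]_{l',m}$ for $l'>l$, so the noise is \emph{not} independent of the design; conditioning on $\mathcal A_{i,M}$ partially determines $\{\bW_l^m\}$, and the conditional law of the score is not the Gaussian you wrote. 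The paper circumvents this by never conditioning on the design: it writes $\widetilde v_{i,M}$ as a sum over $m$ of i.i.d.\ mean-zero terms (the inner sum over $l$ is a martingale transform for each fixed $m$, hence has mean zero despite the within-trajectory dependence) and invokes the ordinary multivariate CLT in $m$, then Slutsky with the a.s.\ convergence $\overline{\mathcal A}_{i,M}^{-1}\to\overline{\mathcal A}_{i,\infty}^{-1}$. You should replace your conditional-Gaussianity step with that CLT-in-$m$ argument (or a martingale CLT) to make the proof valid for general $L$; your version is a correct and more elementary special case only when $L=1$.
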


Convergence of the ALS estimator remains an open question. It involves two layers of challenges: the convergence in the iterations, and the convergence as the sample size increases. The restricted isometry property (RIP)  conditions with sufficiently small RIP constants are typically stronger than the joint coercivity conditions used here, but enable one to construct estimators via provably convergent optimization algorithms from data of small size  \cite{RFP2010,BahmaniRomberg17a, GJZ2017, LeeStoger2022, Chandrasekher2022alternating}.  However, these conditions appear to be seldom satisfied in our setting. 

Figure \ref{Fig:RIP_Coer_relation} shows the relations between the coercivity, RIP conditions, and their consequences.

\tikzset{unode/.style = {
    circle, 
    draw=cyan!30!black, 
    thick,
    fill=cyan!80!black,
    inner sep=2.3pt,
    minimum size=2.3pt }}
    
\tikzset{roundnode/.style={
	circle, 
	draw=black!80, 
	fill=gray!10, 
	very thick, 
	minimum size=7mm}}

\tikzset{squarednode/.style={
	rectangle, 
	draw=black!80, 
	fill=gray!10, 
	very thick, 
	minimum size=5mm}}

\tikzset{squarednode2/.style={
	rectangle, 
	draw=orange!80, 
	fill=yellow!10, 
	very thick, 
	minimum size=5mm}}
	
\tikzset{ellipsednode/.style={
	ellipse, 
	draw=black!80, 
	fill=cyan!20, 
	very thick, 
	minimum size=5mm}}
\tikzset{ellipsednode2/.style={
	ellipse, 
	draw=black!80, 
	fill=orange!20, 
	very thick, 
	minimum size=5mm}}

\begin{figure}[hbt]
\centering
\scalebox{0.7}{ 
\begin{tikzpicture}
\begin{scope}[xshift=2cm,yshift=2.5cm]
    \node[squarednode, align=center] (A) at (0,0){Rank 2-Joint \\ \textbf{coercivity}};
    \node[squarednode2, align=center] (B) at (5,0){Rank 2-RIP \\ condition};
    \node[squarednode2, align=center] (C) at (5,4){Rank 1-RIP \\ condition};
    \node[squarednode, align=center] (D) at (0,4){Rank 1-Joint \\ \textbf{coercivity}}; 
    \node[roundnode, align=center] (E) at (-5,2){Kernel \\ \textbf{coercivity}}; 
    \node[] (F) at (0,2){}; 
    \node[ellipsednode, align=center] (G) at (-5,5.2){ALS  well-conditioned};
    \node[ellipsednode, align=center] (H) at (-5,-1.2){Identifiability};
   \node[ellipsednode, align=center] (I) at (-9,2){\rotatebox{90}{\parbox{4.5cm}{ORALS well-conditioned \\ Asymptotic normality}}};
    
    \draw[ultra thick, black!100!gray, fill=black!70, opacity=0.3, rounded corners] (-1.5,-0.8) --
      (1.5,-0.8) --  (1.5,4.8) --  (-1.5,4.8) -- cycle;  
    \draw[ultra thick, orange!70, fill=orange!70, opacity=0.3, rounded corners] (6.4,-0.8) --
      (3.6,-0.8) --  (3.6,4.8) --  (6.4,4.8) -- cycle; 
      
	    \draw[line width=5pt, shorten <=2pt, shorten >=42pt, -latex] (E) to [auto]node[above] {Prop \ref{prop:Kernelcc-Jcc} $\qquad\qquad$ } (F);
	    \draw[line width=4pt, shorten <=2pt, shorten >=2pt, -latex] (C) to [auto]node[midway,above] {Prop \ref{Prop:RIPtoJCC}} (D);
	    \draw[line width=4pt, shorten <=4pt, shorten >=4pt, -latex] (A.north) to [out=90,in=-90]node[midway,above,sloped] {Def \ref{def:jointCC}} (D.south);
	    \draw[line width=4pt, shorten <=4pt, shorten >=4pt, -latex] (B.north) to [out=90,in=-90] node[midway,above,sloped] {Def \ref{Def:RIP}}(C.south);
	    \draw[line width=4pt, shorten <=2pt, shorten >=2pt, -latex] (B) to [auto]node[midway,below] {Prop \ref{Prop:RIPtoJCC}} (A);
	    \draw[line width=4pt, shorten <=2pt, shorten >=2pt, -latex] (D) to [auto]node[midway,below,sloped] {Prop \ref{prop:Abar_orals}} (G);
	    \draw[line width=4pt, shorten <=2pt, shorten >=2pt, -latex] (A) to [auto]node[midway,above,sloped] {Prop \ref{Prop:r2JCC_Ident}} (H);
	    \draw[line width=5pt, shorten <=2pt, shorten >=2pt, -latex] (E) to [auto]node[above]{Thm \ref{thm:AN_orals}} (I);
	    \draw[line width=5pt, shorten <=4pt, shorten >=4pt, -latex] (E.north) to [auto]node[midway,above,sloped] {Prop \ref{prop:Abar_orals}} (G.south);
	    \draw[line width=5pt, shorten <=4pt, shorten >=4pt, -latex] (E.south) to [auto]node[midway,below,sloped] {Prop \ref{prop:Kernelcc-Jcc}} (H.north);
\end{scope} 
\end{tikzpicture}
}
\caption{The coercivity conditions: connections with RIP conditions, identifiability, and well-conditionedness of ALS and ORALS algorithms. }
\label{Fig:RIP_Coer_relation}
\end{figure}
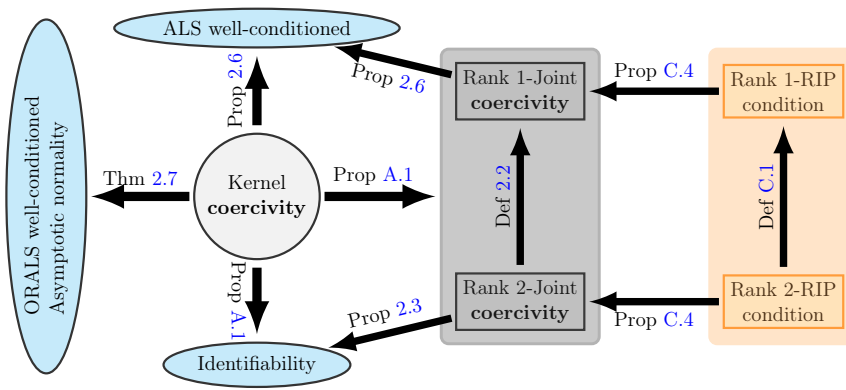

\vspace{-2mm}
\subsection{Trajectory prediction}
In the above, we have studied the accuracy of our estimator in terms of the Frobenius norm on the graph weight matrix and $L^2(\rho_L)$ norm on the interaction kernel. It is also of interest to ask whether the dynamics generated by our estimated system are close to the ones of the true system; in particular, whether we can control the trajectory prediction error by the error of the estimator. The following proposition provides an affirmative answer, similar to the previous results in \cite[Proposition 2.1]{LMT21}, at least for short times: 
\begin{proposition}[Trajectory prediction error]
\label{p:trajerrbounds}
	Let $(\widehat\ba,\widehat c)$ be an estimator of $(\ba, c)$ in the system \eqref{eq:ipog}, where $\widehat\ba$ and $\ba$ are row-normalized. 
	Assume that the basis functions $\{\psi_k\}_{k=1}^{\p}$ are in $\{\psi\in C_b^1(\R^d): \|\psi\|_\infty+\|\nabla \psi\|_\infty\leq C_0 \}$, for some $C_0>0$. Denote by $(\widehat \bX_t)_{0\leq t\leq T}$ and $(\bX_t)_{0\leq t\leq T}$ the solutions to the systems $\dynsys {\widehat\ba}{\widehat c}$ and $\dynsys {\mathbf{a}}{c}$, associated to $(\widehat \ba,\widehat c)$ and $(\ba, c)$ respectively, started from the same initial condition sampled from $\mu$, and driven by the same realization of the stochastic force $\bW_t$. 
	Then,
\begin{equation}\label{Ineq:Traj_err}
	\sup_{t\in[0,T]} \E\left[\|\widehat{\bX}_t-\bX_t \|_F^2\right]\leq C_1T^2 e^{2C_1 C_2 T} \left( C_2 \|\ba-\widehat\ba\|_F^2+ \|\widehat c-c\|_2^2 \right) \,,
\end{equation}
with $C_1 := 2{\p}C_0^2 $ and $C_2 := \|\widehat c\|_2^2+ \| c\|_2^2$.
\end{proposition}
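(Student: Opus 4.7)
\textbf{Proof plan for Proposition \ref{p:trajerrbounds}.} The plan is to derive a differential/integral inequality for $E_t := \E[\|\widehat \bX_t - \bX_t\|_F^2]$ of the Grönwall type $E_t \le \alpha(t) + \beta \int_0^t E_s\,ds$, then integrate. Since the two systems are driven by the same Brownian motion and share the same initial condition, subtracting their integral equations cancels the noise term and the initial condition and leaves, for each $i\in[N]$,
\begin{equation*}
\widehat X_t^i - X_t^i \;=\; \int_0^t \bigl[\widehat\ba\,\bB(\widehat\bX_s)\,\widehat c - \ba\,\bB(\bX_s)\,c\bigr]_i\,ds\,.
\end{equation*}
Squaring, summing in $i$, applying Cauchy--Schwarz in time and Fubini yields
$E_t \le T \int_0^t \E \bigl\|\widehat\ba\,\bB(\widehat\bX_s)\,\widehat c - \ba\,\bB(\bX_s)\,c\bigr\|_F^2\,ds$.

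The next step is the standard three-term telescoping decomposition of the integrand:
\begin{align*}
\widehat\ba\,\bB(\widehat\bX_s)\,\widehat c - \ba\,\bB(\bX_s)\,c
&= \underbrace{\widehat\ba\bigl[\bB(\widehat\bX_s)-\bB(\bX_s)\bigr]\widehat c}_{\text{(I) kernel Lipschitz in state}} \\
&\quad + \underbrace{(\widehat\ba-\ba)\,\bB(\bX_s)\,\widehat c}_{\text{(II) weight error}}
 + \underbrace{\ba\,\bB(\bX_s)\,(\widehat c - c)}_{\text{(III) coefficient error}}.
\end{align*}
I will bound each term rowwise via Cauchy--Schwarz, using (a) the row-normalization $|\widehat\ba_{i\cdot}|=|\ba_{i\cdot}|=1$, (b) the uniform bound $\|\psi_k\|_\infty \le C_0$ for terms (II), (III), and (c) the Lipschitz bound $|\psi_k(\widehat\br_{ij})-\psi_k(\br_{ij})| \le C_0 \bigl(|\widehat X^i-X^i|+|\widehat X^j-X^j|\bigr)$ for term (I). After squaring and summing over $i$, term (I) will be controlled by $2pC_0^2\,\|\widehat c\|_2^2\,\|\widehat\bX_s-\bX_s\|_F^2$ (up to the combinatorial factor absorbed into $C_1$), term (II) by $2pC_0^2\,\|\widehat c\|_2^2\,\|\widehat\ba-\ba\|_F^2$, and term (III) by $2pC_0^2\,\|\widehat c-c\|_2^2$.

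Assembling the bounds and taking expectations, after using $\|\widehat c\|_2^2 \le C_2$ and the triangle-inequality-style collection into the factor $C_2=\|\widehat c\|_2^2+\|c\|_2^2$, gives an integral inequality of the form
\begin{equation*}
E_t \;\le\; C_1 T^2\bigl(C_2\|\ba-\widehat\ba\|_F^2 + \|\widehat c - c\|_2^2\bigr) \;+\; 2C_1 C_2 \int_0^t E_s\,ds\,,
\end{equation*}
which is exactly in Grönwall form. A direct application of Grönwall's inequality yields \eqref{Ineq:Traj_err}, and the supremum in $t$ is then immediate since the right-hand side is monotone in $t$.

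The main obstacle, and essentially the only place where care is required, is the clean bookkeeping of constants during the rowwise application of Cauchy--Schwarz in term (I): one must use the $\ell^2$-normalization $\sum_j \widehat\ba_{ij}^2 = 1$ together with the orthonormal-basis index bound $\sum_k |\widehat c_k|^2 = \|\widehat c\|_2^2$ while avoiding double-counting when splitting $|\widehat\br_{ij}-\br_{ij}| \le |\widehat X^i - X^i| + |\widehat X^j - X^j|$ and summing across $i$. Terms (II) and (III) are essentially algebraic once the basis bound $\|\psi_k\|_\infty \le C_0$ is invoked, and the Grönwall step at the end is routine.
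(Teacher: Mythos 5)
Your plan follows the same route as the paper: write $\widehat\bX_t-\bX_t$ as a time integral (initial condition and Brownian forcing cancel), apply Cauchy--Schwarz/Jensen in time, decompose the integrand into pieces controlled by $\|\widehat\ba-\ba\|_F$, $\|\widehat c-c\|_2$, and $\|\widehat\bX_s-\bX_s\|_F$ via the row normalization of $\ba,\widehat\ba$ and the uniform $C_0$ bounds on $\psi_k$ and $\nabla\psi_k$, then close with Gr\"onwall. The only difference is cosmetic: you use a symmetric three-term telescope (state error, weight error, coefficient error), whereas the paper first isolates $(\widehat\ba_{ij}-\ba_{ij})\IK(\br_s^{ij})$ and then splits $\widehat\ba_{ij}[\widehat\IK(\widehat\br_s^{ij})-\IK(\br_s^{ij})]$ into a coefficient-error and a state-error part; both routes produce the same integral inequality and the constants $C_1=2{\p}C_0^2$, $C_2=\|\widehat c\|_2^2+\|c\|_2^2$.
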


We remark that while these bounds are useful only for short times, for systems that have stable attractors, or exhibit stable emergent behaviors, one may be cautiously optimistic about the possibility that the attractors, or emergent behaviors, will be reproduced by the system $\dynsys {\widehat\ba}{\widehat c}$, as rather extensively experiments in \cite{DiscoveryEmergentBehaviors} suggest.


\section{Numerical experiments}\label{sec:numExper}
We examine the ALS and ORALS algorithms numerically in terms of the dependence of their accuracy on three key parameters: sample size, level of observation noise, and the strength of the stochastic force. Also, we examine their robustness with respect to possible misspecification of the model (i.e. the hypothesis space does not contain the true interaction kernel). The code implementing these tests is available in \url{https://github.com/LearnDynamics/ips-graph.git}. 

ALS appears to be particularly efficient and robust, both statistically and computationally, as soon the number of observations is comparable (up to constants and possibly logarithmic terms) to the number of unknown parameters $(\ba, c)$; and its estimator converges as sample size increases, although it does not have theoretical guarantees; ORALS performs as well as ALS in the large sample regime, with both estimators converging at the theoretical rate $M^{-1/2}$.

The settings of the systems in our experiments are as follows. There are $N =6$ agents in a relatively sparse network in which each agent is influenced by $|\calN_i|\equiv 2$ other agents, selected uniformly at random. The non-zero off-diagonal entries of the weight matrix are randomly sampled independently from the uniform distribution in $[0,1]$ followed by a row-normalization, i.e., $a_{ij}\in [ 0,1]$, $a_{ii}=0$, and $\sum_{j = 1}^N a_{ij}^2 = 1$ for each $i\in[N]$.
 The state vector $X_t^i$ is in $\R^d$ with $d=2$. The interaction potential is a version of the Lennard-Jones potential $\IK(x) = \phi(|x|)\frac{x}{|x|}$ with a cut-off near $0$: the interaction kernel $\phi$ given by
\begin{equation}\label{eq:LJ1_kernel}
	\phi(x) = \left\{
	\begin{aligned}
		&-\frac{1}{3}x^{-9} + \frac{4}{3}x^{-3}, & x \geq 0.5; \\
		&-160, & 0 \leq x < 0.5. 
	\end{aligned}
	\right.
\end{equation}

We consider a parametric from $\phi = \sum_{k = 1}^{\p} c_k\psi_k$ with basis functions
$$
\{\psi_{1+k} = x^{-9}\mathbbm{1}_{[0.25k+0.5, +\infty]}\}_{k=0}^2\ \cup\ 
\{\psi_{4+k} = x^{-3}\mathbbm{1}_{[0.25k+0.5, +\infty]}\}_{k=0}^2\ \cup\  
\{\psi_{7+k} = \mathbbm{1}_{[0, 0.25k+0.5]}\}_{k=0}^3\,. 
$$
Thus, the true parameters $c^*$ has zero components except for $(c_1^*, c_4^*, c_7^*) = (-1/3, 4/3, -160)$. Note that we do not assume or enforce sparsity in our estimation procedure.

The multi-trajectory synthetic data \eqref{eq:data} are generated by the Euler-Maruyama scheme with $\Delta t = 10^{-4}$, and with initial condition $\bX_{t_1}=(X_{t_1}^i, i = 1,\dots, N)$ sampled component-wise from a initial distribution $\mu_0$. 
The distribution $\mu_0$, stochastic force $\sigma$, the observation noise strength $\sigma_{obs}$, and total time $T$, will be specified in each of the following tests.  The number of iterations in ALS is limited to $10$ in all examples.

We report the following measures of estimation error, called the (relative) {\it graph error}, {\it kernel error}, and {\it trajectory error} respectively:
\begin{align*}
	\varepsilon_{\ba}  = \frac{\norm{\ba_* - \widehat \ba}_F}{\norm{\ba}_F}\quad,  \, \quad
	\varepsilon_K  = \frac{||{\IK - \widehat \IK}||_{L^2_\rho}}{\norm{\IK}_{L^2_\rho}}\quad,  \, \quad
	\varepsilon_{\bX}  = \frac{1}{M'}\sum_{m' = 1}^{M'}\frac{||({\bX^{m'}_t)_t - (\widehat \bX^{m'}_t)_t}||_{L^2(0, T)}}{\norm{(\bX^{m'}_t)_t}_{L^2(0, T)}},
\end{align*}
where $(\bX^{m'}_t)_t$ and $(\widehat \bX^{m'}_t)_t$ denote trajectories started from new random initial conditions, generated with the true graph and interaction kernel and with the estimated ones, respectively. The measure $\rho$ is the exploration measure defined in \eqref{Def:exploration}; since it is unknown, we use a large set of observations independently of the training data set to estimate it; note that, of course, such estimate of $\rho$ is not used in the inference procedure -- it is only used to assess and report the errors above. 

\begin{figure}[t]
	\centering
	\includegraphics[width=\textwidth]{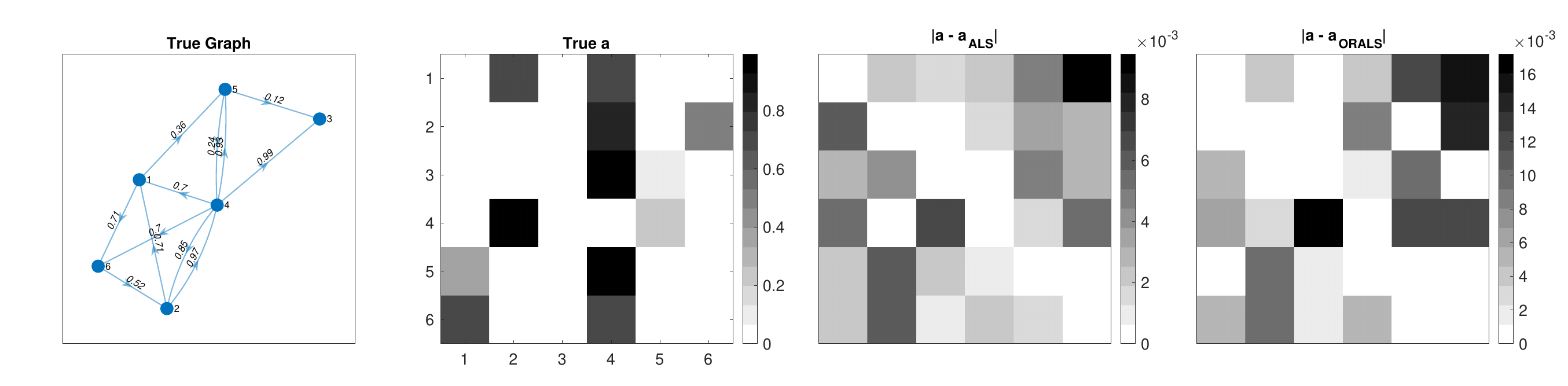}
        \includegraphics[width=\textwidth]{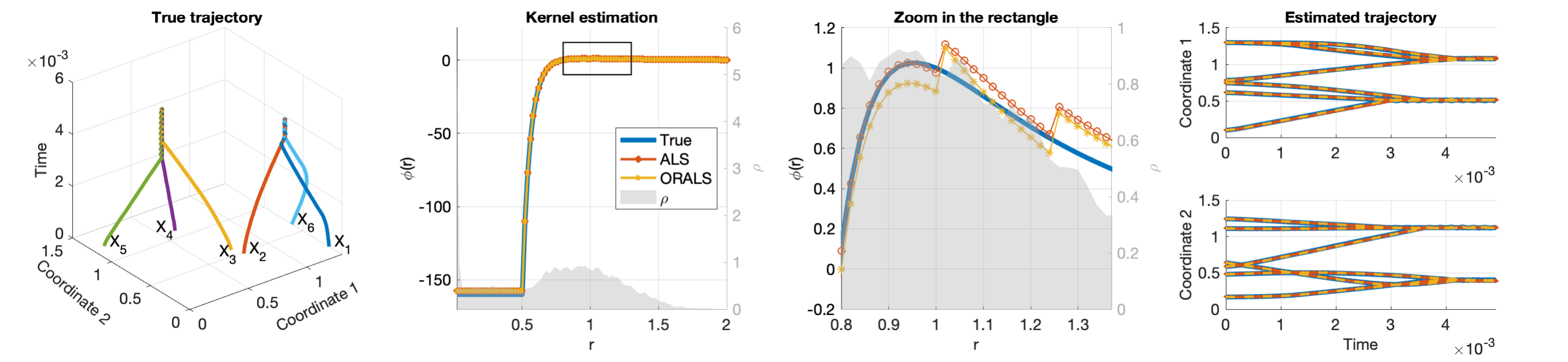}
	\caption{ \textbf{Top:} a typical weight matrix estimation. The first two columns show the true graph and its weight matrix. The two columns on the right show the entry-wise errors of the ALS and ORALS estimators.  
	\textbf{Bottom:} Estimator of interaction kernel and trajectory prediction. The left column shows a true trajectory. The middle two columns show the true and estimated kernels with a zoom-in to show the details in a rectangular region. The fourth column presents the true (the same as in column 1) and predicted trajectories. Note that $X_3$ and $X_2$ do not converge to the same cluster, in both the true and estimated trajectory, even though they are close at time 0, since there are no edges between them in the graph. 
		}
	\label{fig:graph_true_est}
\end{figure}

\subsection{A typical estimator and its trajectory prediction}

In this section, we show a typical instance of the estimators. The initial distribution $\mu_0$ is the uniform distribution over the interval $[0, 1.5]$,  the training dataset has $M = 10^3$ trajectories, the stochastic force has $\sigma =  10^{-3}$, the observation noise has $\sigma_{obs} = 10^{-3}$, and time $T = 0.005$ (i.e., making observations at $L=50$ time instances). 
Figure \ref{fig:graph_true_est} shows the graph, the kernel, the trajectory, and their estimators. 
Our algorithms return accurate estimates of the graph and the kernel; see the estimation errors in Table \ref{tab:error_typical}. We also present the mean and SD of the trajectory prediction errors of $100$ independent trajectories sampled from the initial distribution.

\begin{table}[h!]
\centering
\begin{small}
\begin{tabular}{ |p{1.1cm}||p{2.8cm}|p{2.8cm}|p{2.8cm}||p{4cm}|}
 \hline
  & Graph error $\varepsilon_\ba$ & Kernel error $\varepsilon_K$ & Traj. error $\varepsilon_{\bX}$	& Exp. traj. error $\varepsilon_{\bX}$ \\
 \hline
 	ALS  & $8.47\times10^{-3}$ & $1.45\times10^{-2}$    & $6.1\times10^{-3}$ 	& $6.19\times10^{-3}\pm 8.12\times 10^{-4}$\\
 \hline
 	ORALS  & $1.67\times10^{-2}$ & $1.47\times10^{-2}$ & $6.6\times10^{-3}$	& $7.41\times10^{-3} \pm 1.07\times 10^{-3}$\\
 \hline
\end{tabular}
\end{small}
\caption{Relative errors of the estimators in Figure \ref{fig:graph_true_est} in a typical simulation, and in the fourth column, the mean and SD of trajectory prediction errors of 100 random trajectories.  }\label{tab:error_typical}
\end{table}

\begin{figure}[t]
\centering
\includegraphics[width=0.8\textwidth]{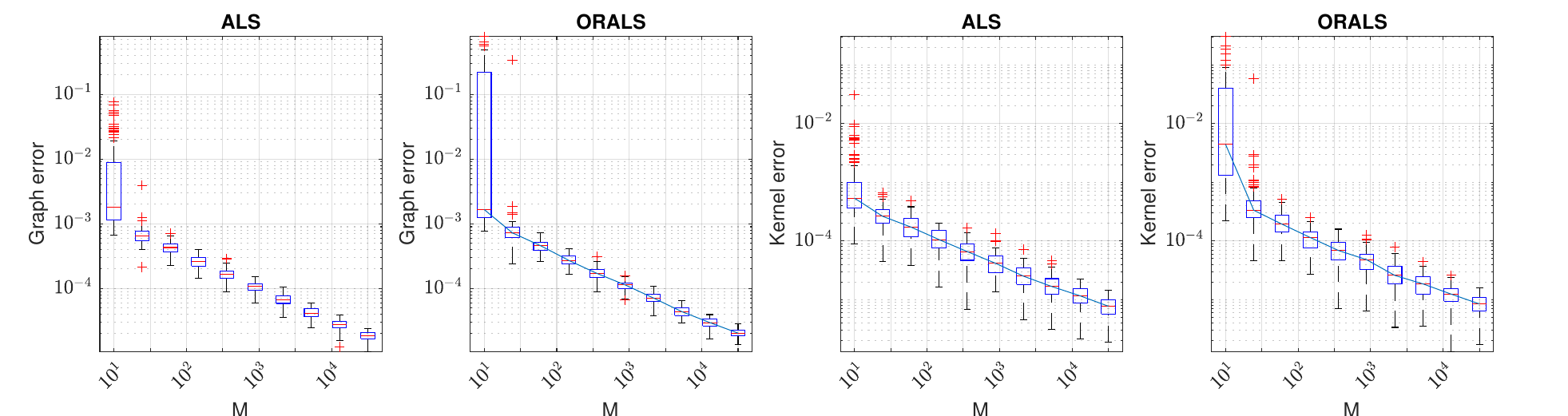}
\includegraphics[width=0.8\textwidth]{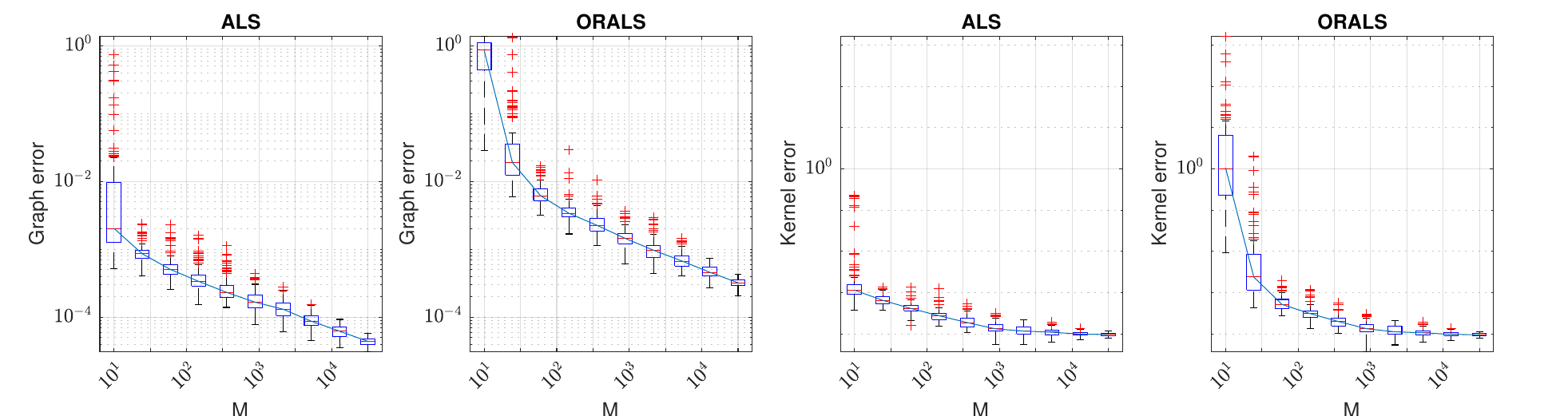}
\caption{Convergence with sample size M increasing, $M\in \{10, 24, 59, 146,359,879, 2154,$ $  5274,12915, 31622\}$, in 100 independent experiment runs. The top row shows almost perfect rates of $M^{-1/2}$ for both algorithms for the case of noiseless data and a well-specified basis. For the case of noisy data, the bottom row shows robust convergence with the errors decaying until they reach $10^{-4}$, the variance of observation noise. 
}\vspace{-2mm}
\label{fig:conv_M}
\end{figure} 

\subsection{Convergence and robustness}
We examine the convergence rates of the estimators in terms of the sample size $M$ and their robustness to basis misspecification and noise in data. 
Thus, we consider two cases: a case with noiseless data and a well-specified basis $\{\psi_1,\psi_4,\psi_7\}$, which we aim to show the convergence rate of $M^{-1/2}$, as proved for the parametric setting; and a case with noisy data with $\sigma_{obs} = 10^{-2}$  and the above basis functions $\{\psi_k\}_{k=1}^7$, which we aim to test the robustness of the convergence. 

Figure \ref{fig:conv_M} shows that both ALS and ORALS yield convergent estimators as the sample size $M$ increases. Here, the data trajectories are generated from the system with a stochastic force with $\sigma=10^{-2}$. In either case, the box plots show the relative errors in 100 random simulations. In each simulation, we compute a sequence of estimators from $M$ sample trajectories, for increasing values of $M$. 
In each box plot, the central mark indicates the median, and the bottom and top edges of the box indicate the 25th and 75th percentiles, respectively. The whiskers extend to the most extreme data points not considered outliers, and outliers are plotted individually using the ``+'' marker. 

In the case of noiseless data and well-specified basis, the top row shows nearly perfect decay rates of $M^{-1/2}$ for both the graph errors and the kernel errors and for both ALS and ORALS algorithms. 
For ORALS, this convergence rate agrees with Theorem \ref{thm:AN_orals}. ALS has similar convergence rates, even though it does not have a theoretical guarantee for convergence.  

\begin{figure}[h!]
\centering
\includegraphics[width=0.44\textwidth]{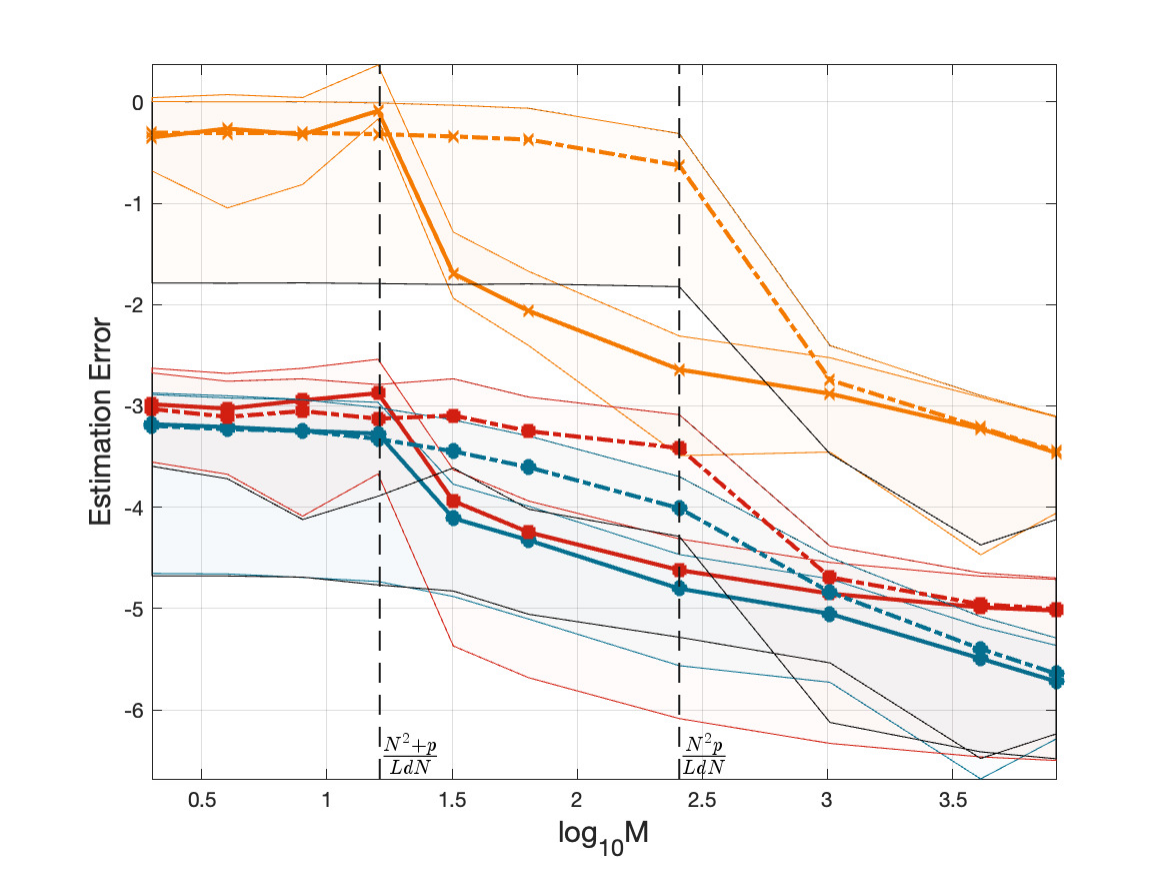}
\includegraphics[width=0.44\textwidth]{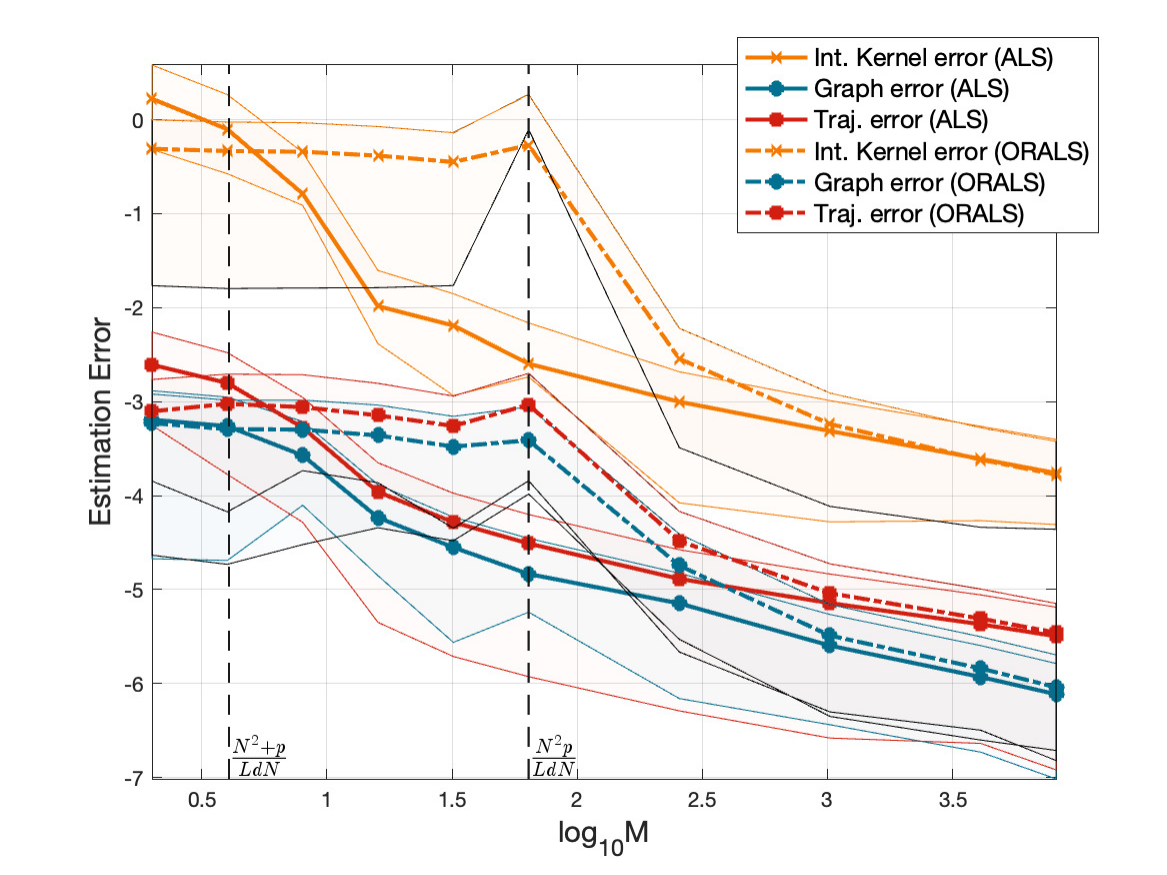}
\includegraphics[width=\textwidth]{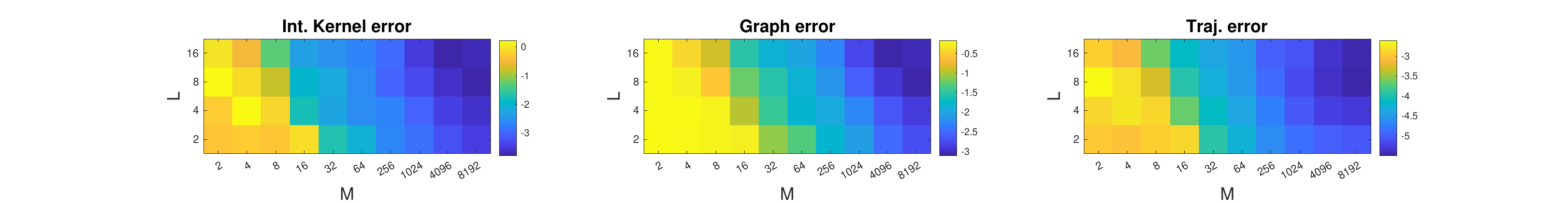}
\includegraphics[width=\textwidth]{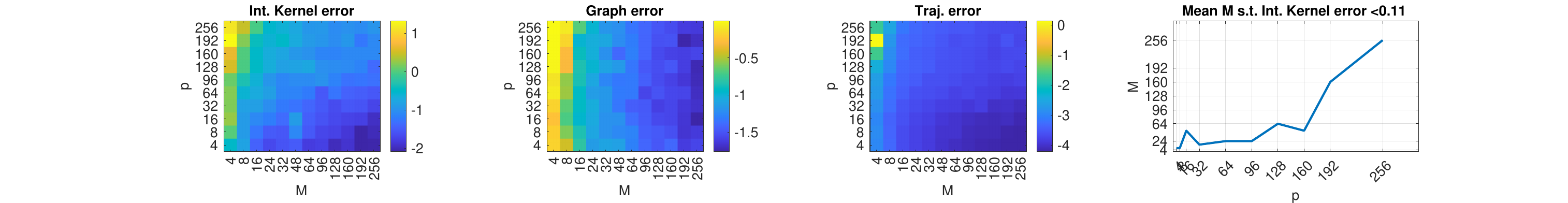}
\includegraphics[width=\textwidth]{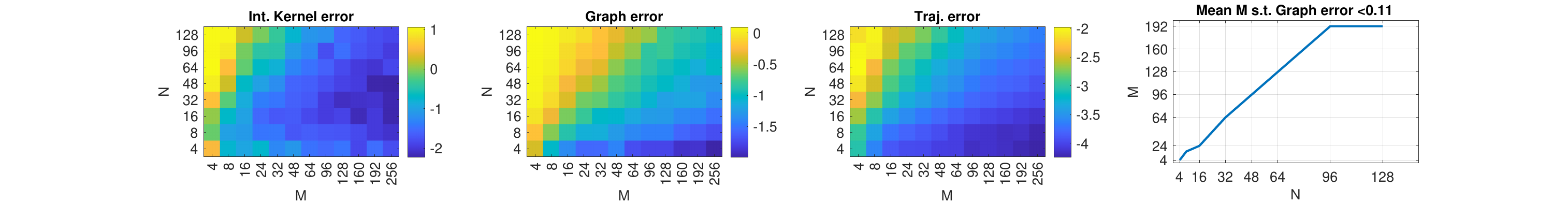}
\caption{Top: estimation errors as a function of $M$ (with all other parameters fixed), for both ALS (continuous lines) and ORALS (dashed lines), for a random Fourier interaction kernel with $\p=16$, $N=32$, $L=2$ (left) and $L=8$ (right). In the small and medium sample regime, between the two vertical bars, ALS significantly and consistently outperforms ORALS; for large sample sizes, the two estimators have similar performance. 
Second row: the performance of the ALS estimator improves not only as $M$ increases but also as $L$ increases, suggesting that multiple (dependent) observations along each trajectory can increase the accuracy of the estimator. 
Third row: performance of the ALS estimator as a function of $M$ and $p$ (with fixed $N=8$, $L=8$), and a graph showing, as a function of $p$, the mean $M$ (over 16 runs) such that the kernel estimator error is $<0.1$, suggesting a near-optimal linear scaling with $p$. Note that the graph estimation error appears quite insensitive to $p$, almost as if ALS were able to achieve a full decoupling of the problem into interaction kernel estimation and graph estimation. 
Fourth row: performance of the ALS estimator as a function of $M$ and $N$ (with fixed $p=8$, $L=8$), and a graph showing, as a function of $N$, the mean $M$ (over 16 runs) such that the graph estimation error is $<0.11$, suggesting a near-optimal linear scaling with $N$, as if the observations over $N$ agents where independent. 
}
\label{fig:conv_M_T_randomkernel}
\end{figure}

In the case of noisy data and misspecified basis, as shown in the bottom row, the decay rate remains clear for the graph errors, but plateaus at the level of observation noise $\sigma_{obs}^2 = 10^{-4}$. ALS's graph errors are about half a digit smaller than the ORALS' graph errors; while both algorithms lead to similar kernel errors when the sample size is large, the ALS' kernel errors are much smaller when the sample size is small. Thus, ALS is more robust to noise and misspecification than ORALS, and it can lead to reasonable estimators even if the sample size is small,  which we further examine next. 

\subsection{Critical sample sizes}\label{sec:sample_size}

We further examine the performance of our estimators as a function of the number of sample paths $M$ and the trajectory length $L$, 
so that the total number of observations is $ML$, each a $d$-dimensional vector. Here we consider an interaction kernel $\IK(x) =\phi(|x|)\frac{x}{|x|}$ with $\phi(r)=\sum_{k=1}^{\p} w_k/k \sin(2\pi kr)/(r+0.1)$, where $w_k \overset{\mathrm{i.i.d.}}{\sim} \mathcal{N}(0,1)$. 

 Figure \ref{fig:conv_M_T_randomkernel} shows the results for $N =32$, $\p =16$, $d=1$, $\sigma=10^{-4}$ and observation noise $10^{-4}$. The top panel shows the results with $L=2$ (left) and $L=8$ (right). The first dashed vertical bar is in correspondence of $M=(N^2+{\p})/(NLd)$
 ; the second dashed vertical bar is at $M=(N^2{\p})/(NLd)$: since we have a total of $MLdN$ scalar observations, and $N^2+{\p}$ parameters to estimate, the first one corresponds to a nearly information-theoretic optimal sampling complexity, and ALS appears to start performing well around that level of samples, albeit, because of the lack of independence in $L$, on the right the transition to convergence for ALS is not as marked as on the left; the second one appears to be consistent with the sample size at which ORALS starting to get a good performance.  
In the small and medium sample regime, between the two vertical bars, ALS significantly and consistently outperforms ORALS; for large sample sizes, the two estimators have similar performance. 
The bottom panel shows the performance of the ALS estimator as a function of both $M$ and $L$ (recall that $L= T/dt$). The performance improves not only as $ M $ increases but also as $ L $ increases, at least for this particular system. 
Furthermore, Figure \ref{fig:largeN_als} suggests that ALS is robust in the medium sample size regime, achieving a reasonable estimation when $M=256$ and $p=67$ for the above system with $N=128$ and $L=50$.

\begin{figure}[t]
\centering
	\includegraphics[width=0.99\textwidth]{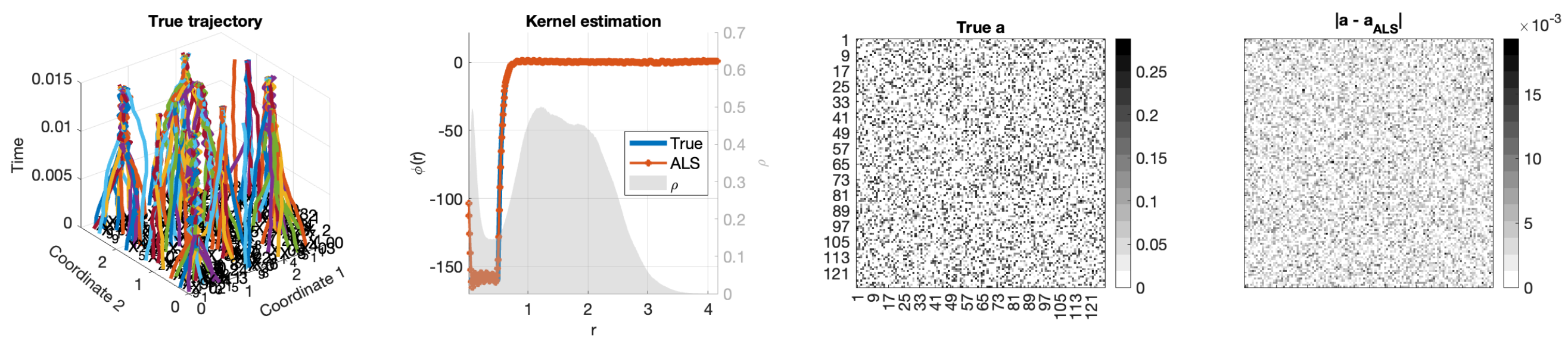}
	\caption{The system as in Figure \ref{fig:graph_true_est}, but with $N=128$ agents, $M=256$, and a dictionary of size $p=67$, consisting of the same dictionary as above, and trigonometric polynomials of degree up to $64$ added. The relative errors in graph, interaction kernel and trajectory prediction are all of order $5\cdot 10^{-2}$.}
	\label{fig:largeN_als}
\end{figure}

The main takeaways are about the critical and effective sample sizes. (i) ALS achieves good performance as soon as the number of samples is comparable to $(N^2+{\p})/(NLd)$, which acts as a critical sample size, around which the estimator quickly switched from performing poorly to performing well, while the critical sample size for ORALS appears to be of order $(N^2{\p})/(NLd)$.  (ii) In this particular system, the effective sample size appears to increase with $L$, potentially as fast as the product $ML$, notwithstanding the dependence among time steps in a single trajectory. However, this benefit arises from both the nonlinearity and the stochastic forcing; one should not expect a strictly linear gain in $L$ for arbitrary dynamical systems. For example, in a system quickly reaching a stable configuration, increasing observation time and/or $L$ may not lead to an improvement of the estimators.

\subsection{Dependence on noise level and stochastic force}

Numerical tests also show that the estimator's error decays linearly in the scale of the stochastic force and the noise level. The linear decay rate in the scale of the stochastic force agrees with Theorem \ref{thm:AN_orals}, where the variance of the error for the ORALS estimator is proportional to $\sigma^2$.  We refer to Sect.\ref{sec:decay_force_noise} for details.

\vspace{2mm}
\section{Applications}\label{sec:applications}

\subsection{Kuramoto model on a network, with misspecified hypothesis spaces}\label{sec:Kuramoto}

We consider the Kuramoto model with network
\begin{equation}\label{eq:ips_Kuramoto}
d {X}^i_t =  {\kappa}\sum_{j\in \mathcal{N}_i}a_{ij}\sin(X^j_t-X^i_t) dt + \sigma d {W}^i_t, \quad i=1,\ldots, N. 
\end{equation}
When $a_{ij}\equiv  1$ and $\sigma=0$, it reduces to the classical Kuramoto model of N coupled oscillators, where $X_t^i$ represents the phase of the $i$-th oscillator. Here $\kappa$ represents the coupling constant. The Kuramoto model was introduced to study the behavior of systems of chemical and biological oscillators \cite{kuramoto1975self} 
and has been extended to study flocking, schooling, vehicle coordination, and electric power networks (see \cite{dorfler2014synchronization,ghosh2022synchronized} and the reference therein).
In this example, our goal is to jointly estimate from multi-trajectory data the weight matrix $\ba$, and the coefficient $c$ of the (true) interaction kernel $\IK(x) = \sin(x)$ over the \emph{misspecified} hypothesis space $$\calH = \mathrm{span}\{ \cos(x), \sin(2x), \cos(2x), \dots, \cos(7x), \sin(7x)\}\,,$$
which does not contain $\IK$, and over the hypothesis space $\calH_\phi := \mathrm{span}\{ \calH ,\IK\}$.

We consider a system with $N =10$ oscillators, using the uniform distribution over the interval $[-2, 2]$ as the initial distribution, as well as a stochastic force with $\sigma = 10^{-4}$, an observation noise with $\sigma_{obs} = 10^{-3}$, time $T = 0.1$ and $\Delta t = 0.001$ (therefore, $L=100$). 
We compare the kernel estimation result using $\calH$ and $\calH_\phi$, with the number of observed trajectories $M \in \{8, 64, 512\}$. In Figure \ref{fig:kuramoto}, we present the true graph and a typical trajectory; in particular, we present the kernel estimators' mean, with one SD range represented by the shaded region, from 20 independent simulations. 
The successful joint estimation results suggest ALS and ORALS may overcome the discrepancy between the true kernel and the hypothesis space, making them applicable to nonparametric estimation.  

Due to the network structure, the system can have interesting synchronization patterns. The bottom left of Figure \ref{fig:kuramoto} shows an example of such a pattern: groups of particles moving in clusters, with each cluster having a similar angular velocity robust to the perturbation by the stochastic force. These synchronization patterns appear dictated by the network structure, and appear robust to the initial condition. In general, it is nontrivial to predict when these synchronization patterns emerge and what their features are depending on the network; for a recent study in the case of random Erd\"os-R\'enyi graphs, we refer the reader to \cite{abdalla2023expander} and references therein.

\begin{figure}[t]
\centering
\includegraphics[width=\textwidth]{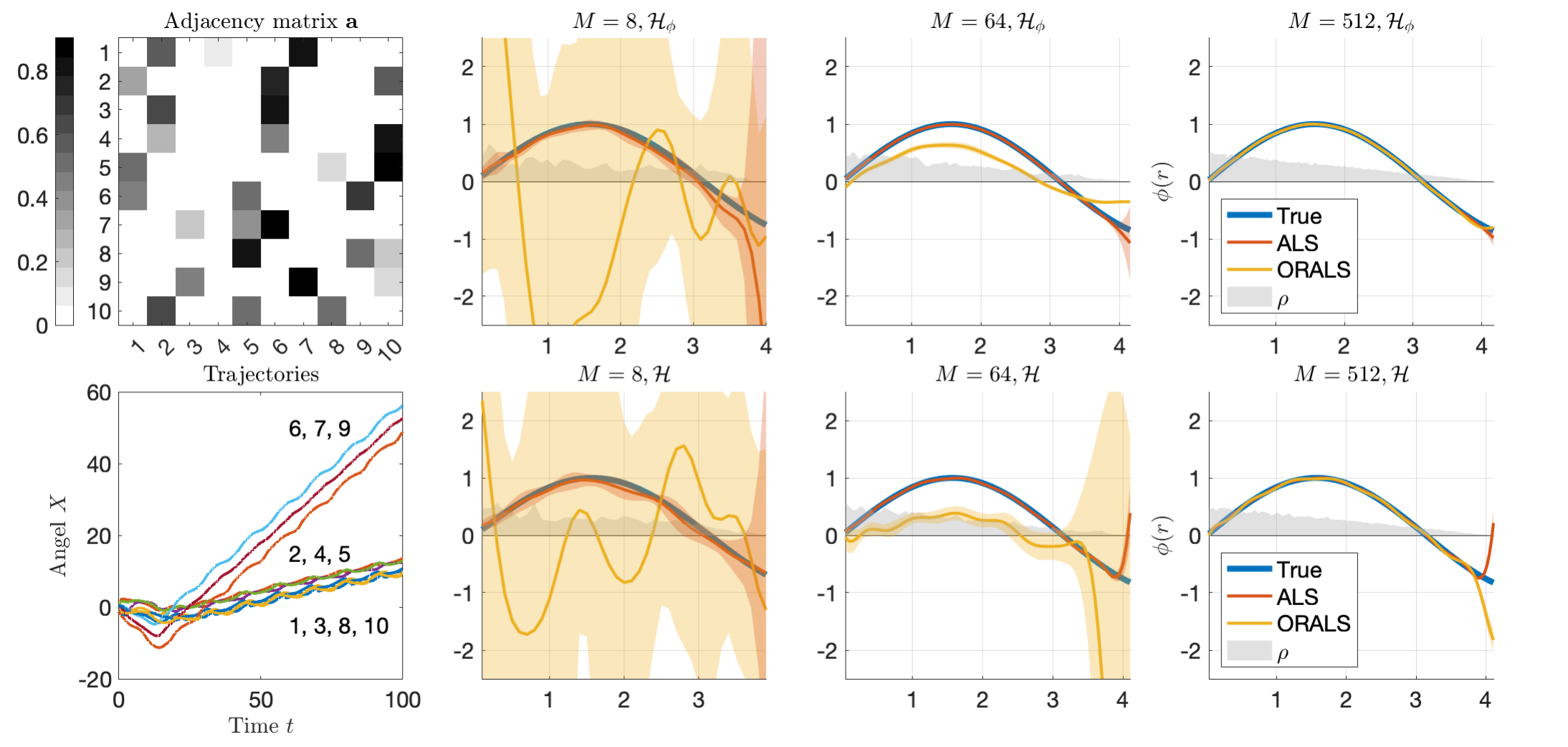}
\caption{The first column shows the true weight matrix $\ba$ and a trajectory of the system with an interesting clustering pattern. In the remaining columns, we show the estimators of the interaction function with misspecified and well-specified hypothesis spaces, i.e., $\phi\notin\calH$ (top row) and $\phi\in\calH_\phi$ (bottom row), respectively, with M ranging in $[8, 64, 512]$. 
Our estimators appear robust to basis misspecification, albeit with performance worse than in the well-specified case. 
}
\label{fig:kuramoto}
\end{figure}


\subsection{Estimating a leader-follower network}\label{sec:leader-follower}
\label{s:leaderfollowerdynamics}
Consider the problem of identifying the leaders and followers in a system of interacting agents from trajectory data. In this system, the leader agents make a stronger influence through more connections to other agents than the follower agents. Such a system can describe opinion dynamics on a social network \cite{wang2006theoretical,MT14,Ding2018discrete,hasanyan2020leader} and collective motion of pigeon flocks \cite{nagy2010hierarchical}. 
We consider the leader-follower model in the form \eqref{eq:ips_K} with 
with an interaction kernel (named influence function in the opinion dynamics literature)
\begin{align*}
	\IK(x) = -\IK_1(x)-0.1\IK_2(x)
\end{align*}
with the basis functions being $\IK_1(x)= \mathbbm{1}_{\{x\leq 1\}}$ and $\IK_2(x)= \mathbbm{1}_{\{1< x\leq 1.5\}}$. The weight matrix $\ba$ represents a leadership network, with the weights on the directed edges to be understood as a measure of impact or influence.

We identify the agents as leaders or followers by first estimating the weight matrix $\ba$ from data by the ALS algorithm and then using the $K$-means method (e.g., \cite[Chapter 9]{Bishop2006Book}) to analyze the impact feature and the influence feature extracted from the matrix. The detailed algorithm of clustering is the following:

\smallskip
\noindent\textbf{Step 1: Identify the leaders.} Given the weight matrix, observe that for any agent $A_i$, the row-wise sum $\|\ba_{i\cdot}\|_{\ell_1}=\sum_{j\neq i} |\ba_{ij}|$ represents its impact on other agents in the system, and the column-wise sum $\|\ba_{\cdot i}\|_{\ell_1}=\sum_{j\neq i} |\ba_{ji}|$ corresponds to the influence of the system on $i$. We posit that leadership can be characterized as the sum of impact on the system and influence from others:
\[	L_i = \alpha\|\ba_{i\cdot}\|_{\ell_1}+\beta\|\ba_{\cdot i}\|_{\ell_1}\,, \quad \text{with } \alpha+\beta=1\,, \alpha>\beta\,.
\]
Typically, the impact factor $\alpha$ is expected to surpass the influence factor $\beta$ when discussing leadership. 
We identify the leaders and followers by applying the $K$-means method to cluster the leadership features $\{L_i\}_{i=1}^N$. We represent leaders and followers by a partition of the index set: $[ N] = S_1\bigcup S_2 
= \{i_{1},\cdots,i_{\widetilde{N}}\}\bigcup \{j_{1},\cdots,j_{\widetilde{N}'}\}$ with $N=\widetilde{N}+\widetilde{N}'$, representing leaders and followers, respectively.

\smallskip
\noindent\textbf{Step 2: Classify the Followers.} We further classify each follower in a group according to their leader. We start by setting the $\widetilde{N}$ groups to be  $\{G^{1}=\big\{i_1\},\cdots,G^{\widetilde{N}}=\{i_{\widetilde{N}}\}\big\}$. To classify follower $j\in S_2$, we consider another leadership feature:
\begin{align*}
	\widetilde{L}_j^{k} = \alpha \sum_{i\in G^{k}}|\ba_{ij}|+\beta\sum_{i\in G^{k}} |\ba_{j i}|\,, \quad \forall ~ k=1,\cdots,{\widetilde{N}} \,.
\end{align*} 
Then we find the largest $\widetilde{L}_j^{k_0}$ and classify agent $j$ to group $k_0$ and set this group to be $\{G^{k_0},j\}$. We continue this procedure until all followers are classified.

We consider a system with $N =20$ agents with two leaders. Each of them has an impact on $80\%$ of the group members. On the other hand, the agents are labeled as followers because they affect $20\%$ of the group members.  
Also, we set a uniform distribution over the interval $[0, 4]$ as the initial, as well as a stochastic force with $\sigma = 10^{-7}$, an observation noise with $\sigma_{obs} = 10^{-7}$.  We set $\Delta t = 0.01$ with $L = 100$ steps, hence the total time $T = 1$.

Figure \ref{Fig:leader-follower} demonstrates the identified network of the agents via the above method with $(\alpha, \beta) = (0.8, 0.2)$. In this experiment, we have two leaders, labeled as $A1$ (red group) and $A6$ (blue group), out of $N=20$ agents,  and we consider three sample sizes $M\in \{15,30,100\}$. The figure shows the identification of the leader-follower network depends on sample size: we can identify the leader-follower network accurately when the sample size is large, e.g., $M=100$. The error of graph estimation is $0.0018$. But when the sample is too small, e.g., $M=15$ and $M=30$, the inference can have large errors: the errors of graph estimation are $0.1254$ when $M=15$ and $0.0094$ when $M=30$. Nevertheless, the leaders and followers are correctly identified; see more detailed results in Appendix \ref{Append:addition}.  
This example suggests that we can consistently identify and cluster leaders and followers from a small sample size.

\begin{figure}[h!]
\centering
\includegraphics[width=0.24\textwidth]{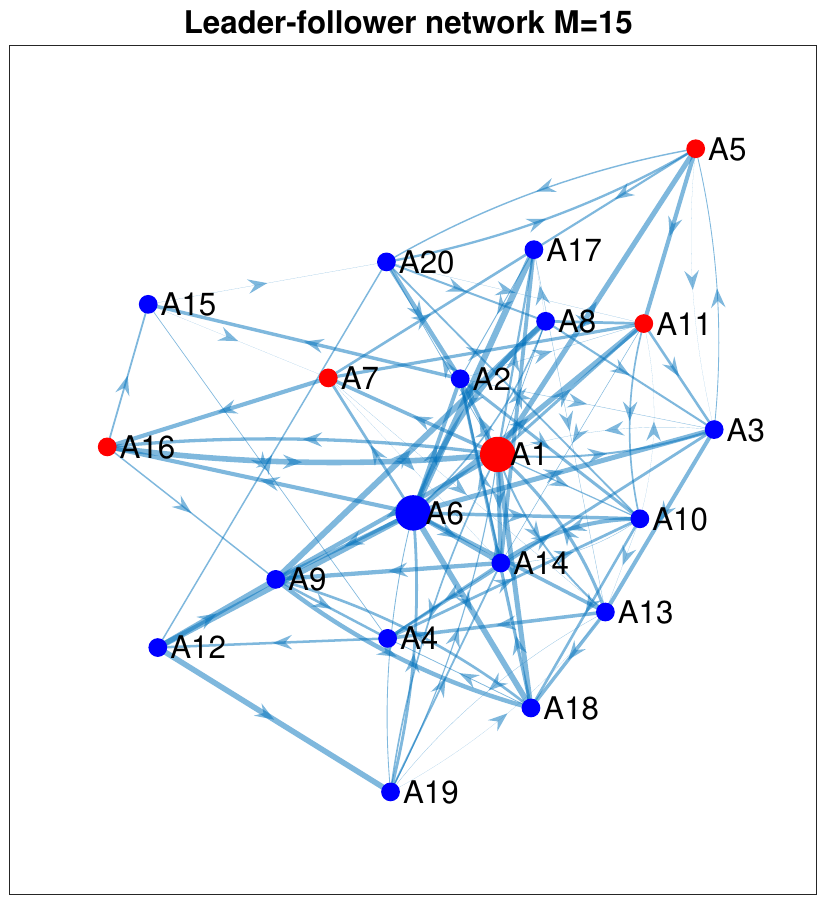}
\includegraphics[width=0.24\textwidth]{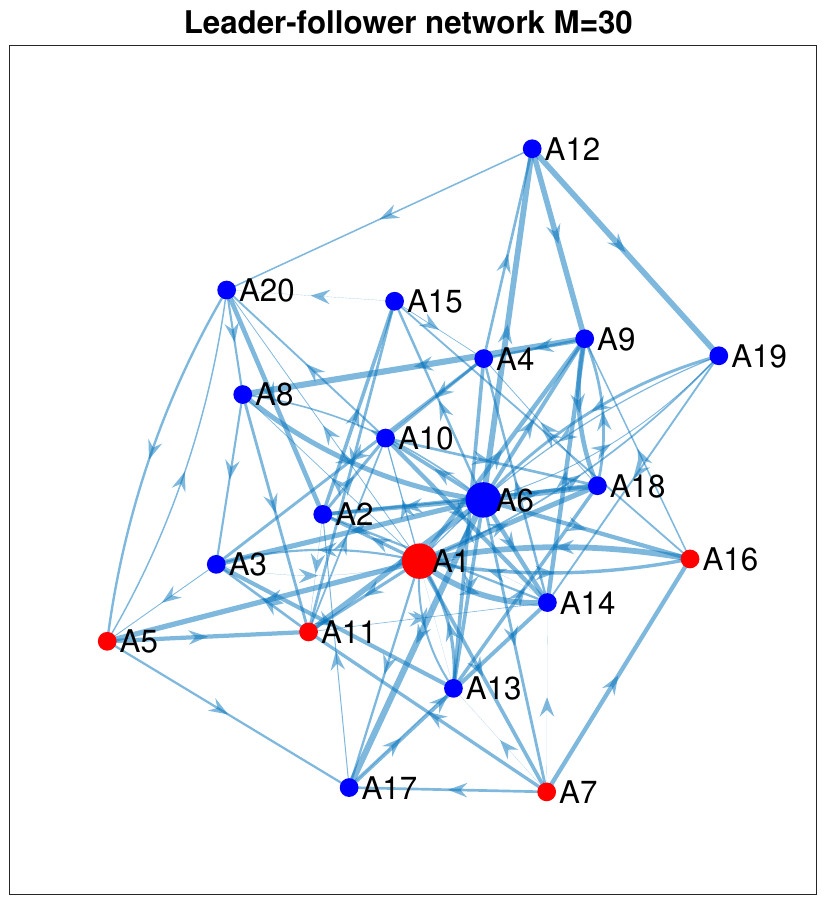}
\includegraphics[width=0.24\textwidth]{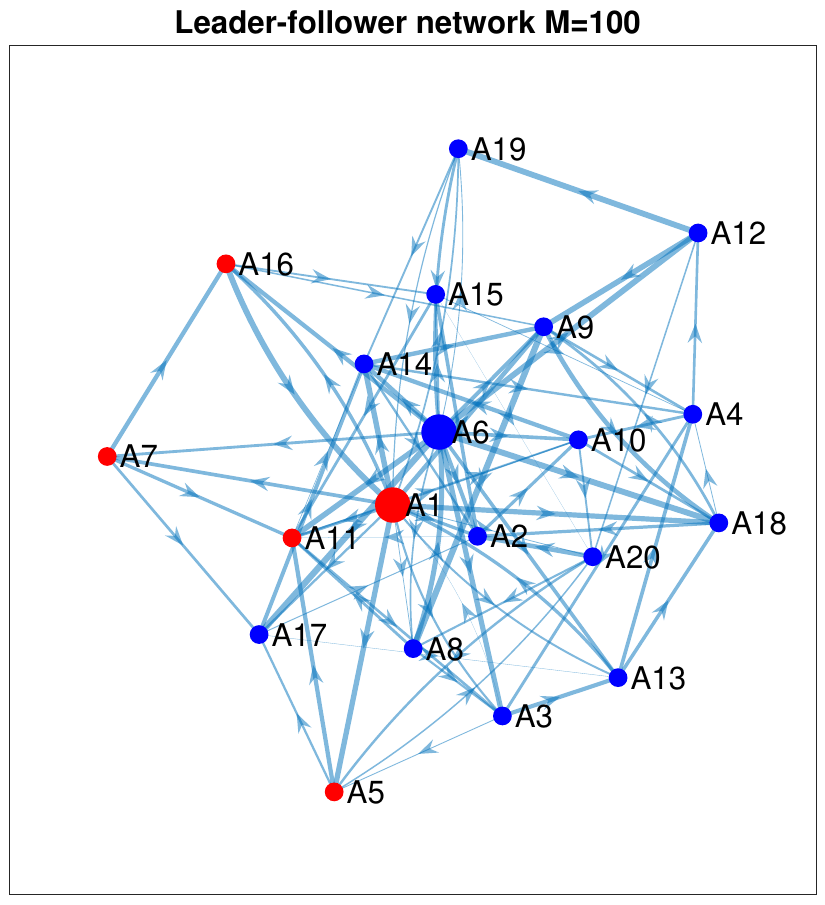}
\includegraphics[width=0.24\textwidth]{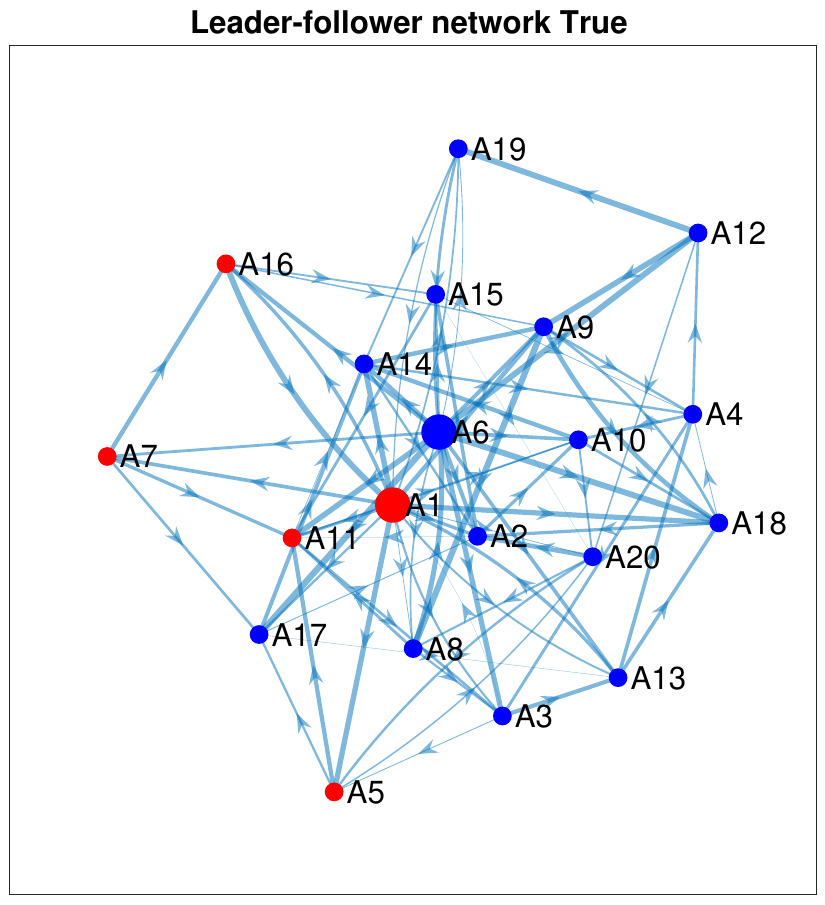}
\caption{Estimated networks of leaders and followers from datasets with sample sizes $M\in \{15,30,100\}$ and the ground truth. When $M=100$, the estimated network is accurate. When $M=30$, the leaders-follower network is correctly identified, though the weight matrix is less accurate. When $M=15$, the sample size is too small for a meaningful inference; but the clustering is still reliable. 
}
\label{Fig:leader-follower}
\end{figure}

\subsection{Multitype interaction kernels}\label{sec:multi_type}
We consider further the joint inference of a generalized model with multiple types of agents distinguished by their interaction kernels. 
Specifically, consider the system  \eqref{eq:ips_K_type} with $Q$ types of interaction kernels, where $\kappa(i)$ denotes the type of kernel for the agent $i\in[N]$, and $\IK_q$ is the interaction kernel for agents of type $q$. 
Given a hypothesis space $\mH = \text{span}\{\psi_k\}_{k = 1}^p $ that includes these kernels,  there a exists coefficients matrix $\bc \in \R^{p \times N}$ such that 
\begin{equation*}
	 \IK_{\kappa(i)}(x) = \sum_{k = 1}^p \bc_{ki}\psi_k(x)
\end{equation*}
with $\bc_{\cdot i} = \bc_{\cdot \kappa(i)}$, i.e. the matrix $\bc$ has $Q$ distinct columns corresponding to the types. Using the same tensor notation as before, we have 
\begin{equation}\label{eq:ipog_multitype} 
 \begin{aligned}
\dynsys {\mathbf{a}}{\bc}\quad:\quad &\dot \bX_t  = \ba \bB(\bX_t) \bc+\sigma \dot{\bW} = \big(  \ba_{i\cdot}\bB(\bX_t)_i \bc_{\cdot i}\big)_{i\in[N]}  +\sigma \dot{\bW} \,,\quad  \text{ where}  \\
 &\ba_{i\cdot}\bB(\bX_t)_i \bc_{\cdot i}  = \sum_{j\neq i} \ba_{ij}\sum_{k=1}^{\p} \psi_k(X^j_t-X^i_t)c_{ki} \in \R^{d}\,, i\in[N]. & 
\end{aligned}
\end{equation}

Our goal is to jointly estimate the weight matrix $\ba$ and the matrix $\bc$, which represents the $Q$ kernels without knowing the type function $\kappa$, from data consisting of multiple trajectories.

Since $\bc$ has $Q$ distinct columns, we have $\text{rank}(\bc) \leq Q$, which is a weaker condition. However, the low-rank property of $\bc$ is sufficient for us to apply ALS. Using SVD, we decompose $\bc$ as
\begin{equation*}
	 \bc = \bu \bv^\top
\end{equation*}
where $\bu \in \R^{p\times Q}$ is called the {\em coefficient matrix}. This is because $\bu$ represents the orthogonalized coefficients of the $Q$ interaction kernels on the basis $\{\psi_k\}$.  And the {\em type matrix} ${\bv} \in \mathbb{R}^{N \times Q}$ is assumed to be orthonormal, i.e., $\bv^\top \bv  = I_Q$, as it represents the type of the $i$-th particle with each row of $\bv$ represents the weight of the orthogonalized $Q$ interaction kernels that the kernel $\IK_{\kappa(i)}$ has. Such normalization condition avoids the simple non-identifiability issue, as demonstrated in the admissible set of $\ba$. We write the above system as 
\begin{equation*}
 \begin{aligned}
\dynsys {\mathbf{a}}{\bu, \bv}\quad:\quad &\dot \bX_t  = \ba \bB(\bX_t)\bu\bv^\top+\sigma \dot{\bW} = \big(  \ba_{i\cdot}\bB(\bX_t)_i \bu\bv_{i\cdot}^\top\big)_{i\in[N]}  +\sigma \dot{\bW} \,,\quad  \text{ where}  \\
 &  \ba_{i\cdot}\bB(\bX_t)_i \bu\bv_{i\cdot}^\top  = \sum_{j\neq i} \ba_{ij}\sum_{k=1}^{\p} \psi_k(X^j_t-X^i_t)\sum_{q = 1}^Q \bu_{kq}\bv_{iq} \in \R^{d}\,, i\in[N].  
\end{aligned}
\end{equation*}
With data of multiple trajectories $\{\bX_{t_0:t_L}^m\}_{m = 1}^M$, the loss function is defined as 
\begin{equation}\label{eq:est_joint_multitype_AUV}
 \begin{aligned}
(\widehat{\ba},  \widehat{\bu} ,  \widehat{\bv} ) 	&= \argmin{\substack{(\ba,\bu, \bv)\in \mathcal{M}\times \R^{p\times Q}\times\R^{N \times Q} \\ \bv^\top\bv = I_Q}}  \calE_{L, M}(\ba,\bu, \bv), \,\,\,\text{ with} \\
\calE_{L, M}(\ba,\bu, \bv) &:= \frac{1}{MT}\sum_{l=1, m = 1}^{L,M}\norm{\Delta\bX_{t_l}^m -\ba \bB (\bX_{t_l}^m) \bu\bv^\top\Delta t}_F^2. 
 \end{aligned}
\end{equation}

We introduce a \emph{three-fold ALS} algorithm to solve the above optimization problem.  Notice that the loss function  \eqref{eq:est_joint_multitype_AUV} is quadratic in each of the unknowns $\ba, \bu, \bv$ if we fix the other two. The \emph{three-fold ALS} algorithm alternatively solves for each of the unknowns while fixing the other two. In each iteration, this algorithm proceeds as follows: solving $\ba$ via least squares with nonnegative constraints, next solving $\bu$ by least square, and then solving $\bv$ via least squares followed by an ortho-normalization step, which is an orthogonal Procrustes problem \cite{gower2004procrustes}.  Additionally, we add an optional $K$-means step to ensure that  $\bc$ has only $Q$ distinct columns. The details of the algorithm are postponed to Section \ref{sec:3fold_ALS}.

Figure \ref{fig:multitype_kernel} numerically compares the three-fold ALS with and without the $K$-means step. We consider the example of $N = 10$ agents and $Q=2$ types of kernels corresponding to short-range and long-range interactions, which are constructed through $p = 16$ linear spline basis functions on $[0, 5]$. The correspondence between agents and the type of kernels is uniformly randomly generated. We use the data of $M = 400$ independent trajectories, with a uniform distribution over the interval $[0, 5]$ as initial distribution, $\Delta t = 10^{-3}, L = 50$ so that $T = 0.05$, and the stochastic force and the observation noise have $\sigma = \sigma_{obs} = 10^{-3}$. The weight matrix is randomly generated with entries sampled from the uniform distribution on $[0,1]$, followed by a row-normalization. The true kernels are constructed on spline basis functions, representing short-range interaction (Type 1) and long-range interaction (Type 2).

Figure \ref{fig:multitype_kernel} reports the error decay in the iteration number and the comparison between the estimated and true kernels. It shows that the algorithm using $K$-means at each step performs better than the one without the $K$-means since it preserves more model information.

\begin{figure}[htb]
\centering
\includegraphics[width=\textwidth]{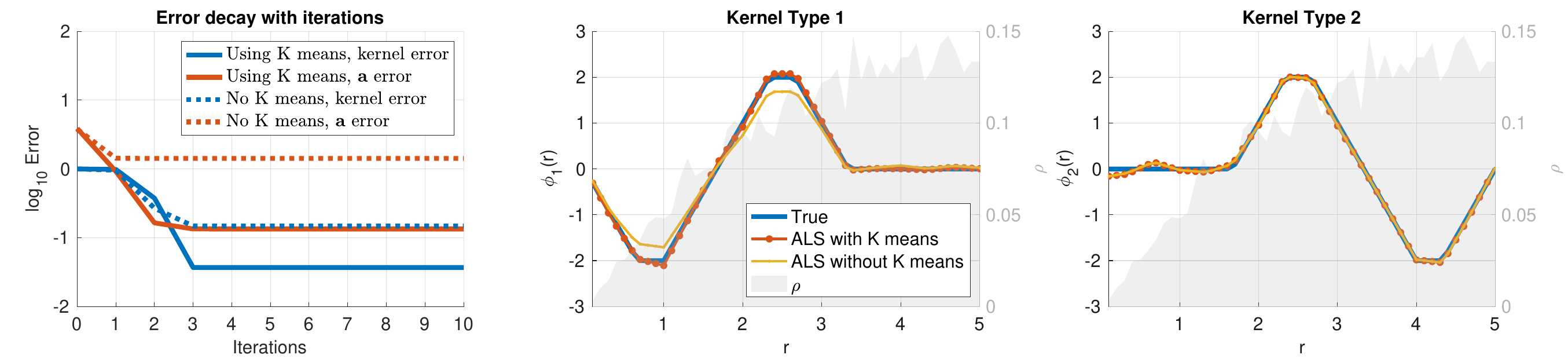}
\caption{Estimation of two types of kernels: short range and long range. The first panel shows the error decay with respect to iteration numbers. The algorithm using $K$-means decays faster and reaches lower errors than the algorithm without $K$-means. The right two columns show the estimation result of the two kernels. The classification is correct for both of the algorithms, and the one with $K$-means yields more accurate estimators, particularly for the kernel Type 1. }
\label{fig:multitype_kernel}
\end{figure}

\paragraph{Model selection} We further test the robustness of the three-fold ALS algorithm for model selection when the number $Q\in \{1,2\}$ is unknown. We apply the algorithm with both $Q\in \{1,2\}$ on two datasets that are generated with $Q_{true}=1$ and $Q_{true}=2$ respectively. Table \ref{tab:misspecified_Q} shows that the three-fold ALS can select the correct model through trajectory prediction errors. It reports the means and SDs of trajectory prediction using  $10$ test trajectories,  $\Delta t = 10^{-2}$ and $L = 500$ time steps. Note that the total time length is $T = 5$. When $Q_{true}=1$, the error of the estimators with misspecified $Q=2$ is relatively accurate, because the estimated two types of kernels are both close to the true kernel, as examined in Figure  \ref{fig:multitype_kernel_single_kernel_traj}. Thus, the algorithm effectively identifies the correct model. 

\begin{table}[h!]
\centering
\begin{tabular}{ |p{5cm}||p{5cm}| |p{5cm}| }
 \hline
  							& $Q_{true}=1$  & $Q_{true}=2$ \\
 \hline
 	Estimated with $Q=1$  &  \boldmath{$1.22\times 10^{-2} \pm 8.23 \times 10^{-3}$} &  $2.06 \times 10^{-1} \pm 6.88 \times 10^{-2}$\\
 \hline
 	Estimated with  $Q=2$   &  $1.44\times 10^{-2} \pm 7.40 \times 10^{-3}$ & \boldmath{$1.12\times 10^{-2} \pm 2.80 \times 10^{-3}$}\\
 \hline
\end{tabular}
\caption{Model selection: single- v.s. two- types of kernels. The table shows the Mean and SD of trajectory prediction errors in 10 independent numerical experiments, where the number of kernel types is unknown. Smaller errors indicate a correct model. The model is correctly identified in both cases (highlighted in bold). }
\label{tab:misspecified_Q}
\end{table}

\begin{figure}[htb]
\centering
\includegraphics[width=0.7\textwidth]{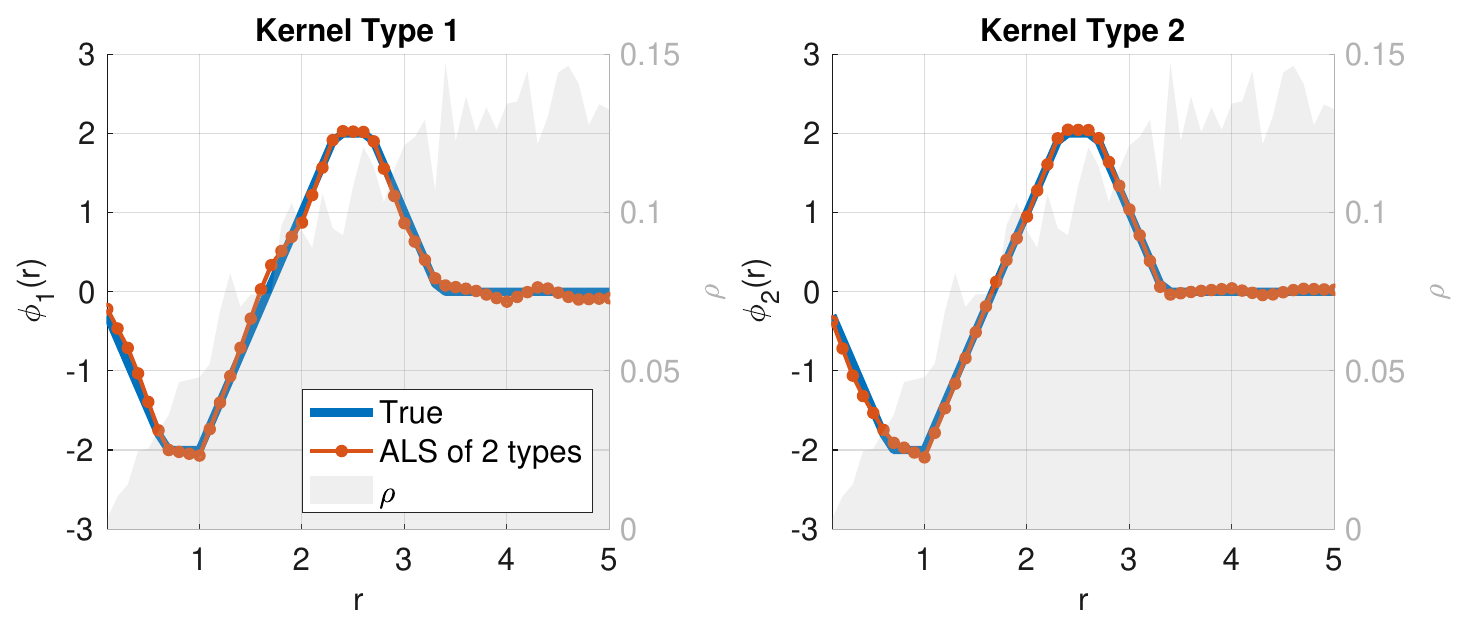}
\caption{Estimated kernels in a misspecified case: estimating two types of kernel when data is generated using a single kernel. The algorithm outputs two types of kernels, but both are close to the true kernel.}
\label{fig:multitype_kernel_single_kernel_traj}
\end{figure}

\section{Conclusion}
We have proposed a robust estimator for joint inference of networks and interaction kernels in interacting particle systems on networks, implemented with computationally two scalable algorithms: ALS and ORALS. We have tested the algorithms on several classes of systems, including deterministic and stochastic systems with various types of networks and with single and multi-type kernels. We have also examined the non-asymptotic and asymptotic performance of the algorithms: the ALS is robust for small sample sizes and misspecified hypothesis spaces, and both algorithms yield convergent estimators in the large sample limit. 
  
Our joint inference problem leads to a non-convex optimization problem that resembles those in compressed sensing and matrix sensing. However, diverging from the conventional framework of matrix sensing, our data are correlated, our joint estimation is in a constrained parameter space and a function space, and the Restricted Isometry Property (RIP) condition rarely holds with a small RIP constant. These differences can lead to an optimization landscape with multiple local minima.  

We introduce coercivity conditions that guarantee the identifiability and the well-posedness of the inverse problem. These conditions also ensure that the ALS and ORALS algorithms have well-conditioned regression matrices and asymptotic normality for the ORALS estimator. Also, we have established connections between the coercivity and RIP conditions, providing insights into further understanding of the joint estimation problem.  
  
Interacting particle systems on networks offer a wide array of versatile models applicable across multiple disciplines. We considered here estimating the Kuramoto model on a network, classifying agent roles within leader-follower dynamics, and learning systems with multiple types of interaction kernels. Our algorithms are adaptable to various scenarios and applications,r and amenable to be extended to more general settings, including models with more general interaction kernels. 

We expect further applications of the algorithms for the construction of effective reduced heterogeneous models for large multi-scale systems. 
Also, other future directions include generalizations to nonparametric joint estimations, further understanding of the convergence and stability of the ALS algorithm, regularizations enforcing the low-rank structures, and learning from partial observations.  

\appendix
\section{Theoretical analysis}

\subsection{Coercivity conditions: connections and examples}\label{Append:A}

First, we prove Proposition \ref{Prop:r2JCC_Ident}, which states that the rank-2 joint coercivity implies identifiability. 

\begin{proof}[Proof of Proposition \ref{Prop:r2JCC_Ident}]
	Notice that
	\begin{align*}
		\calE_{L, \infty}^{(i)}(\ba,\IK): = & \E \bigg[\bigg|  \sum_{j\neq i} [\ba_{ij}^{*} \IK_*(\mathbf{r}_{ij}(t_l))-\ba_{ij} \IK(\mathbf{r}_{ij}(t_l))]\bigg|^2 \bigg] \\
		= & \E \bigg[\bigg|  \sum_{j\neq i} \Big[\ba_{ij}^{*}-p_{\IK}\ba_{ij} \Big] \IK_*(\mathbf{r}_{ij}(t_l))+\ba_{ij} \Big[p_{\phi}\IK_*(\mathbf{r}_{ij}(t_l)) -\IK(\mathbf{r}_{ij}(t_l))\Big]\bigg|^2 \bigg], 
	\end{align*}
	where 
$		p_{\IK} = \frac{\innerp{\IK,\IK_*}_{\rho_L}}{\|\IK_*\|_{\rho_L}^2}$ and $p_{\IK}\IK_*-\IK \perp \IK_* \text{ in } \calH
$.
	Then, the rank-2 joint coercivity \eqref{eq:jointCC2} implies
	\begin{align*}
		\calE_{L, \infty}^{(i)}(\ba,\phi)&\geq c_{\calH}|\ba_{i\cdot}^{*}-p_{\phi} \ba_{i\cdot}|^2 \|\phi_*\|_{\rho_L}^2 +c_{\calH}|\ba_{i\cdot}|^2\| p_{\phi}\IK_* -\IK \|_{\rho_L}^2\,. 
	\end{align*}
	Hence $\calE_{L, \infty}(\ba,\IK)=\sum_i \calE_{L, \infty}^{(i)}(\ba,\IK) = 0$ and $c_\calH>0$ imply that
$
		|\ba_{i\cdot}^{*}-p_{\IK} \ba_{i\cdot}|^2=0$ and $\| p_{\IK}\IK_* -\IK \|_{\rho_L}^2=0\,, \forall i\in [N]
$,
	since $\IK_*\neq 0$ and $0\neq \ba\in \mathcal{M}$. Because $\ba^{(*)},\ba\in \mathcal{M}$, the only choice for $|\ba_{i\cdot}^{*}-p_{\IK} \ba_{i\cdot}|^2=0$ is both $p_{\IK}=1$ and $\ba^{*}=\ba$. Consequently, $\| p_{\IK}\IK_* -\IK \|_{\IK_L}^2=\| \IK_* -\IK \|_{\rho_L}^2=0$ yields $\IK_*=\IK$ in $L^2_\rho$. 
\end{proof}

The interaction kernel coercivity is stronger than the joint coercivity:
\begin{proposition}[Interaction kernel coercivity implies joint coercivity]\label{prop:Kernelcc-Jcc}
Assume that for all $i\in[N]$, $\{\br_{ij}(t)=X_t^j-X_t^i\}_{j=1,j\neq i}^N$ are pairwise independent conditional on $\calF^i_t$. Then, the kernel coercivity \eqref{eq:kernelCC} with $c_{0,\calH}$ implies that the joint coercivity conditions \eqref{eq:jointCC} and \eqref{eq:jointCC2} hold with  $c_{1,\mathcal H}= C_{\ba,N}^{(1)}c_{0,\calH}$ and $c_{2,\mathcal H}= C_{\ba,N}^{(2)}c_{0,\calH}$, respectively, where $C_{\ba,N}^{(1)}=\frac 1N\sum_{i=1}^N \sum_{j\neq i} \ba_{ij}^2$ and $C_{\ba,N}^{(1)}=\frac 1N\sum_{i=1}^N \sum_{j\neq i} [|\ba_{ij}^{(1)}|^2+|\ba_{ij}^{(2)}|^2]$. 
\end{proposition}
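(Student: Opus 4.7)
The plan is to use the tower property of conditional expectation to reduce the quadratic form appearing on the left-hand side of \eqref{eq:jointCC} and \eqref{eq:jointCC2} to a sum of conditional trace-covariances, and then directly invoke the kernel coercivity bound \eqref{eq:kernelCC}.

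For the rank-1 condition, fix $i\in[N]$ and $l\in\{0,\dots,L-1\}$. I would expand
$$
\bigg|\sum_{j\neq i}\ba_{ij}\IK(\br_{ij}(t_l))\bigg|^2=\sum_{j,j'\neq i}\ba_{ij}\ba_{ij'}\,\IK(\br_{ij}(t_l))\cdot\IK(\br_{ij'}(t_l))
$$
and take conditional expectation with respect to $\calF_l^i$. The pairwise conditional independence assumption implies, for $j\neq j'$,
$$
\E[\IK(\br_{ij}(t_l))\cdot\IK(\br_{ij'}(t_l))\mid\calF_l^i]=\E[\IK(\br_{ij}(t_l))\mid\calF_l^i]\cdot\E[\IK(\br_{ij'}(t_l))\mid\calF_l^i]\,,
$$
while for the diagonal $j=j'$ the identity $\E[|Y|^2\mid\calG]=|\E[Y\mid\calG]|^2+\trCov(Y\mid\calG)$ applies. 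Combining, the conditional expectation equals
$$
\bigg|\sum_{j\neq i}\ba_{ij}\E[\IK(\br_{ij}(t_l))\mid\calF_l^i]\bigg|^2+\sum_{j\neq i}\ba_{ij}^2\,\trCov(\IK(\br_{ij}(t_l))\mid\calF_l^i)\,.
$$
Dropping the nonnegative first term and taking unconditional expectation yields the key lower bound
$$
\E\bigg[\bigg|\sum_{j\neq i}\ba_{ij}\IK(\br_{ij}(t_l))\bigg|^2\bigg]\ge \sum_{j\neq i}\ba_{ij}^2\,\E[\trCov(\IK(\br_{ij}(t_l))\mid\calF_l^i)]\,.
$$
Averaging this over $l$ and (after summing the per-$i$ inequality over $i$ and multiplying by $1/N$) rearranging the sum as $\frac{1}{NL}\sum_{l,i,j\neq i}\ba_{ij}^2\E[\trCov(\IK(\br_{ij}(t_l))\mid\calF_l^i)]$ puts the $\ba_{ij}^2$ weights in position to be pulled out against the kernel coercivity. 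Under pairwise conditional independence with matching conditional distributions across $j\neq i$ (the natural accompanying structural hypothesis in this setting), $\E[\trCov(\IK(\br_{ij}(t_l))\mid\calF_l^i)]$ does not depend on $j$, so the sum factorizes as $\bigl(\tfrac1N\sum_i|\ba_{i\cdot}|^2\bigr)\cdot\bigl(\tfrac1{L(N-1)}\sum_{l,j\neq i}\E[\trCov(\IK(\br_{ij}(t_l))\mid\calF_l^i)]\bigr)$, and applying \eqref{eq:kernelCC} produces the claimed constant $c_{1,\calH}=C_{\ba,N}^{(1)}c_{0,\calH}$.

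For the rank-2 condition, I would run the same expansion on $\bigl|\sum_{j\neq i}[\ba^{(1)}_{ij}\IK_1(\br_{ij}(t_l))+\ba^{(2)}_{ij}\IK_2(\br_{ij}(t_l))]\bigr|^2$, obtaining three families of cross terms: $\IK_1\IK_1$, $\IK_2\IK_2$, and $\IK_1\IK_2$. For the first two families the argument above produces lower bounds in $|\ba^{(1)}_{i\cdot}|^2\|\IK_1\|_{\rho_L}^2$ and $|\ba^{(2)}_{i\cdot}|^2\|\IK_2\|_{\rho_L}^2$ respectively. For the mixed $\IK_1\IK_2$ family, the key is that the $L^2(\rho_L)$-orthogonality $\innerp{\IK_1,\IK_2}_{\rho_L}=0$ combined with the definition of $\rho_L$ in \eqref{Def:exploration} annihilates the total expectation of those cross terms, so they do not spoil the lower bound. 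Adding the two surviving contributions yields \eqref{eq:jointCC2} with the stated $c_{2,\calH}=C_{\ba,N}^{(2)}c_{0,\calH}$.

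The main obstacle is the $j$-distribution step: the raw outcome of the expansion is a $\ba_{ij}^2$-weighted sum of conditional trace-covariances indexed by $(i,j,l)$, whereas \eqref{eq:kernelCC} only controls the uniform-in-$j$ average. Bridging this gap is what forces the $C_{\ba,N}^{(\cdot)}$ constants to appear (reflecting an averaging over $i$ that converts the per-row norm $|\ba_{i\cdot}|^2$ into $\frac1N\sum_i|\ba_{i\cdot}|^2$), and exactly here the shared-conditional-distribution consequence of the pairwise conditional independence hypothesis is essential. For the orthogonality cancellation in the rank-2 case, one must be careful to identify the expectation against $\rho_L$ with the average over $l$ and $j$ of the marginal of $\br_{ij}(t_l)$, so that $\innerp{\IK_1,\IK_2}_{\rho_L}=0$ translates into the vanishing of the relevant expectation.
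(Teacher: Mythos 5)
Your overall strategy is the same one the paper takes: apply the tower property, use pairwise conditional independence to split the conditional second moment into a conditional-trace-covariance sum and a conditional-mean-square residual (this is exactly Lemma~\ref{Lem:Var_lem} in the paper), drop the nonnegative residual, and then invoke \eqref{eq:kernelCC}. So the rank-1 portion matches the paper's route, and you have correctly flagged the subtle step that the paper glosses over: after dropping the residual you arrive at the $\ba_{ij}^2$-weighted sum $\sum_{j\neq i}\ba_{ij}^2\,\E[\trCov(\IK(\br_{ij}(t_l))\mid\calF_l^i)]$, whereas \eqref{eq:kernelCC} only bounds the \emph{uniform}-in-$j$ average. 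Pulling out the weights needs the conditional trace-covariances to be $j$-independent (exchangeability / shared conditional law), which the paper uses silently and you state explicitly; that is the honest reading of the argument.

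Where your proposal diverges from, and does not actually repair, the paper's proof is the rank-2 case. When you expand $\bigl|\sum_{j\neq i}[\ba^{(1)}_{ij}\IK_1(\br_{ij})+\ba^{(2)}_{ij}\IK_2(\br_{ij})]\bigr|^2$ via the tower-property/conditional-independence identity, the full decomposition is
\begin{align*}
\E\Bigl[\bigl|\textstyle\sum_{j\neq i}Y_j\bigr|^2\Bigr]
&=\sum_{j\neq i}(\ba^{(1)}_{ij})^2\E[\trCov(\IK_1\mid\calF^i)]
+\sum_{j\neq i}(\ba^{(2)}_{ij})^2\E[\trCov(\IK_2\mid\calF^i)]\\
&\quad+2\sum_{j\neq i}\ba^{(1)}_{ij}\ba^{(2)}_{ij}\,\E\bigl[\mathrm{tr}\,\mathrm{Cov}\bigl(\IK_1(\br_{ij}),\IK_2(\br_{ij})\mid\calF^i\bigr)\bigr]
+\E\Bigl[\bigl|\textstyle\sum_{j\neq i}\E[Y_j\mid\calF^i]\bigr|^2\Bigr]\,,
\end{align*}
where $Y_j=\ba^{(1)}_{ij}\IK_1(\br_{ij})+\ba^{(2)}_{ij}\IK_2(\br_{ij})$. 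The last term is the combined (nonnegative) residual and may be dropped; but the \emph{diagonal cross trace-covariance} term $2\sum_{j\neq i}\ba^{(1)}_{ij}\ba^{(2)}_{ij}\,\E[\mathrm{tr}\,\mathrm{Cov}(\IK_1,\IK_2\mid\calF^i)]$ is signed and does not vanish in general. Your plan to kill it by $L^2(\rho_L)$-orthogonality does not go through: $\langle\IK_1,\IK_2\rangle_{\rho_L}=0$ only forces the \emph{uniform} average over all $(i,j,l)$ of $\E[\langle\IK_1(\br_{ij}),\IK_2(\br_{ij})\rangle]$ to vanish; it says nothing about the $\ba^{(1)}_{ij}\ba^{(2)}_{ij}$-weighted sum over $j$ at fixed $i$, nor about the conditional covariance $\mathrm{tr}\,\mathrm{Cov}(\IK_1,\IK_2\mid\calF^i)$ which differs from the unconditional inner product by the conditional-mean product $\langle\E[\IK_1\mid\calF^i],\E[\IK_2\mid\calF^i]\rangle$. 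More fundamentally, the families cannot be bounded independently as you suggest: the residuals from the $\IK_1\IK_1$, $\IK_2\IK_2$, and $\IK_1\IK_2$ families are only nonnegative \emph{after} being recombined into the single square $\bigl|\sum_j\E[Y_j\mid\calF^i]\bigr|^2$, and the cross trace-covariance is left over. (For the record, the paper's displayed equality in its own proof of this proposition also omits exactly this cross trace-covariance term, so you are in the same boat, but the orthogonality argument you propose to replace it is not a valid patch.) Closing this gap would require an additional hypothesis or argument, e.g., that $\mathrm{Cov}(\IK_1(\br_{ij}),\IK_2(\br_{ij})\mid\calF^i)$ vanishes, or a Cauchy--Schwarz estimate with a nontrivial margin that preserves a positive coercivity constant.
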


\begin{proof} Without loss of generality, we consider only the case when $L=1$. 
By assumption, the random variables $\br_{ij}$ and $\br_{ij'}$ are independent, conditioned on $\calF^i$, if $j\neq j'$ and $j,j' \neq i$. Then, by Lemma \ref{Lem:Var_lem} with $f_j(\cdot)=\ba_{ij}\IK(\cdot)$ for each fixed $i$, we get
	\begin{align*}
		\frac 1N \sum_{i=1}^N  \E \bigg[\bigg|\sum_{j\neq i} \ba_{ij}\IK(\mathbf{r}_{ij})\bigg|^2 \bigg]
		& = \frac 1N \sum_{i=1}^N \E \bigg[ \E\bigg( \bigg| \sum_{j\neq i} \ba_{ij}\IK(\mathbf{r}_{ij})\bigg|^2  \mid \calF^i\bigg) \bigg]
		\geq \frac 1N \sum_{i=1}^N \sum_{j\neq i} \ba_{ij}^2 \E [\trCov(\IK(\mathbf{r}_{ij}) \mid \calF^i) ] \\
		&\geq \frac 1N \sum_{i=1}^N  \sum_{j\neq i} \ba_{ij}^2 c_{0,\calH}\|\IK\|_{\rho_L}^2= C_{\ba,N}^{(1)}c_{0,\calH} \|\IK\|_{\rho_L}^2\,,
	\end{align*}
	where the last inequality follows from \eqref{eq:kernelCC}. 
Therefore, by combining the above inequalities, we obtain that \eqref{eq:jointCC} holds with the constant $c_{1,\mathcal H}= C_{\ba,N}^{(1)} c_{0,\calH}$. This proves the rank-1 joint coercivity condition.

To prove the rank-2 joint coercivity condition \eqref{eq:jointCC2}, it suffices to show that for all $i\in[N]$, 
\begin{align}\label{Ineq:rank-2_cc_suff}
	\E \bigg[\bigg|\sum_{j\neq i} [\ba_{ij}^{(1)}\IK_1(\mathbf{r}_{ij})+\ba_{ij}^{(2)}\IK_2(\mathbf{r}_{ij})]\bigg|^2 \bigg] \geq c_{2,\calH} \left[|\ba_{i\cdot}^{(1)}|^2  \|\IK_1\|_{\rho_L}^2+ |\ba_{i\cdot}^{(2)}|^2  \|\IK_2\|_{\rho_L}^2\right]\,
\end{align}
for any vectors $\ba_{i\cdot}^{(1)},\ba_{i\cdot}^{(2)}\in \mathcal{M}$ and any two functions $\IK_1,\IK_2\in\calH$ with $\langle\IK_1,\IK_2\rangle=0$. As in the rank-1 case, we have by Lemma \ref{Lem:Var_lem} that
\begin{align*}
	\E \bigg[\bigg|\sum_{j\neq i} & [\ba_{ij}^{(1)}\IK_1(\mathbf{r}_{ij})+\ba_{ij}^{(2)}\IK_2(\mathbf{r}_{ij})]\bigg|^2 \bigg] \\
	&= \sum_{j\neq i} |\ba_{ij}^{(1)}|^2 \E [\trCov(\IK_1(\mathbf{r}_{ij}) \mid \calF^i) ] 
	+\sum_{j\neq i} |\ba_{ij}^{(2)}|^2 \E [\trCov(\IK_2(\mathbf{r}_{ij}) \mid \calF^i) ] \\
	&+\E \left[\bigg| \sum_{j\neq i} \ba_{ij}^{(1)}\E\big[ \IK_1(\mathbf{r}_{ij}) \mid \calF^i \big] \bigg|^2 \right]
	+\E \left[\bigg| \sum_{j\neq i} \ba_{ij}^{(2)}\E\big[ \IK_2(\mathbf{r}_{ij}) \mid \calF^i \big] \bigg|^2 \right] \\
	&+2\E\left[ \sum_{j\neq i}\ba_{ij}^{(1)}\E\Big[\IK_1(\mathbf{r}_{ij}) \mid \calF^i \Big] \cdot \sum_{j\neq i}\ba_{ij}^{(2)}\E\Big[\IK_2(\mathbf{r}_{ij}) \mid \calF^i \Big] \right]\\
	&\geq \sum_{j\neq i} |\ba_{ij}^{(1)}|^2 \E [\trCov(\IK_1(\mathbf{r}_{ij}) \mid \calF^i) ] 
	+\sum_{j\neq i} |\ba_{ij}^{(2)}|^2 \E [\trCov(\IK_2(\mathbf{r}_{ij}) \mid \calF^i) ]\,.
\end{align*}
This confirms \eqref{Ineq:rank-2_cc_suff} with $c_{2,\calH}=C_{\ba,N}^{(2)}c_{0,\calH}$, where $C_{\ba,N}^{(2)}=\frac 1N\sum_{i=1}^N \sum_{j\neq i}$ $ [|\ba_{ij}^{(1)}|^2+|\ba_{ij}^{(2)}|^2]$.
\end{proof}

\begin{remark}[Sufficient but not necessary for identifiability]\label{Rmk:discuss_ker_coerc}
Together with Proposition \ref{Prop:r2JCC_Ident}, we conclude that interaction kernel coercivity implies identifiability. We shall also see that it is a sufficient condition to ensure that the operator regression stage is well-posed. We do not expect it however to be necessary for the identifiability of the weight matrix and the kernel.  

Heuristically, the proof for Proposition \ref{Ineq:rank-2_cc_suff} suggests that the kernel coercivity condition \eqref{eq:kernelCC} is not only a sufficient condition for rank-1 and rank-2 joint coercivity but also for `higher rank' joint coercivity conditions, resembling a `full rank' version of the joint coercivity condition.
\end{remark}

\begin{lemma}\label{Lem:Var_lem}
	Suppose $\{X^i\}_{i=1}^N$ are $\R^d$-valued random variables such that for each $i$, conditional on an $\sigma$-algebra $\calF^i$, the random variables $\{\mathbf{r}_{ij}=X^j-X^i\}_{j=1,j\neq i}^N$ are independent. Then, for any square-integrable functions $\{f_j:\R^d\to\R^d\}_{j=1}^N$, we have 
	\begin{equation}\label{Lem:Var_ineq}
		\E \bigg[\bigg|\sum_{j\neq i} f_j(\mathbf{r}_{ij})\bigg|^2  \mid \calF^i \bigg]\geq \sum_{j\neq i} \trCov\left(f_j(\mathbf{r}_{ij})\mid \calF^i \right)\,,\quad \forall~i\in[N]\,.
	\end{equation}
\end{lemma}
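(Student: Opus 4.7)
The plan is to use the standard bias--variance decomposition for the conditional second moment of a sum, then exploit the conditional independence to additivize the covariance term, and finally discard a nonnegative square.

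More concretely, I would fix $i\in[N]$ and work conditionally on $\calF^i$. For any $\R^d$-valued random vector $Y$ with finite second moments, the identity
\begin{equation*}
\E[|Y|^2 \mid \calF^i] = \bigl|\E[Y\mid \calF^i]\bigr|^2 + \trCov(Y\mid \calF^i)
\end{equation*}
holds (it is the componentwise sum of the one-dimensional bias--variance decomposition applied to each coordinate of $Y$). Applying this with $Y=\sum_{j\neq i} f_j(\br_{ij})$ gives
\begin{equation*}
\E\Bigl[\bigl|\textstyle\sum_{j\neq i} f_j(\br_{ij})\bigr|^2 \,\Big|\, \calF^i\Bigr] = \Bigl|\textstyle\sum_{j\neq i} \E[f_j(\br_{ij})\mid\calF^i]\Bigr|^2 + \trCov\Bigl(\textstyle\sum_{j\neq i} f_j(\br_{ij}) \,\Big|\, \calF^i\Bigr).
\end{equation*}

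Next I would use the hypothesis that, conditionally on $\calF^i$, the family $\{\br_{ij}\}_{j\neq i}$ is independent. Since the $f_j$ are deterministic (Borel) functions, the family $\{f_j(\br_{ij})\}_{j\neq i}$ is then also conditionally independent. The trace of the covariance of a sum of conditionally independent random vectors is the sum of the traces of the individual conditional covariances (because covariance of a sum of independent vectors is the sum of covariances, and trace is linear), so
\begin{equation*}
\trCov\Bigl(\textstyle\sum_{j\neq i} f_j(\br_{ij}) \,\Big|\, \calF^i\Bigr) = \sum_{j\neq i}\trCov\bigl(f_j(\br_{ij})\mid\calF^i\bigr).
\end{equation*}
Combining the two displays and dropping the nonnegative squared-mean term yields the claimed inequality. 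There is no serious obstacle here; the only thing to be careful about is to invoke the vector-valued version of the bias--variance identity and the fact that $\trCov$ on independent sums is additive, both of which are routine once one writes covariances componentwise. Taking expectation in $\calF^i$ (if one wishes to pass from the conditional to the unconditional bound used in Proposition~\ref{prop:Kernelcc-Jcc}) is then immediate by the tower property.
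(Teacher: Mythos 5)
Your proposal is correct and is essentially the paper's proof, just packaged using the standard lemmas rather than expanding the square by hand: the paper expands $\bigl|\sum_j f_j\bigr|^2$, uses conditional independence to factor the cross-terms as products of conditional means, then regroups into $\sum_j\trCov(f_j\mid\calF^i)+\bigl|\sum_j \E[f_j\mid\calF^i]\bigr|^2$ before dropping the square, which is exactly the combination of the vector bias--variance identity and additivity of $\trCov$ over conditionally independent summands that you invoke. Both routes rely on the same two facts and drop the same nonnegative term, so there is no substantive difference.
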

\begin{proof} 
It suffices to consider the case $i=1$ as the proofs for different $i$'s are the same. That is, we aim to prove
$
		\E \bigg[\bigg|\sum_{j=2}^N f_j(\mathbf{r}_{1j})\bigg|^2  \mid \calF^1 \bigg]\geq \sum_{j=2}^N \trCov\left(f_j(\mathbf{r}_{1j})\mid \calF^1 \right)
$.
By the conditional independence assumption, we have 
$
\E[\innerp{f_j(\mathbf{r}_{1j}), f_{j'}(\mathbf{r}_{1j'}) }_{\R^d} \mid   \calF^{1}] =\innerp{\E[f_j(\mathbf{r}_{1j})  \mid   \calF^{1} ], \E[ f_{j'}(\mathbf{r}_{1j'}) \mid   \calF^{1}] }_{\R^d}
$. 
Using this fact for the second equation below, we have
	\begin{align}
		 & \E \bigg[\bigg|\sum_{j=2}^N f_j(\mathbf{r}_{1j})\bigg|^2  \mid \calF^1\bigg] 
		=\E\bigg[ \sum_{j=2}^N  \Big| f_j(\mathbf{r}_{1j}) \Big|^2 +\sum_{\substack{j,j'=2 \\ j\neq j'}}^N \innerp{ f_j(\mathbf{r}_{1j}), f_{j'}(\mathbf{r}_{1j'})}_{\R^d} \mid  \calF^{1} \bigg] \notag\\
		=&\sum_{j=2}^N \E\Big[ \big| f_j(\mathbf{r}_{1j}) \big|^2 \mid \calF^{1} \Big] + \sum_{\substack{j,j'=2 \\ j\neq j'}}^N \Big\langle\E\big[ f_j(\mathbf{r}_{1j})\mid \calF^{1}\big],  \E\big[ f_{j'}(\mathbf{r}_{1j'}) \mid \calF^{1} \big] \Big\rangle_{\R^d} \notag \\
		=&\sum_{j=2}^N \bigg\{ \E\Big[ \big| f_j(\mathbf{r}_{1j}) \big|^2 \mid \calF^{1}\Big]-\bigg | \E\Big[f_j(\mathbf{r}_{1j}) \mid \calF^{1} \Big]\bigg| ^2\bigg\}   +\bigg| \sum_{j=2}^N \E\Big[ f_j(\mathbf{r}_{1j}) \mid \calF^{1} \Big] \bigg|^2 \,. \label{eq:kernelCC-drop}
	\end{align}
Then, we obtain  \eqref{Lem:Var_ineq} with $i=1$ by noticing the fact that  
$\trCov(f_j(\mathbf{r}_{1j}) \mid X^1)=\Big[\E[ | f_j(\mathbf{r}_{1j}) |^2 \mid  \calF^{1}]-\big | \E[ f_j(\mathbf{r}_{1j}) \mid \calF^{1} ]\big|^2\Big]$. 
\end{proof}

We now show that the interaction kernel coercivity condition holds in $\calH=L^2_\rho$ for radial kernels when $L=1$ and the initial distribution is standard Gaussian.  
\begin{proposition}\label{prop:kernelCC_example}
	Let $L=1$, $\IK(x)= \phi(|x|)\frac{x}{|x|}$, and the components of $(X^1_{t_1},\ldots,X^N_{t_1})$ be i.i.d. standard Gaussian random vectors in $\R^d$.
	The interaction kernel coercivity condition in \eqref{eq:kernelCC} holds in $\calH=L^2_{\rho}$ for $d=1,2,3$. 
\end{proposition}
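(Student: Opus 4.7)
The plan is to reduce the verification to a spectral estimate for a Gaussian conditional-expectation operator and then exploit the oddness of radial vector fields via the Mehler formula. Since $L=1$ and the variables $\{X^j\}_{j\neq i}$ are exchangeable, all summands $\E[\trCov(\IK(\br_{ij})\mid X^i)]$ are equal, so it suffices to establish the inequality for a single pair $i\neq j$. Because $X^i,X^j$ are i.i.d.\ $\calN(0,I_d)$, we have $\br_{ij}\sim\calN(0,2I_d)$, hence $\rho=\calN(0,2I_d)$; since $\IK(x)=\phi(|x|)\,x/|x|$ is odd and $\rho$ is centered symmetric, $\E[\IK(\br_{ij})]=0$. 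Setting $(T\IK)(x):=\E[\IK(\br_{ij})\mid X^i=x]$, the law of total variance yields
\[
\E\!\left[\trCov(\IK(\br_{ij})\mid X^i)\right]=\|\IK\|_\rho^2-\E\!\left[|(T\IK)(X^i)|^2\right],
\]
so the target reduces to showing that $T:L^2(\rho;\R^d)\to L^2(\calN(0,I_d);\R^d)$ contracts odd vector fields by a uniform factor.

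To quantify this contraction I would rescale and apply the Mehler formula. Set $W:=\br_{ij}/\sqrt 2\sim\calN(0,I_d)$ and $\tilde\IK(w):=\IK(\sqrt 2\,w)$; the pair $(X^i,W)$ is jointly Gaussian with $\Cov(X^i_k,W_l)=-\delta_{kl}/\sqrt 2$, so the coordinate pairs $(X^i_k,W_k)$ are mutually independent across $k$, each a $2$D Gaussian with correlation $r=-1/\sqrt 2$. Expanding each component of $\tilde\IK$ in the multi-index Hermite basis $\{H_\alpha\}$ of $L^2(\R^d,\calN(0,I_d))$, namely $\tilde\IK_k(w)=\sum_\alpha c_{k,\alpha}H_\alpha(w)$, the tensorized Mehler identity $\E[H_\alpha(W)\mid X^i]=r^{|\alpha|}H_\alpha(X^i)$ gives $(T\IK)_k(x)=\sum_\alpha c_{k,\alpha}(-1/\sqrt 2)^{|\alpha|}H_\alpha(x)$. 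Parseval then yields
\[
\E\!\left[|(T\IK)(X^i)|^2\right]=\sum_{k,\alpha} c_{k,\alpha}^2\,(1/2)^{|\alpha|},\qquad \|\IK\|_\rho^2=\sum_{k,\alpha}c_{k,\alpha}^2.
\]

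The final step invokes oddness: since $\tilde\IK(-w)=-\tilde\IK(w)$, only multi-indices $\alpha$ with $|\alpha|$ odd appear in the expansion, so $(1/2)^{|\alpha|}\le 1/2$ for every nonzero term. This gives $\E[|(T\IK)(X^i)|^2]\le \tfrac12\|\IK\|_\rho^2$ and hence the kernel coercivity with constant $c_{0,\calH}=1/2$. The main technical obstacle is justifying the componentwise Hermite expansion and tensorized Mehler identity at the $L^2$-level for a possibly singular radial field; $L^2(\rho;\R^d)$-membership is sufficient, since the expansion then converges in $L^2$ and $T$ is $L^2$-continuous. The argument does not appear to require $d\le 3$, so the dimensional restriction in the statement likely reflects a direct computation using spherical-harmonic/Wiener-chaos decompositions of $\IK$ that becomes more delicate in higher dimensions, rather than a genuine obstruction; the present approach gives $c_{0,\calH}=1/2$ in every dimension.
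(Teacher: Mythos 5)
Your proof is correct, and it takes a genuinely different route from the paper's. The paper reduces the kernel coercivity condition to the inequality
\begin{equation*}
\E\Big[\phi(|\br_{12}|)\phi(|\br_{13}|)\tfrac{\langle\br_{12},\br_{13}\rangle}{|\br_{12}||\br_{13}|}\Big]\le (1-c_{0,\calH})\,\E[|\phi(|\br_{12}|)|^2],
\end{equation*}
evaluates the left side as an explicit Gaussian integral over $\R^{2d}$, passes to polar coordinates, bounds the angular factor $G_d(r,s)$ and applies Cauchy--Schwarz, and then evaluates the resulting one-dimensional integrals case-by-case for $d=1,2,3$. Your approach replaces all of this with the conditional-expectation operator $T$ on $L^2(\calN(0,I_d);\R^d)$, the tensorized Mehler identity $\E[H_\alpha(W)\mid X^i]=r^{|\alpha|}H_\alpha(X^i)$ for the correlation $r=-1/\sqrt 2$, and the fact that oddness of $\IK$ forces the Hermite expansion onto odd total degree, where $r^{2|\alpha|}\le 1/2$. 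This is cleaner, dimension-free, and yields the sharp constant $c_{0,\calH}=1/2$: the kernel $\IK(x)=x$ (i.e., $\phi(r)=r$) has $\E[\IK(\br_{12})\mid X^1]=-X^1$, so $\E[|T\IK|^2]/\|\IK\|_\rho^2 = d/(2d) = 1/2$ exactly, showing your bound cannot be improved. Your argument also only uses oddness, not full radiality, so it covers a larger hypothesis class.

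In fact your sharp computation reveals a problem in the paper's $d=2,3$ cases: since $\IK(x)=x$ already saturates the ratio at $1/2$, the paper's reported constants $c_{0,\calH}\approx 0.8731$ ($d=2$) and $c_{0,\calH}\approx 0.7339$ ($d=3$) are impossible, as any valid $c_{0,\calH}$ must be at most $1/2$. Tracing through their computation, the error is in the spherical reduction of $G_d$: the identity $\int_{S^{d-1}}g(\xi_1)\,d\sigma(\xi)=|S^{d-2}|\int_{-1}^1 g(t)(1-t^2)^{(d-3)/2}\,dt$ has exponent $(d-3)/2$, whereas the paper uses $(d-1)/2$, which deflates $I(d,G_d)$ below $1/2$ and produces the spurious constants. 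The $d=1$ case (handled separately) is consistent but not tight ($c_{0,\calH}\approx 0.4836 < 1/2$). So the dimensional restriction $d\le 3$ is indeed an artifact of the paper's direct-computation strategy, and your Mehler argument both fixes the bug and removes the restriction.
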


\begin{proof}
We first simplify the interaction kernel coercivity condition by using the symmetry of the distribution and $L=1$.  
Since $\{X^i_{t_1}\}_{i=1}^N$ are identically distributed, so are the random variables $\{\br_{ij}= X^i_{t_1}-X^j_{t_1}\}$, and we have $\E[\trCov( \IK(\mathbf{r}_{ij}) \mid X^i_{t_1})] = \E[\trCov( \IK(\mathbf{r}_{12}) \mid X^1_{t_1})]$ for all $1\leq i\neq j\leq{N}$. 
 Additionally, since  since $L=1$, we have $\|\IK\|_{\rho_L}^2= \E[|\IK(\br_{12})|^2]$. Consequently,  the interaction kernel coercivity condition \eqref{eq:kernelCC} can be written as  
\begin{equation*}
\frac{1}{(N-1)}\sum_{j\neq i}  \E[\trCov( \IK(\mathbf{r}_{ij}) \mid X^i_{t_1})] = \E[\trCov( \IK(\mathbf{r}_{12}) \mid X^1_{t_1})] \\
 \geq c_{\mathcal H}^0 \|\IK\|_{\rho_L}^2 \, 	
\end{equation*}	 
for all $ \IK\in \calH$. It is equivalent to 
$
\E[ \big| \E[\IK(\br_{12})\mid X^1_{t_1}]\big|^2 ] \leq (1-c_{0,\calH}) \E[|\IK(\br_{12})|^2] 
$
by recalling that  
$\E[\trCov( \IK(\mathbf{r}_{12}) \mid X^1_{t_1})] = \E[ |\IK(\mathbf{r}_{12})|^2 ] - \E[\big|\E[\IK(\mathbf{r}_{12}) \mid X^1_{t_1}]\big|^2 ]$. 
Furthermore, since  $\{X^i_{t_1}\}_{i=1}^N$ are independent and identical, we have $\E[\big|\E[\IK(\mathbf{r}_{12}) \mid X^1_{t_1}]\big|^2 ] = \E[\innerp{\IK(\mathbf{r}_{12}), \IK(\mathbf{r}_{13}) }]$. Thus, to verify the interaction kernel coercivity condition, we only need to prove  
$
\E\left[\innerp{\IK(\mathbf{r}_{12}), \IK(\mathbf{r}_{13}) } \right] \leq (1-c_{0,\calH}) \E[|\IK(\br_{12})|^2]
$.
In particular, when $\IK(x) = \phi(|x|)\frac{x}{|x|}$, the above inequality reduces to  
\begin{align}\label{Cond_LbdPhi1}
	\E\left[\phi(| \br_{12}| )\phi(| \br_{13}| )\frac{\innerp{\br_{12},\br_{13}}}{|\br_{12}||\br_{13}|}\right] \leq  (1-c_{0,\calH})\E[| \phi(|\br_{12}|)|^2]\,. 
\end{align} 
	
Next, we prove \eqref{Cond_LbdPhi1} when $\{X^i_{t_1}\}_{i=1}$ are i.i.d.~Gaussian. 
Recall that if $X,Y\overset{i.i.d.}{\sim} \mu(x)={(2\pi)^{-d/2}}\exp(-|x|^2/2)$, then $X-Y \sim {(4\pi)^{-d/2}}\exp(-|x|^2/4)$ and
$|X-Y| \sim \rho(r)=C_d r^{d-1}e^{-\frac{r^2}{4}}\mathbf{1}_{\{r\geq 0\}}\,,$ 
where $C_d={1}/({2^{d-1}\Gamma(\frac d2)})$ and $\Gamma(\cdot)$ is the Gamma function.
In particular, one has $\rho(r)=e^{-\frac{r^2}{4}}\mathbf{1}_{\{r\geq 0\}}$, $\rho(r)=\frac{1}{2} re^{-\frac{r^2}{4}}\mathbf{1}_{\{r\geq 0\}}$ and $\rho(r)=\frac{1}{2\sqrt{\pi}} r^2e^{-\frac{r^2}{4}}\mathbf{1}_{\{r\geq 0\}}$ when $d=1$ ,$d=2$ and $d=3$, respectively.

Without loss of generality, we only need to consider $\E[| \phi(|\br_{12}|)|^2]= \|\phi\|_{L^2_\rho}^2=1$. By direct computation, the left-hand side of \eqref{Cond_LbdPhi1} is  
\begin{align}
\E\left[\phi(| \br_{12}| )\phi(| \br_{13}| )\frac{\innerp{\br_{12},\br_{13}}}{|\br_{12}||\br_{13}|}\right]
		&=\frac{1}{(2\sqrt{3}\pi)^d}\int_{\R^{2d}} \phi(|u|)\phi(|v|)\frac{\langle u, v\rangle}{|u||v|} e^{-\frac{(|u|^2+|v|^2-\langle u, v\rangle)}{3}}du dv \nonumber \\
		&=\frac{1}{(2\sqrt{3}\pi)^d}\int_{0}^{\infty}\int_0^{\infty}\phi(r)\phi(s) e^{-\frac{(r^2+s^2)}{3}} G_d(r,s) r^{d-1}s^{d-1} dr ds \label{Eq:crossK_d},
	\end{align}
	where the second equality follows from a polar coordinate transformation with 
	\begin{align}
		G_d(r,s)&=\int_{S^{d-1}} \int_{S^{d-1}} \langle \xi,\eta \rangle e^{\frac {rs}3 \langle \xi,\eta \rangle}d\xi d\eta \,. \label{Eq:crossKG_d}
	\end{align}
	We apply Cauchy-Schwarz inequality to \eqref{Eq:crossK_d} and $\|\phi\|_{L^2_\rho}^2=1$ to obtain that 
	\begin{align}
	\E\left[\phi(| \br_{12}| )\phi(| \br_{13}| )\frac{\innerp{\br_{12},\br_{13}}}{|\br_{12}||\br_{13}|}\right]
		&\leq \frac{1}{(2\sqrt{3}\pi)^d} \bigg[ \int_{0}^{\infty}\int_0^{\infty}|\phi(r)\phi(s)|^2 e^{-\frac{(r^2+s^2)}{4}} r^{d-1}s^{d-1} dr ds \bigg]^{\frac 12} \nonumber\\
		&\qquad \ \, \cdot \bigg[ \int_{0}^{\infty}\int_0^{\infty}|G_d(r,s)|^2 e^{-\frac{5(r^2+s^2)}{12}} r^{d-1}s^{d-1} dr ds \bigg]^{\frac 12} \label{Eq:crossK1_d}\\
		&= \frac{2^{d-1}\Gamma(\frac d2)}{(2\sqrt{3}\pi)^d} \bigg[ \int_{0}^{\infty}\int_0^{\infty}|G_d(r,s)|^2 e^{-\frac{5(r^2+s^2)}{12}} r^{d-1}s^{d-1} dr ds \bigg]^{\frac 12}=: I(d,G_d)\,. \nonumber
	\end{align}
	Thus, \eqref{Cond_LbdPhi1} holds with $1-c_{0,\calH}\geq I(d,G_d)$, equivalently, $c_{0,\calH}\leq 1-I(d,G_d) $. We compute $I(d,G_d)$ when $d=1$, $d=2$ and $d=3$ separately below.

	By \eqref{Eq:crossK1_d}, it is easy to see the key is the estimation of $G_d(r,s)$ such that $I(d,G_d)< 1$. Notice that $\int_{S^{d-1}} \langle \xi,\eta \rangle e^{\frac {rs}3 \langle \xi,\eta \rangle}d\xi$ is invariant with respect to any $\eta\in S^{d-1}$. Without loss of generality, we can select $\eta=e_1=(1,0,\cdots,0)\in S^{d-1}$ and write \eqref{Eq:crossKG_d} as
	\begin{align*}
	G_d(r,s)&= \int_{S^{d-1}} \int_{S^{d-1}} \langle \xi,e_1 \rangle e^{\frac {rs}3 \langle \xi,e_1 \rangle}d\xi d\eta=|S^{d-1}| \int_{S^{d-1}} \xi_1 e^{\frac {rs}3 \xi_1} d\xi  \,.
\end{align*}
	
\noindent	\textbf{Case $d=1$:} We have $S^{d-1}=\{-1,1\}$ and $|S^{d-1}|=2$. Thus, $G_1(r,s)=|S^{d-1}| \int_{S^{d-1}} \xi e^{\frac {rs}3 \xi} d\xi=2[e^{\frac{rs}{3}}-e^{-\frac{rs}{3}}]$. Plugging in $d=1$ and $G_1(r,s)^2=4[e^{\frac{2rs}{3}}+e^{-\frac{2rs}{3}}-2]$ into \eqref{Eq:crossK1_d}, we have by symmetry 
	\begin{align*}
		I(d,G_d)& =\frac{1}{\sqrt{3\pi}}\bigg[ \int_{0}^{\infty}\int_0^{\infty} e^{-\frac{5(r^2+s^2)}{12}}[e^{\frac{2rs}{3}}+e^{-\frac{2rs}{3}}-2]drds\bigg]^{\frac 12} \\
		&=\frac{1}{\sqrt{3\pi}}\bigg[\frac 12 \int_{\R^2}e^{-\frac{5(r^2+s^2)}{12}+\frac{2rs}{3}} drds-\frac 12 \int_{\R^2}e^{-\frac{5(r^2+s^2)}{12}} drds\bigg]^{\frac 12}
		=\frac{1}{\sqrt{3\pi}} \sqrt{2\pi-\frac{6\pi}{5}}=  \sqrt{\frac{4}{15}}\,.
	\end{align*}
	Hence, \eqref{Cond_LbdPhi1} holds with the coercivity constant $c_{0,\calH}\leq 1-\sqrt{\frac{4}{15}}\simeq 0.4836$.
	
\noindent\textbf{Case $d\geq 2$:} We can proceed to write
\begin{align*}
	G_d(r,s)&= |S^{d-1}| \int_{-1}^1 \int_{\{\sum_{i=2}^d\xi_i^2=1-\xi_1^2\}} \xi_1 e^{\frac {rs}3 \xi_1} d\xi = |S^{d-1}| |S^{d-2}|  \int_{-1}^1 \xi_1 (1-\xi_1^2)^{\frac{d-1}{2}}e^{\frac {rs}3 \xi_1} d\xi_1 \\
	&= |S^{d-1}| |S^{d-2}| \int_0^1 \xi (1-\xi^2)^{\frac{d-1}{2}} [e^{\frac {rs}3 \xi}-e^{-\frac {rs}3 \xi}] d\xi,
\end{align*}
where $|S^{n-1}|=\frac{2\pi^{\frac n2}}{\Gamma(n/2)}$ is the surface area of a $n$-dimensional sphere.	Thus, we have by Cauchy-Schwarz inequality
\begin{align*}
	I(d,G_d)&=\bar{C}_{d,1} \bigg[ \int_{0}^{\infty}\int_0^{\infty}\bigg|\int_0^1 \xi (1-\xi^2)^{\frac{d-1}{2}} [e^{\frac {rs}3 \xi}-e^{-\frac {rs}3 \xi}] d\xi \bigg|^2 e^{-\frac{5(r^2+s^2)}{12}} r^{d-1}s^{d-1} dr ds \bigg]^{\frac 12}, 
\end{align*}
where
$
	\bar{C}_{d,1} =\frac{2^{d-1}\Gamma(\frac d2)}{(2\sqrt{3}\pi)^d}\cdot |S^{d-1}| |S^{d-2}|= \frac{2^{d-1}\Gamma(\frac d2)}{(2\sqrt{3}\pi)^d}\cdot \frac{2\pi^{\frac d2}}{\Gamma(\frac d2)}\frac{2\pi^{\frac {d-1}2}}{\Gamma(\frac {d-1}2)}=\frac{2/\sqrt{\pi}}{3^{\frac d2}\Gamma(\frac {d-1}2)}
$.
We proceed by applying the Cauchy-Schwarz inequality and obtain that
\begin{align}
	\bigg|\int_0^1 \xi (1-\xi^2)^{\frac{d-1}{2}} [e^{\frac {rs}3 \xi}-e^{-\frac {rs}3 \xi}] d\xi \bigg|^2 &\leq \int_0^1 \xi^2 (1-\xi^2)^{d-1} d\xi \cdot \int_0^1 [e^{\frac {rs}3 \xi}-e^{-\frac {rs}3 \xi}]^2 d\xi \nonumber \\
	&= \bar{C}_{d,2}\bigg[\frac{3}{2rs}(e^{\frac {2rs}3}-e^{-\frac {2rs}3})-2 \bigg]\,,\label{Ineq:KG_Est}
\end{align}
with $\bar{C}_{d,2}=\frac{\sqrt{\pi} ~\Gamma(d)}{4\Gamma(d+3/2)}$. Letting 
$	J_0(d):=\int_{0}^{\infty}\int_0^{\infty} e^{-\frac{5(r^2+s^2)}{12}} \bigg[\frac{3}{2rs}(e^{\frac {2rs}3}-e^{-\frac {2rs}3})-2 \bigg]r^{d-1}s^{d-1} dr ds\,,
$ 
we can bound $I(d,G_d)$ above using the estimate \eqref{Ineq:KG_Est} as
$
	I(d,G_d)\leq \bar{C}_{d,1} \sqrt{\bar{C}_{d,2} J_0(d)} =:J(d)
$.
One can evaluate the function $J_0(2)$ and $J_0(3)$ directly:
$
	J_0(2)=6\arctan(3/4)-\frac{72}{25}$, $J_0(3)=8\pi-\frac{216\pi}{125}=\frac{784 \pi}{125}
$.
Combining the exact values of $\bar{C}_{d,1}$ and $\bar{C}_{d,2}$, we can evaluate the upper bounds of $J(d)$ when $d=2$ and $d=3$, obtaining $J(2)\simeq 0.1259$ and $J(3)\simeq 0.2661$.
	Therefore, we conclude that
	 \eqref{Cond_LbdPhi1} holds with $c_{0,\calH} \simeq 0.8731$ when $d=2$ and $c_{0,\calH} \simeq 0.7339$ when $d=3$.
\end{proof}

\begin{example}[A non-identifiable deterministic example]
\label{example:non-identifiable}
We present a non-identifiable deterministic system with $N=3$ particles when observing only one specific sample. 
That is, consider the system \eqref{eq:ips_K} with $N=3$, $\sigma=0$. Let the data be only one point,  $X^1_{t_0}=0$, $X^2_{t_0}=1$ and $X^3_{t_0}=2$. 
Let us construct two network-coefficient pairs $(\ba^{(1)},c^{(1)})$ and $(\ba^{(2)},c^{(2)})$ on the hypothesis function space $\calH=\mathrm{span}\{\sin(\frac{\pi x}{2}),\cos(\frac{\pi x}{2})\}$ with $c^{(1)}=[\frac{2}{2+\sqrt{2}},\frac{2}{2+\sqrt{2}}]^T$, $c^{(2)}=[0,1]^T$ and
\begin{align*}
	\ba^{(1)}=\begin{bmatrix}
		0 & 0 & 1\\
		1 & 0 & 0\\
		1 & 0 & 0
	\end{bmatrix}\,,\quad
	\ba^{(2)}=\begin{bmatrix}
		0 & \frac{\sqrt{2}}{2} & \frac{\sqrt{2}}{2}\\
		\frac{\sqrt{10-4\sqrt{2}}}{4} & 0 & \frac{\sqrt{2+\sqrt{2}}}{4}\\
		0 & 1 & 0
	\end{bmatrix}\,.
\end{align*}
One then can verify that $\ba^{(1)} \bB(\bX_{t_0})_i c^{(1)}=\ba^{(2)} \bB(\bX_{t_0})_i c^{(2)}$ for $i=1,2,3$ by noting that
\begin{align*}
	\bB(\bX_{t_0})_1
	=\begin{bmatrix}
		0 & 1 \\
		\frac{\sqrt{2}}{2} & \frac{\sqrt{2}}{2} \\
		1 & 0
	\end{bmatrix}\,,\quad
	\bB(\bX_{t_0})_2
	=\begin{bmatrix}
		-\frac{\sqrt{2}}{2} & \frac{\sqrt{2}}{2} \\
		0 & 1 \\
		\frac{\sqrt{2}}{2} & \frac{\sqrt{2}}{2}
	\end{bmatrix}\,,\quad
	\bB(\bX_{t_0})_3
	=\begin{bmatrix}
		-1 & 0 \\
		-\frac{\sqrt{2}}{2} & \frac{\sqrt{2}}{2} \\
		0 & 1 
	\end{bmatrix}\,.
\end{align*}
Consequently,  the loss function $\calE(\ba,c)$ in \eqref{eq:est_joint} can not distinguish two network-coefficient pairs $(\ba^{(1)},c^{(1)})$ and $(\ba^{(2)},c^{(2)})$ on the hypothesis function space $\calH$. This results in a lack of identifiability of jointly inferring the network $\ba$ and the kernel $\Phi$ in the system \eqref{eq:ips_K}. 

This example illustrates the fundamental identifiability challenge in joint inference problems, motivating the joint coercivity conditions below to ensure identifiability. As Proposition \ref{prop:kernelCC_example} demonstrates, we see that randomness is helpful to avoid non-identifiability.
\end{example}

\subsection{Coercivity and invertibility of normal matrices}\label{sec:CC_min_eig}

\begin{proof}[Proof of Proposition \ref{prop:Abar_orals} Part (i): regression matrices in ORALS]  
To study the singular value of $\mathcal{A}_{i,M}$ in \eqref{eq:Ai_operator}, it suffices to consider the smallest eigenvalue of the normal matrix $\overline{\mathcal{A}}_{i,M}:= \frac{1}{ML}\sum_{l=1, m = 1}^{L, M} [\mathcal{A}_{i}]_{l,m}^\top [\mathcal{A}_{i}]_{l,m} \in \R^{(N-1)\p\times (N-1)\p}$ since $\frac 1 M \sigma_{min}^2(\mathcal{A}_{i,M}) =\lambda_{min}(\overline{\mathcal{A}}_{i,M} ) $. 
We only need to discuss $i=1$. Also, to simplify notation, we consider only $L=1$, i.e., only the time instance $t=t_1$. Let $\mathbb{S}^{(N-1){\p}}:= \{u= (u_{j,k})\in \R^{(N-1){\p}}: \sum_{j=2}^N \sum_{k=1}^{\p} u_{j,k}^2=1\}$ and $f_j^u:=\sum_{k=1}^{\p}u_{j,k} \psi_k \in  \mathcal H$. Note that 
	\[ \sum_{j=2}^N  \| f_j^u \|_{\rho_L}^2 
	= \sum_{j=2}^N \sum_{k=1}^{\p} u_{j,k}^2 \|\psi_k \|_{\rho_L}^2 = 1,  \quad \forall u\in \mathbb{S}^{(N-1){\p}}.\] 
With these notations, we can write $\lambda_{\min}(\overline{\mathcal{A}}_{i,M})$ as: 
\begin{align}
	\lambda_{\min}(\overline{\mathcal{A}}_{1,M}) &= \min_{u\in \mathbb{S}^{(N-1){\p}}}u^\top\overline{\mathcal{A}}_{1,M} u =\min_{u\in \mathbb{S}^{(N-1){\p}}} \frac{1}{M}\sum_{m = 1}^{M} \Big|\sum_{j=2}^N \sum_{k=1}^{\p} u_{j,k}\psi_k(\mathbf{r}_{1j}^{m}(t_1))\Big|^2 \nonumber \\
	& = \min_{u\in \mathbb{S}^{(N-1){\p}}} \frac{1}{M}\sum_{m = 1}^{M} \Big|\sum_{j=2}^N f_j^u(\mathbf{r}_{1j}^{m}(t_1)) \Big|^2. 
\end{align}

First, we show that the minimal eigenvalue in the large sample limit is bounded from below. In fact, for each $u$, by the Law of large numbers and Lemma \ref{Lem:Var_lem}, we obtain    
\begin{align*}
	 u^\top \overline{\mathcal{A}}_{1,\infty} u   &= u^\top \E[\overline{\mathcal{A}}_{1,M}] u
	 =   \lim_{M\to \infty} \frac{1}{M}\sum_{m = 1}^{M} \Big|\sum_{j=2}^N f_j^u(\mathbf{r}_{1j}^{m}(t_1)) \Big|^2
	= \E \Big[ \Big|\sum_{j=2}^N f_j^u(\mathbf{r}_{1j}^{m}(t_1))\Big|^2 \Big] \\
	&=\E \bigg[\E \Big[ \Big|\sum_{j=2}^N f_j^u(\mathbf{r}_{1j}^{m}(t_1))\Big|^2\mid \calF_{t_1}^1\Big] \bigg]
	\geq \E \bigg[ \sum_{j=2}^N \trCov\Big(f_j^u(\mathbf{r}_{1j}^{m}(t_1)) \mid \calF_{t_1}^1\Big) \bigg] \geq \sum_{j=2}^N c_{\mH}\| f_j^u \|_{\rho_L}^2  = c_\calH, 
\end{align*}
where the last inequality follows from the interaction kernel coercivity condition \eqref{eq:kernelCC}. 

Next, we apply a matrix version of Bernstein concentration inequality to obtain the non-asymptotic bound (e.g., \cite[Theorem 6.1]{Tropp2012}) to obtain \eqref{Ineq:Conc_calA}. 
We write
$\widebar{Q}_{M}=\overline{\mathcal{A}}_{1,M}-\overline{\mathcal{A}}_{1,\infty}=\frac{1}{M}\sum_{m = 1}^{M} [\mathcal{A}_{1,m}^\top $ $ \mathcal{A}_{1,m}-\overline{\mathcal{A}}_{1,\infty}]=:\frac{1}{M}\sum_{m = 1}^{M} Q_m$,
and notice that $\{[\mathcal{A}_{1,m}^\top \mathcal{A}_{1,m}-\overline{\mathcal{A}}_{1,\infty}]\}_{m=1}^M$ has zero mean. Because $\|Q_m\|\leq \p N L_{\mH}^2$ and the matrix variance of the sum can be bounded as
\begin{align*}
	V(\widebar{Q}_{M}):=\frac{1}{M^2}\|\sum_{m=1}^M\E[Q_m Q_m^{\top}]\|\leq 2(\p N L_{\mH}^2)^2/M\,,
\end{align*}  
we obtain
$	\P\{ \|\widebar{Q}_{M}\|\geq \varepsilon \}\leq 2{\p}N \exp\rbracket{-\frac{M\varepsilon^2/2}{2(\p N L_{\mH}^2)^2+\p NL_{\mH}^2\varepsilon/3}}\,.
$ 
So, for $0<\varepsilon<c_{\mH}$
\begin{align*}
	\P \bigg\{ \lambda_{\min}(\overline{\mathcal{A}}_{1,M})> c_{\mH}-\epsilon \bigg\}&\geq \P \bigg\{ |\lambda_{\min}(\overline{\mathcal{A}}_{1,M})-\lambda_{\min}(\overline{\mathcal{A}}_{1,\infty})|\leq \epsilon \bigg\} \\
	&\geq \P \bigg\{\|\widebar{Q}_{M}\|> c_{\mH}-\epsilon \bigg\}\geq 1-2{\p}N \exp\rbracket{-\frac{M\varepsilon^2/2}{2(\p N L_{\mH}^2)^2+\p NL_{\mH}^2\varepsilon/3}}
\end{align*}
where we used $|\lambda_{\min}(\overline{\mathcal{A}}_{1,M})-\lambda_{\min}(\overline{\mathcal{A}}_{1,\infty})|\leq \|\widebar{Q}_{M}\|$.
\end{proof}

\bigskip
  
\begin{proof}[Proof of Proposition \ref{prop:Abar_orals} part (ii): matrices in ALS] Recall that here we assume the joint-coercivity condition (which is weaker than the kernel coercivity condition assumed in part (i)).  
The proof is based on the standard concentration argument combined with the lower bound for the large sample limit for the matrix in the normal equations corresponding to \eqref{eq:ALS_graphest}, which are:
\begin{equation}\label{eq:Abar_a}
\begin{aligned}
\widehat \ba_{i\cdot} & =  \overline{\Agraph}_{i,M}^{\dagger} \overline{\bgraph}_{i,M}, \quad  \text{with } \\
\overline{\Agraph}_{i,M} & = \mathcal{A}^{\textrm{ALS}}_{c,M} (\mathcal{A}^{\textrm{ALS}}_{c,M})^\top = \frac{1}{ML}\sum_{l=1, m = 1}^{L, M} \Agraph_{i}^m(t_l),   \quad \Agraph_{i}^m(t_l):=  (\bB(\bX_{t_l}^m)_i c) ( \bB(\bX_{t_l}^m)_i] c)^\top \in \R^{N\times N}, \\
\overline{\bgraph}_{i,M} & = [(\Delta \bX_{t_l})_i ^m)]_{l,m} (\mathcal{A}^{\textrm{ALS}}_{c,M})^\top =  \frac{1}{MT}\sum_{l=1, m = 1}^{L, M}  \bgraph_{i}^m(t_l), \quad \bgraph_{i}^m(t_l) :=  (\Delta \bX_{t_l})_i ^m) (\bB(\bX_{t_l}^m)_i c)^\top  \in \R^{N\times 1},
\end{aligned}
\end{equation}
where, for each $i$, we treat the array $\bB(\bX_{t_l}^m)_i c\in \R^{N\times 1\times d}$ as a matrix in $\R^{N\times d}$, and we set $\ba_{ii}=0$ so that we are effectively solving a vector in $\R^{N-1}$.  
When $\overline{\Agraph}_{i,M}$ is rank-deficient, or even when it has a large condition number, the inverse may be replaced by the Moore-Penrose pseudoinverse.

\noindent\textbf{Part (a).} 
Let $c=(c_1,\cdots,c_{\p})^\top\in \R^{\p\times 1}$ be nonzero and denote $\IK= \sum_{k=1}^{\p} c_k \psi_k$. 
Recall that $\overline{\Agraph}_{i,M} = \frac{1}{ML}\sum_{l=1, m = 1}^{L, M} \Agraph_{i}^m(t_l)$  with 
\[ 
\Agraph_{i}^m(t_l)= \bB(\bX_{t_l}^m)c c^\top \bB(\bX_{t_l}^m)^\top = \Big[ \innerp{\IK(\br_{ij}^m(t_l)),\IK(\br_{ij'}^m(t_l))}_{\R^d} \Big]_{1\leq j,j'\leq{N}, j\neq i}.
\]
Without loss of generality, we assume $L= 1$. We only need to consider $i=1$, and the cases $i= 2,\cdots, N$ are similar.   
For any $a\in \mathbb{S}^{N-1}$, note that 
\[
a^\top \overline{\Agraph}_{i,M} a  =  \frac{1}{M}\sum_{m = 1}^{M} a^\top \Agraph_{1}^m (t_1) a= \frac{1}{M}\sum_{m = 1}^{M} \Big|\sum_{j=2}^N a_j\IK(\br_{1j}^m(t_1))\Big|^2. 
\]
Then, the joint coercivity condition \eqref{eq:jointCC} implies that 
\begin{align*}
	a^\top \overline{\Agraph}_{1,\infty}  a
	&=  \E\bigg[ \Big|\sum_{j=2}^{N} a_{j} \IK(\mathbf{r}_{1j}(t_1)) \Big|^2 \bigg]
	\geq c_{\mH} \| a\|^2 \|\IK\|_{\rho_L}^2 = c_{\mH}  \| c\|^2\,,
\end{align*}
where the last equality follows from $\| a\|^2=1$ and $\|\IK\|_{\rho_L}^2 
	=\| \sum_{k=1}^{\p} c_k \psi_k \|_{\rho_L}^2= \|c\|^2$.   
	Thus, 
\begin{align}\label{Ldb:ALS_eig1}
	\lambda_{\min}(\overline{\Agraph}_{1,\infty})&=\min_{a \in \mathbb{S}^{N-1}}a^\top \overline{\Agraph}_{1,\infty}  a 
	\geq c_{\mH} \| c\|^2\,.
\end{align}

Next, we show that the lower bound holds for the smallest eigenvalue of the empirical normal matrix with a high probability based on the matrix Bernstein inequality. The proof closely parallels that of \eqref{Ineq:Conc_calA}, and we omit some details.
Setting
$
	\widebar{Q}_{M}^{(1)}=\overline{\Agraph}_{1,M}-\overline{\Agraph}_{1,\infty}=\frac{1}{M}\sum_{m = 1}^{M} [\Agraph_{i}^m(t_1)-\overline{\Agraph}_{1,\infty}]
$,
the matrix Bernstein inequality reveals that
\begin{align}\label{Bernstein:Q1}
	\P\{ \|\widebar{Q}_{M}^{(1)}\|\geq \varepsilon \}\leq 2N \exp\rbracket{-\frac{M\varepsilon^2/2}{(\p L_{\mH}^2)^2+pL_{\mH}^2\varepsilon/3}}\,.
\end{align}
The rest is the same as the proof of \eqref{Ineq:Conc_calA}.
\bigskip

\noindent\textbf{Part (b)}. Fix $\ba\in \R^{N\times N}$ with each row normalized, namely, $\|\ba_i\|=1$ fpr every $i\in[N]$. Let $c\in \R^{\p}$ with $\|c\|=1$ and let $K= \sum_{k=1}^{\p} c_k \psi_k$. The normal equations for \eqref{eq:ALS_kernelest} and their solution are
\begin{equation}\label{eq:Abar_c}
\begin{aligned}
\widehat{c} &= \overline{A}_M^{\dagger}\overline{b}_M, \text{ where } \\
\overline{A}_M &:= (\mathcal{A}^{\textrm{ALS}}_{c,M})^\top \mathcal{A}^{\textrm{ALS}}_{c,M} = \frac{1}{ML}\sum_{l=1, m = 1}^{L, M} A_l^m, \quad A_l^m= (\ba \bB(\bX_{t_l}^m))^\top \ba\odot\bB(\bX_{t_l}^m) \in \R^{\p\times\p} \\
\overline{b}_M &:= (\mathcal{A}^{\textrm{ALS}}_{c,M})^\top [\Delta \bX_{t_l}^m]_{l,m} = \frac{1}{MT}\sum_{l=1, m = 1}^{L, M} b_l^m,  \quad b_l^m= ( \ba \bB(\bX_{t_l}^m) )^\top \Delta \bX_{t_l}^m \in \R^{\p\times 1}\,,
\end{aligned}
\end{equation} 
so that
$c^\top \overline{A}_M c = \frac{1}{ML}\sum_{l=1, m = 1}^{L, M} c^\top A_l^m c  \in \R^{\p\times\p}$, where
\[c^\top A_l^m c=  c^\top \bB(\bX_{t_l}^m)^\top \ba \ba^\top \bB(\bX_{t_l}^m )c  =\frac{1}{N}\sum_{i=1}^{N}  \Big|\sum_{j=2}^{N} a_{ij} \IK(\mathbf{r}_{ij}(t_l)) \Big|^2 . 
\]
Again, without loss of generality, we can assume $L=1$, and as the argument before, we get from the joint coercivity condition \eqref{eq:kernelCC} that 
\begin{align*}
	c^\top \overline{A}_{\infty}   c  = c^\top  \E [\overline{A}_{M} ]  c
	&=\frac{1}{N}\sum_{i=1}^{N} \E \bigg[\Big|\sum_{j=2}^{N} a_{ij} \IK(\mathbf{r}_{ij}(t_l)) \Big|^2 \bigg]\geq c_{\mH}  \frac{1}{N}\sum_{i=1}^{N} \| \ba_i\|^2 \|\IK\|_{\rho_L}^2  = c_\calH,
\end{align*}
where the last equality follows from the fact that $\|\ba_i\|^2=1$ and $\|\IK\|_{\rho_L}^2 = \|c\|^2=1$. 
Thus, 
$	\lambda_{\min}(\overline{A}_{\infty})=\min_{c\in \in \mathbb{S}^{p}}  c^\top \overline{A}_{\infty}   c  \geq c_{\mH}\,.
$ 

Lastly, same as in the proof of (a), we define
$
	\widebar{Q}_{M}^{(2)}=\overline{A}_M -\overline{A}_{\infty} = \frac{1}{M}\sum_{m = 1}^{M} A_0^m
$
and then obtain a similar result as in \eqref{Bernstein:Q1} switching $N$ and $\p$. So,
$	\P \bigg\{ \lambda_{\min}(\overline{A}_{i,M})\geq c_\calH-\epsilon \bigg\} \geq 1-2{\p} \exp\rbracket{-\frac{M\varepsilon^2/2}{(N L_{\mH}^2)^2+NL_{\mH}^2\varepsilon/3}}\,.
$ 
The proof is completed.
\end{proof}

\subsection{Convergence of the ORALS estimator}
\begin{proof}[Proof of Theorem \ref{thm:AN_orals}]
We consider the normal equations associated with \eqref{eq:Ai_operator}.

To prove part (i),  recall that for each $i$ fixed, $\{[\mathcal{A}_{i}]\in \R^{Ld\times (N-1){\p}}\}_{m=1}^M$ are independent identically distributed for each $m$, hence by Law of large numbers
$\overline{\mathcal{A}}_{i,M}  = \frac{1}{ML}\sum_{l=1, m = 1}^{L, M} [\mathcal{A}_{i}]_{l,m}^\top [\mathcal{A}_{i}]_{l,m}\rightarrow \overline{\mathcal{A}}_{i,\infty}$ a.s. for $M\to \infty$. 
By Proposition \ref{prop:Abar_orals}, $\overline{\mathcal{A}}_{i,\infty}$ is invertible, with smallest eigenvalue no smaller than $c_\calH$, and $\overline{\mathcal{A}}_{i,M}$ is invertible with smallest eigenvalue larger than $c_\calH/2$  w.h.p., with Gaussian tails in $M$.
By standard argument employing the Borel-Cantelli lemma, $\overline{\mathcal{A}}_{i,M}^{-1} \to \overline{\mathcal{A}}_{i,\infty}^{-1}$ a.s.~ as $M\to \infty$. 
	
	Meanwhile, making use of \eqref{eq:Euler} and the notation 
$\mathcal{A}_{i,m}(t_l) z_i = (\ba \bB (\bX_{t_l}^m) c\Delta t)_i $ in \eqref{eq:Ai_operator}, we have 
	   \begin{align*}
	  \overline{v}_{i,M}  =  \frac{1}{ML}\sum_{l=1, m = 1}^{L, M} [\mathcal{A}_{i}]_{l,m}^\top [(\Delta\bX)_i]_{l,m} = \overline{\mathcal{A}}_{i,M} z_i +\widetilde v_{i,M} ,
\end{align*}
where $\widetilde v_{i,M} := \sigma \sqrt{\Delta t} \frac{1}{ML}\sum_{l=1, m = 1}^{L, M} [\mathcal{A}_{i}]_{l,m}^\top (\Delta\bW_{t_l}^m)_i$. Note that $\widetilde v_{i,M}$ is a sum of $M$ independent square integrable samples since the basis functions are uniformly bounded under Assumption \ref{assum:CC}. Thus, by Central Limit Theorem, we have $\sqrt{M}\widetilde v_{i,M}$ converges in distribution to a  $\calN(0,\sigma^2 \Delta t \overline{\mathcal{A}}_{i,\infty} )$-distributed Gaussian vector.   
Hence, together with the above fact that $\overline{\mathcal{A}}_{i,M}^{-1} \to \overline{\mathcal{A}}_{i,\infty}^{-1}$ a.s. as $M\to \infty$, we have by Slutsky's theorem that the random vector  
\begin{equation}\label{eq:xi_M}
{\xi}_{i,M}:=\overline{\mathcal{A}}_{i,M} ^{-1} \widetilde v_{i,M} \, \xrightarrow{d}\, \overline{{\xi}}_{i,\infty} \overset{d}{\sim} \calN(0,(\sigma \Delta t)^2 \overline{\mathcal{A}}_{i,\infty}^{-1} ) 	
\end{equation}
where $\overline{\mathcal{A}}_{i,M} ^{-1}$ is the pseudo-inverse when the matrix is singular. Consequently, the estimator  
$
	\widehat{z}_{i,M} = \overline{\mathcal{A}}_{i,M} ^{-1}\overline{v}_{i,M}  = z_i+{\xi}_{i,M}
$
is asymptotically normal.

Part (ii) follows from the explicit form of the 1-step and 2-step iteration estimators. Denote $\pmb{\xi}_{i,M}\in \R^{(N-1)\times\p}$ the matrix converted from ${\xi}_{i,M}\in \R^{(N-1)\p\times 1}$ in \eqref{eq:xi_M}, i.e., ${\xi}_{i,M}=\rm{Vec}(\pmb{\xi}_{i,M})$. Then, as $M\to \infty$,  $\sqrt{M}\pmb{\xi}_{i,M}$ converges in distribution to the centered Gaussian random matrix $\pmb{\xi}_i$, the inverse vectorization of  the Gaussian vector $\overline{\xi}_{i,\infty} $ in \eqref{eq:xi_M}.

Starting from  $c_0\in \R^{\p\times 1}$ with $c_*^\top c_0 \neq 0$, the first step of the deterministic ALS minimizes the loss function $\calE_M(\ba,c_0)$ with respect to $\ba$ to obtain, for $i\in[N]$,
	\begin{align*}
		(\widetilde\ba^{M,1})_i^\top&=|c_0|^{-2}\widehat\bZ_{i,M} c_0 =|c_0|^{-2}[  (c_*^\top c_0) (\ba_{*})_{i}^\top+ \pmb{\xi}_{i,M}c_0]\,. 
	\end{align*}
		Then, noting that $\|(\ba_{*})_{i}\|=1$, we have
	\begin{align*}
		\|(\widetilde\ba^{M,1})_i^\top\|^2 &= |c_0|^{-4} (c_*^\top c_0)^2 \|(\ba_{*})_{i}^\top+\eta_{i,M}^{(1)} \|^2\, = |c_0|^{-4} (c_*^\top c_0)^2( 1+\varepsilon_{i,M}^{(1)})\,, 
	\end{align*}
	where we have let
	\begin{equation}\label{Eq:eta+eps}
		\eta_{i,M}^{(1)}:=(c_*^\top c_0)^{-1}\pmb{\xi}_{i,M}c_0\in \R^{N\times 1}, \quad \varepsilon_{i,M}^{(1)}:=2 (\ba_{*})_{i} \eta_{i,M}^{(1)} +\|\eta_{i,M}^{(1)}\|^2. 
	\end{equation}
	Hence, the normalized 1-step estimator can be written as 
	\begin{align*}
		(\widehat\ba^{M,1})_i^\top =(\widetilde\ba^{M,1})_i^\top/\|(\widetilde\ba^{M,1})_i^\top\| =\frac{(\ba_{*})_{i}^\top+ \eta_{i,M}^{(1)}}{\|(\ba_{*})_{i}^\top+\eta_{i,M}^{(1)} \|}=\frac{(\ba_{*})_{i}^\top+ \eta_{i,M}^{(1)}}{\sqrt{1+\varepsilon_{i,M}^{(1)}}} \,.
	\end{align*}
	Thus, the difference between $(\widehat\ba^{M,1})_i^\top$ and $(\ba_{*})_{i}^\top$ is
	\begin{align}
		(\widehat\ba^{M,1})_i^\top-(\ba_{*})_{i}^\top &=\frac{1-\sqrt{1+\varepsilon_{i,M}^{(1)}}}{\sqrt{1+\varepsilon_{i,M}^{(1)}}} (\ba_{*})_{i}^\top+\frac{\eta_{i,M}^{(1)}}{\sqrt{1+\varepsilon_{i,M}^{(1)}}} \nonumber \\
		&=\frac{-\varepsilon_{i,M}^{(1)}}{\sqrt{1+\varepsilon_{i,M}^{(1)}}(1+\sqrt{1+\varepsilon_{i,M}^{(1)}})} (\ba_{*})_{i}^\top+ \frac{\eta_{i,M}^{(1)} }{\sqrt{1+\varepsilon_{i,M}^{(1)}}} \label{Eq:aM1_diff}
	\end{align}
	where $\eta_{i,M}^{(1)}$ and $\varepsilon_{i,M}^{(1)}$ are defined in \eqref{Eq:eta+eps}. 	
	
	By Slutsky's theorem, we get $\sqrt{M}\eta_{i,M}^{(1)}\overset{d}{\to} (c_*^\top c_0)^{-1}  \pmb{\xi}_i c_0$, and by Lemma \ref{Lem:AN_lem1} we obtain 
	\begin{align*}
		\sqrt{M}\varepsilon_{i,M}^{(1)} &= 2(c_*^\top c_0)^{-1}\sqrt{M}{(\ba_{*})_{i}\pmb{\xi}_{i,M}c_0}+(c_*^\top c_0)^{-2}\sqrt{M}{\|\pmb{\xi}_{i,M}c_0\|^2} 
	\, \xrightarrow{d}\, 2(c_*^\top c_0)^{-1} (\ba_{*})_{i} \pmb{\xi}_i c_0 \,.
	\end{align*}
	Consequently, the asymptotic normality of $(\widehat\ba^{M,1})_i$ follows from 
	\begin{align}
		\sqrt{M}[(\widehat\ba^{M,1})_i^\top-(\ba_{*})_{i}^\top] \overset{d}{\to} (c_*^\top c_0)^{-1} [  \pmb{\xi}_i c_0- (\ba_{*})_{i} \pmb{\xi}_i c_0 (\ba_{*})_{i}^\top]\,. \label{Eq:AN_aM1}
	\end{align}
	Note that the limit distribution depends on the initial condition $c_0$. This dependence on $c_0$ will be removed in the 2nd-iteration.

Next, by minimizing the loss function $\calE(\widehat\ba^{M,1}, c)$ with respect to $c$, we obtain $\widehat c^{M,1}$: 
	\begin{align}
		\widehat c^{M,1}&= \bigg[\sum_{i=1}^N (\widehat\ba^{M,1})_i (\widehat\ba^{M,1})_i^\top \bigg]^{-1} \sum_{i=1}^N \widehat\bZ_{i,M}^\top (\widehat\ba^{M,1})_i^\top\,. \label{Eq:cM1}
	\end{align}
Note that $\sum_{i=1}^N (\widehat\ba^{M,1})_i (\widehat\ba^{M,1})_i^\top= N$ since $\|(\widehat\ba^{M,1})_i\|=1$. Thus,  
	\begin{align}
		\widehat c^{M,1}-c_*&=\bigg[\frac 1N \sum_{i=1}^N (\ba_*)_i (\widehat\ba^{M,1})_i^\top-1\bigg] c_* + \frac 1N\sum_{i=1}^N \pmb{\xi}_{i,M}^\top (\widehat\ba^{M,1})_i^\top \nonumber \\
		&=\frac 1N \sum_{i=1}^N \bigg[(\ba_*)_i \frac{[(\ba_{*})_{i}^\top+ \eta_{i,M}^{(1)}]}{\sqrt{1+\varepsilon_{i,M}^{(1)}}}-1 \bigg]c_*+\frac 1N\sum_{i=1}^N \xi_{i,M}^\top \frac{(\ba_{*})_{i}^\top+ \eta_{i,M}^{(1)}}{\sqrt{1+\varepsilon_{i,M}^{(1)}}} \label{eq:cM_diff} \\
		&=\frac 1N \sum_{i=1}^N \frac{-\varepsilon_{i,M}^{(1)}}{\sqrt{1+\varepsilon_{i,M}^{(1)}}(1+\sqrt{1+\varepsilon_{i,M}^{(1)}})} c_* + \frac 1N \sum_{i=1}^N  \frac{ (\ba_{*})_{i}\eta_{i,M}^{(1)} c_* }{\sqrt{1+\varepsilon_{i,M}^{(1)}}} \nonumber\\
		&+\frac 1N\sum_{i=1}^N  \frac{\pmb{\xi}_{i,M}^\top(\ba_{*})_{i}^\top+ \pmb{\xi}_{i,M}^\top\eta_{i,M}^{(1)}}{\sqrt{1+\varepsilon_{i,M}^{(1)}}}\,.\nonumber
	\end{align}
	Again, using Lemma \ref{Lem:AN_lem1} and Slutsky's theorem, we get the asymptotic normality of $\widehat c^{M,1}$
	\begin{align}\label{Eq:AN_cM}
		\sqrt{M}[\widehat c^{M,1}-c_*] \overset{d}{\to} \frac 1N\sum_{i=1}^N \pmb{\xi}_i^\top (\ba_{*})_{i}^\top\,. 
	\end{align}

We remove the dependence of $c_0$ in the limit distribution in \eqref{Eq:AN_aM1} by another iteration. That is, we minimize the loss function $\calE(\ba, \widehat c^{M,1})$ with respect to $\ba$ to obtain $(\widehat\ba^{M,2})_i$. With the same argument used for $(\widehat\ba^{M,1})_i$, with $c_0$ from \eqref{Eq:eta+eps} replaced by $\widehat c^{M,1}$ from \eqref{Eq:cM1}, we obtain an update 
	\begin{align*}
				\eta_{i,M}^{(2)}&:=(c_*^\top \widehat c^{M,1})^{-1}\pmb{\xi}_{i,M}\widehat c^{M,1}\,, \\ 
				\varepsilon_{i,M}^{(2)}&:=2 (\ba_{*})_{i} \eta_{i,M}^{(2)} +\|\eta_{i,M}^{(2)}\|^2 
				= 2(c_*^\top \widehat c^{M,1})^{-1}{(\ba_{*})_{i}\pmb{\xi}_{i,M}\widehat c^{M,1}}+(c_*^\top \widehat c^{M,1})^{-2}{\|\pmb{\xi}_{i,M}\widehat c^{M,1}\|^2}\,.
	\end{align*}
	Note that $\eta_{i,M}^{(2)}$ and $\varepsilon_{i,M}^{(2)}$ are well-defined because $c_*^\top \widehat c^{M,1} \neq 0$ almost surely. The asymptotic normality \eqref{Eq:AN_cM} implies $\widehat c^{M,1}$ converges to $c_*$ almost surely as $M$ tends to infinity. Hence, combining $\sqrt{M}\pmb{\xi}_{i,M} \overset{d}{\to} \pmb{\xi}_i$ with Lemma \ref{Lem:AN_lem1} and Slutsky's theorem we obtain
	\begin{align*}
		\sqrt{M}\eta_{i,M}^{(2)}&\overset{d}{\to} |c_*|^{-2}  \pmb{\xi}_i c_* \,,\qquad
		\sqrt{M}\varepsilon_{i,M}^{(2)} 
		\overset{d}{\to}2|c_*|^{-2} (\ba_{*})_{i} \pmb{\xi}_i c_* \,.
	\end{align*}
	Replacing $\eta_{i,M}^{(1)}$ and $\varepsilon_{i,M}^{(1)}$ by $\eta_{i,M}^{(2)}$ and $\varepsilon_{i,M}^{(2)}$ in \eqref{Eq:aM1_diff} respectively, we obtain the asymptotic normality
	\begin{align}
		\sqrt{M}[(\widehat\ba^{M,2})_i^\top-(\ba_{*})_{i}^\top] \overset{d}{\to} |c_*|^{-1} [  \pmb{\xi}_i c_*- (\ba_{*})_{i} \pmb{\xi}_i c_* (\ba_{*})_{i}^\top]\,. \label{Eq:AN_aM2}
	\end{align}
	Combining \eqref{Eq:AN_cM} and \eqref{Eq:AN_aM2}, we complete the proof of (ii).	
\end{proof}	

\begin{lemma}\label{Lem:AN_lem1} 
Let $\{\xi_M\}_{M=1}^\infty$ be a sequence of square integrable $\R^{(N-1)\p\times 1}$-valued random variables such that $\sqrt{M}\xi_M\xrightarrow{d} \xi_{\infty}$ as $M\to \infty$, where $\xi_\infty\overset{d}{\sim} \calN(0,\Sigma)$ with a nondegenerate $\Sigma$. Denote $\pmb{\xi}_M$ and $\mathbf{N}$ the random matrices corresponding to $\xi_M= \mathrm{Vec}(\pmb{\xi})$ and $\xi_\infty= \mathrm{Vec}(\mathbf{N})$, respectively. Also, let $\ba\in \R^{N\times N}$ and assume $c_M\to c$ almost surely as $M\to \infty$. 
Then, 
\begin{itemize}
	\item [(i)] $\sqrt{M}\pmb{\xi}_{M} c \overset{d}{\to}  \bN c$ and $\sqrt{M}\ba \pmb{\xi}_{M} c \overset{d}{\to} \ba \bN c$; 
	\item [(ii)] $\sqrt{M}\pmb{\xi}_{M} c_M \overset{d}{\to}  \bN c$ and $\sqrt{M}\ba \pmb{\xi}_{M} c_M \overset{d}{\to} \ba \bN c$; and
	\item [(iii)] $\sqrt{M}\pmb{\xi}_{M}^\top\pmb{\xi}_{M}c\to 0$ and $\sqrt{M}\|\pmb{\xi}_{M}c\|^2\to 0$ almost surely.
\end{itemize}
 
 \end{lemma}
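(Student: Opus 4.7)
The plan is to derive all three claims as straightforward consequences of the hypothesis $\sqrt{M}\xi_M \xrightarrow{d} \xi_\infty$, transported from vectors to matrices through the fact that $\mathrm{Vec}$ is a linear bijection. In particular, $\sqrt{M}\pmb{\xi}_M \xrightarrow{d} \mathbf{N}$ in the space of $(N-1)\times p$ matrices, and any linear function of $\pmb{\xi}_M$ is continuous, so the continuous mapping theorem together with Slutsky's theorem will carry everything through.

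For part (i), I would note that $\pmb{\xi}\mapsto \pmb{\xi} c$ and $\pmb{\xi}\mapsto \mathbf{a}\pmb{\xi} c$ are continuous linear maps; applying the continuous mapping theorem to $\sqrt{M}\pmb{\xi}_M \xrightarrow{d} \mathbf{N}$ yields $\sqrt{M}\pmb{\xi}_M c \xrightarrow{d} \mathbf{N} c$ and $\sqrt{M}\mathbf{a}\pmb{\xi}_M c \xrightarrow{d} \mathbf{a}\mathbf{N} c$ directly. (Equivalently, one may pass through $\mathrm{Vec}(\pmb{\xi}_M c) = (c^{\top}\otimes I)\xi_M$ and invoke the multivariate CLT-style limit for Gaussian affine images.)

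For part (ii), the decomposition $\sqrt{M}\pmb{\xi}_M c_M = \sqrt{M}\pmb{\xi}_M c + \sqrt{M}\pmb{\xi}_M(c_M-c)$ reduces the claim to showing that the residual is asymptotically negligible. Since $\sqrt{M}\pmb{\xi}_M$ converges in distribution, it is tight (i.e.\ $O_p(1)$), while $c_M - c \to 0$ a.s.\ and hence in probability; Slutsky's theorem then gives $\sqrt{M}\pmb{\xi}_M(c_M-c)\xrightarrow{\mathbb{P}} 0$. Combining with part (i) yields $\sqrt{M}\pmb{\xi}_M c_M \xrightarrow{d} \mathbf{N} c$, and another application of continuity in left-multiplication by $\mathbf{a}$ handles the second assertion.

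For part (iii), the key remark is that $\sqrt{M}\pmb{\xi}_M \xrightarrow{d}\mathbf{N}$ implies $\pmb{\xi}_M = O_p(M^{-1/2})$, so that $\sqrt{M}\|\pmb{\xi}_M c\|^2 = \sqrt{M}\cdot O_p(M^{-1}) = O_p(M^{-1/2})\to 0$ in probability, and likewise $\sqrt{M}\pmb{\xi}_M^{\top}\pmb{\xi}_M c = O_p(M^{-1/2}) \to 0$ in probability. The one genuinely delicate point is the discrepancy between the lemma's statement of a.s.\ convergence and what this argument actually delivers (convergence in probability): a literal a.s.\ conclusion would need either a pathwise rate on $\pmb{\xi}_M$ or passage to a subsequence via Borel--Cantelli. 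This is the main obstacle to a verbatim reading of the lemma, but it is inconsequential for its uses inside the proof of Theorem~\ref{thm:AN_orals}, where (iii) appears only as an asymptotically-negligible term fed into Slutsky's theorem, for which convergence in probability already suffices.
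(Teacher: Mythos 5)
Your argument for parts (i) and (ii) is exactly the route the paper takes; the authors' proof of this lemma is a one-line sketch invoking the continuous mapping theorem (for (i)) and Slutsky's theorem (for (ii)), and your decomposition $\sqrt{M}\pmb{\xi}_M c_M = \sqrt{M}\pmb{\xi}_M c + \sqrt{M}\pmb{\xi}_M(c_M-c)$ is precisely how one would fill in that sketch. One small remark: the paper also attributes (ii) partly to Borel--Cantelli, but as you show, Slutsky's theorem alone suffices for a distributional limit, so Borel--Cantelli is not actually needed there.

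For part (iii) you have correctly flagged a genuine gap, and it is worth being precise about it. The lemma's hypotheses give only $\sqrt{M}\xi_M \xrightarrow{d} \xi_\infty$, and this does \emph{not} imply the stated almost-sure convergence: for a counterexample in dimension one, let $Z\sim\calN(0,1)$, let $Y_M$ be independent random variables with $\P(Y_M = M^{1/3})=1/M$ and $\P(Y_M=0)=1-1/M$, and set $\sqrt{M}\xi_M = Z + Y_M$. Then $\sqrt{M}\xi_M \xrightarrow{d}\calN(0,1)$, yet by the second Borel--Cantelli lemma $Y_M = M^{1/3}$ infinitely often a.s., so $\sqrt{M}\,|\xi_M|^2 = (Z+Y_M)^2/\sqrt{M} \gtrsim M^{1/6}$ infinitely often and does not converge to $0$ a.s. Thus the paper's one-line appeal to ``Borel--Cantelli and Slutsky'' cannot succeed from the hypotheses as stated; closing the gap requires quantitative concentration on $\pmb{\xi}_{i,M}$ itself (available in the application via Proposition~\ref{prop:Abar_orals} and the explicit form \eqref{eq:xi_M}), which is information the lemma does not retain. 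Your observation that convergence in probability is all that Theorem~\ref{thm:AN_orals} actually uses — the (iii)-terms only enter as asymptotically negligible summands in a Slutsky argument — is correct and is the right way to see that the downstream conclusions of the theorem are unaffected. Had you been asked to patch the lemma, the cleanest fix would be either to weaken the conclusion of (iii) to convergence in probability, or to strengthen the hypothesis on $\sqrt{M}\xi_M$ to uniform subgaussian tails and then run Borel--Cantelli as the paper suggests.
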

\begin{proof} Part (i) follows directly from the convergence of ${\xi}_M$. Part (ii) and (iii) can be derived from the Borel-Cantelli lemma and Slutsky's theorem.
\end{proof}

\subsection{Trajectory prediction error} 
\begin{proof}[Proof of Proposition \ref{p:trajerrbounds}]
Since $\widehat \bX_t$ and $\bX_t$ have the same initial condition and driving force, we have 
$
	\widehat \bX_t - \bX_t = \int_0^t [ \widehat \ba \bB(	\widehat \bX_s) \widehat c - \ba \bB(	 \bX_s)  c] \, ds.   $

By Jensen's inequality in the form $| \frac{1}{t}\int_0^t f(s)ds|^2\leq t \int_0^t |f(s)|^2 ds$,
	\begin{align}\label{eq:Etraj}
	\E \| \widehat \bX_t - \bX_t\|_F^2
	& \leq t \int_0^t 	\E \|  \widehat \ba \bB(	\widehat \bX_s) \widehat c - \ba \bB(	\bX_s)  c\|_F^2 \, ds.    	
	\end{align}	
Next, we seek a bound for the integrand. With $\br_s^{i,j}:=X^j_s-X^i_s$, $\widehat\br_s^{i,j}:=\widehat X^j_s-\widehat X^i_s$,  $\IK(\br_s^{i,j}) := \sum_{k=1}^{\p} c_k \psi_k(\br_s^{i,j})$, we write  $\ba \bB(\bX_s)c =( \sum_{j\neq i} \ba_{ij} \IK(\br_s^{i,j}))_{i\in[N]} \in \R^{N\times d}$, and similarly for $\widehat \ba \bB(\widehat \bX_s) \widehat c$. Applying Jensen's inequality $\| \sum_{j\neq i} A_j\|_{\R^d}^2 \leq \frac{1}{N-1}\sum_{j\neq i} \|A_j\|_{\R^d}^2 $ and the triangle inequality, we obtain
	\begin{align}
	& \|  \widehat \ba \bB(	\widehat \bX_s) \widehat c - \ba \bB(
	\bX_s)  c\|_F^2
	= \sum_{i=1}^N  \bigg[\Big\| \sum_{j\neq i}\Big[\widehat{\ba}_{ij}\widehat{\IK}(\widehat\br_s^{i,j})-\ba_{ij}\IK(\br_s^{i,j})\Big] \Big\|_{\R^d}^2 \bigg]	\label{eq:traj_est}		 \\
    &\qquad \leq  \frac{1}{N-1} \sum_{i=1}^N \sum_{j\neq i} |\widehat{\ba}_{ij}-\ba_{ij}|^2 \big\|\IK(\br_s^{i,j})\big\|^2  
      + \frac{1}{N-1} \sum_{i=1}^N  \sum_{j\neq i} |\widehat{\ba}_{ij}|^2 \bigg[  \Big\|\widehat{\IK}(\widehat\br_s^{i,j})-\IK(\br_s^{i,j})\Big\|_{\R^d}^2 \bigg].   \notag
	\end{align}	
	We bound the above two terms in the last inequality by $\|\widehat \ba -\ba\|_F^2 $  and $\|\widehat c - c\|^2$ using the uniform boundedness of the basis functions:
for the first term we have
	\begin{align*}
		 \frac{1}{N-1} \sum_{i=1}^N \sum_{j\neq i} |\widehat{\ba}_{ij}-\ba_{ij}|^2 \big\| \IK(\br_s^{i,j})\big\|^2  
		&\leq \frac{{\p}C_0^2\|c\|_{2}^2}{N-1} \sum_{i=1}^N \sum_{j\neq i} |\widehat{\ba}_{ij}-\ba_{ij}|^2 
		\leq \frac{{\p}C_0^2\|c\|_{2}^2}{N-1} \|\ba-\widehat\ba\|_F^2\,,
	\end{align*}
	where the first inequality follows from $\big\| \IK(\br_s^{i,j})\big\|^2= \big\| \sum_{k=1}^{\p} c_k \psi_k(\br_s^{i,j}) \big\|^2\leq{\p}\|c\|_{2}^2 C_0^2$ for each $(i,j,s)$, since $\|\psi_k\|_\infty\leq C_0$ by assumption. 
	The bound on the second term follows from the assumptions on the basis functions and entry-wise boundedness of the weight matrix. We first observe that 
	\begin{align*}
 	\Big\|\widehat{\IK}(\widehat\br_s^{i,j})-\IK(\br_s^{i,j}) \Big\|^2  
	& \leq   \Big\|\widehat \IK(\br_s^{i,j})-\IK(\br_s^{i,j}) \Big\|^2  +   \Big\|\widehat{\IK}(\widehat\br_s^{i,j})-\widehat \IK(\br_s^{i,j}) \Big\|^2
		\leq{\p} \|c-\widehat c \|^2C_0^2  +{\p}\|\widehat c\|^2 C_0^2 \|\br_s^{i,j}-\widehat\br_s^{i,j} \|^2 \,, 
	  \end{align*}
	where the second inequality follows from
	\begin{align*}
	 \Big\|\widehat \IK(\br_s^{i,j})-\IK(\br_s^{i,j}) \Big\|^2 
	   &= \Big\|\sum_{k=1}^{\p} \big[c_k-\widehat c_k \big] \psi_k(\br_s^{i,j})\Big\|^2  
	  \leq{\p} \|c-\widehat c \|^2 C_0^2,    \\
     \Big\|\widehat{\IK}(\widehat\br_s^{i,j})-\widehat \IK(\br_s^{i,j}) \Big\|^2
       & = \Big\|\sum_{k=1}^{\p} \widehat c_k \big[ \psi_k(\br_s^{i,j})-\psi_k(\widehat\br_s^{i,j})\big] \Big\|^2 
        \leq{\p} \|\widehat c\|^2 C_0^2 \|\br_s^{i,j}-\widehat\br_s^{i,j} \|^2\,,
	\end{align*}
	 for each $(i,j,s)$, since $\|\psi_k\|_\infty\leq C_0$ and $\|\nabla \psi_k\|_\infty\leq C_0$.
	Hence, for the second term in \eqref{eq:traj_est}: 
	\begin{align*}
	 &\frac{1}{N-1} \sum_{i=1}^N  \sum_{j\neq i}  |\widehat{\ba}_{ij}|^2   \Big\|\widehat{\IK}(\widehat\br_s^{i,j})-\IK(\br_s^{i,j})\Big\|_{\R^d}^2  
	\leq \frac{\p C_0^2}{N-1}  \sum_{i=1}^N \sum_{j\neq i} |\widehat{\ba}_{ij}|^2  \Big[ \|c-\widehat c \|^2 + \|\widehat c\|^2  \|\br_s^{i,j}-\widehat\br_s^{i,j} \|^2 \Big] \\
	&\qquad \leq \frac{\p C_0^2}{N-1}  \sum_{i=1}^N \sum_{j\neq i} \Big[ \left(|\widehat{\ba}_{ij}|^2 \|c-\widehat c \|^2  \right)  +  \|\widehat c\|_2^2   \|\br_s^{i,j}-\widehat\br_s^{i,j} \|^2\Big]
	\leq \frac{\p C_0^2}{N-1}  \Big[  N \|c-\widehat c \|^2 + 4N \|\widehat c\|_2^2 \| \widehat\bX_s -\bX_s \|_F^2 \Big],
	\end{align*}
using  $\|\widehat \ba_{i\cdot}\|^2= 1$ for each $i$ and $\sum_i\sum_j \|\br_s^{i,j}-\widehat\br_s^{i,j} \|^2 \leq 4N\| \widehat\bX_s -\bX_s \|_F^2$.  

	Consequently, plugging the above two estimates into \eqref{eq:traj_est} 
	we obtain a bound
	\begin{align*}
	 \| \widehat \ba \bB(	\widehat \bX_s) \widehat c - \ba \bB(
	\bX_s)  c\|_F^2 &\leq    \frac{{\p}NC_0^2}{N-1} \left[ \|c\|_{2}^2  \|\ba-\widehat\ba\|_F^2 +  \|c-\widehat c \|^2  + 4\|\widehat c\|_2^2  \| \widehat\bX_s -\bX_s \|_F^2\right]. 
	\end{align*}
	Combining the above inequality with \eqref{eq:Etraj}, we conclude that 
	\begin{align*}
		\E[\|\widehat{\bX}_t-\bX_t \|^2]&\leq  \frac{{\p}NC_0^2}{N-1} \left[ T^2 (\|c\|_{2}^2 \|\ba-\widehat\ba\|_F^2+ \|\widehat c-c\|_2^2 ) +2\|\widehat c\|_2^2 T \int_0^t \E\Big[ \|\widehat{\bX}_s-\bX_s \|^2 \Big] ds \right] \\
		& \leq C_1  \left[ T^2 (C_2 \|\ba-\widehat\ba\|_F^2+ \|\widehat c-c\|_2^2 ) +2C_2 T \int_0^t \E\Big[ \|\widehat{\bX}_s-\bX_s \|^2 \Big] ds \right] 
	\end{align*}
	with $C_1= 2\p C_0^2$ and $C_2=\|\widehat c\|_2^2 +\| c\|_2^2  $. 
	Finally, \eqref{Ineq:Traj_err} follows from Gronwall's inequality.
\end{proof}

\subsection{Connection with the classical coercivity condition}\label{sec:CC-relation}

We discuss the relation between the joint and the interaction kernel coercivity conditions in Definitions \ref{def:jointCC}--\ref{def:kernelCC} and the classical coercivity condition for homogeneous system see e.g., {\rm \cite[Definition 1.2]{LLMTZ21} or \cite[Definition 3.1]{LZTM19pnas}}.  
To make the connection, we consider only a homogeneous multi-agent system in the form 
$ d {X}^i_t =\frac{1}{N-1}\sum_{j\neq i} \IK(X^j_t-X^i_t) dt + \sigma d {W}^i_t, \quad i\in[N]$.  
Such a system has a weight matrix with all entries being the same; note that the normalizing factor is $N -1$, since each agent interacts with all other agents.
Suppose that the initial distribution of $(X^1_0,\ldots, X^N_0)$ 
 is exchangeable (i.e., the joint distributions of $\{X_0^i\}_{i\in \mathcal{I}}$ and  $\{X_0^i\}_{i\in \mathcal{I_\sigma}}$ are identical, where $\mathcal{I}$ and $\mathcal{I_\sigma}$ are two sets of indices with the same size). 
Note that the distribution of $\bX_t= (X_t^1,\ldots,X_t^N)$ is exchangeable for each $t\geq 0$. This exchangeability plays a key role in simplifying the coercivity conditions below. It leads to the following properties. 
\begin{enumerate}[label=$(\mathrm{P\arabic*})$]
	\item\label{homo_rho} The exploration measure $\rho_L$ in \eqref{Def:exploration} is the average of the distributions of $\{X_{t_l}^1-X_{t_l}^2\}$:
         $\rho_L(A) =\frac{1}{L}\sum_{l=1}^L \prob{X_{t_l}^1-X_{t_l}^2\in A}, \, \forall A\in \R^d$,  
         and it has a continuous density supported on a bounded set, denoted by $\mathrm{supp}(\rho)$.  
	\item\label{homo-expections} Let $\br_{ij}(t_l)= X_{t_l}^i-X_{t_l}^j $ for any $i \neq j$. Then, for each $t_l$, 
          \begin{align*}
          	\E[\big| \IK(\br_{ij}(t_l)) \big|^2] & =\E[\big| \IK(\br_{12}(t_l))\big|^2], \, \forall i \neq j; \\
          	\E[ \innerp{\IK(\mathbf{r}_{ij}(t_l)),\IK(\mathbf{r}_{ik}(t_l))}_{\R^{Nd}}] & = \E[ \innerp{\IK(\mathbf{r}_{12}),\IK(\mathbf{r}_{13})}_{\R^{Nd}}], \, \forall i \neq j, i\neq k, j\neq k. 
          \end{align*}  
\end{enumerate}

We first extend the classical coercivity condition, which was defined for radial interaction kernels in the form $\IK(x)= \phi(|x|)\frac{x}{|x|}$, to the case of general non-radial interaction kernels. The extension is a straightforward reformulation of the definitions in  {\rm \cite[Definition 1.2]{LLMTZ21}, with minor changes accounting for the normalizing factor $1/(N-1)$ and lack of radial symmetry of the kernel. 

\begin{definition}[Classical coercivity condition for homogeneous systems] \label{def:homo-CC}
The homogeneous system with $a_{ij}=1$ sastifies the coercivity condition on a set $\calH\subseteq L^2_{\rho_L}$, with $\rho_L$ the exploration measure in \eqref{Def:exploration}, if  there exists $c_\calH>0$ such that for all $\IK\in \calH$
\begin{equation}\label{eq:homo-CC}
	\frac{1}{N(N-1)^2} \sum_{i=1}^N \frac{1}{L}\sum_{l=1}^L \E\left[ \bigg|\sum_{j\neq i} \IK(\mathbf{r}_{ij}(t_l) )\bigg|^2 \right] \geq  c_{\mathcal H} \|\IK\|_{L^2_{\rho_L}}^2\,.
\end{equation}	
\end{definition}

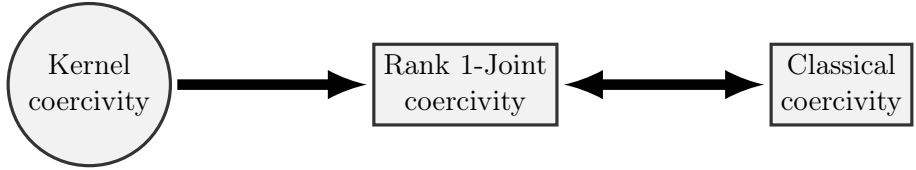
\begin{figure}[hbt]
\center
	\begin{tikzpicture}[roundnode/.style={circle, draw=black!80, fill=gray!10, very thick, minimum size=7mm},
		squarednode/.style={rectangle, draw=black!80, fill=gray!10, very thick, minimum size=5mm},]
	    \node[roundnode, align=center] (A) at (-5,0) {Kernel \\ coercivity};
	    \node[squarednode, align=center] (B) at (0,0) {Rank 1-Joint \\ coercivity};
	    \node[squarednode, align=center] (C) at (5,0) {Classical \\ coercivity};
	    
	    \draw[line width=4pt, shorten <=2pt, shorten >=2pt, -latex] (A) to [out=0,in=180] (B);
	    \draw[line width=4pt, shorten <=2pt, shorten >=2pt, latex-latex] (C) to [out=180,in=0] (B);
	\end{tikzpicture}  
	\caption{The relation between coercivity conditions for homogenous systems. }
	\label{Fig:Coer_relation}
\end{figure}

We show next that the three coercivity conditions (the joint and the interaction kernel coercivity conditions in Definitions \ref{def:jointCC}--\ref{def:kernelCC} and the above classical coercivity condition) are related as follows:
\begin{itemize}
\item The joint coercivity condition is equivalent to the classical coercivity.  
\item The kernel coercivity condition \eqref{eq:kernelCC} is stronger than the classical coercivity. It yields a suboptimal coercivity constant $\frac{c_{0,\calH}}{N-1}$ with $c_{0,\calH}\in (0,1)$ (see Proposition {\rm\ref{prop:Kernelcc-Jcc}}), which is smaller than $c_\calH= \frac 1 {N-1}$ for the classical coercivity. 
\end{itemize}
Without loss of generality, we set $L=1$ and drop the time index $t_l$ hereafter. Hence, we can write $\|\IK\|_{\rho_L}^2 = \E[|\IK(\mathbf{r}_{12})|^2]$. 
By Property \ref{homo-expections}, we can simplify Eq.\eqref{eq:homo-CC} in the above classical coercivity condition to 
$
 \frac{1}{(N-1)^2}\E\left[ \bigg|\sum_{j=2}^N \IK(\mathbf{r}_{1j})\bigg|^2 \right] \geq  c_{\mathcal H} \E[|\IK(\mathbf{r}_{12})|^2]
$.
This is exactly the joint coercivity condition after considering Property \ref{homo-expections}. Hence, the joint and the classical coercivity are equivalent for homogeneous systems with an exchangeable initial distribution.

On the other hand, the kernel coercivity \eqref{eq:kernelCC} is stronger than the classical coercivity. By Proposition {\rm\ref{prop:Kernelcc-Jcc}}, it yields a suboptimal coercivity constant $\frac{c_{0,\calH}}{N-1}$. This constant is smaller than the optimal constant $c_\calH= \frac 1 {N-1}$ in the classical coercivity condition in \cite{LLMTZ21}. 

Interestingly, while both the interaction kernel coercivity condition and the classical coercivity condition lead to the joint coercivity, they approach it from different directions. Specifically,  the classical coercivity condition seeks the infimum 
$ \inf_{\IK\in L^2_{\rho_L}, \|\IK\|_{L^2_{\rho_L}}=1}\E[ \innerp{\IK(\mathbf{r}_{12}),\IK(\mathbf{r}_{13})}] = 0$
to obtain $c_\calH= \frac{1}{N-1}$ as in \cite{LLMTZ21}. Under the assumption that $\br_{12}$ and $\br_{13}$ are independent conditional on $\calF^1$, which implies $\E[ \innerp{\IK(\mathbf{r}_{12}),\IK(\mathbf{r}_{13})}] = \E[ | \E[ \IK(\mathbf{r}_{12}) \mid \calF^1]|^2 ]$, the above infimum is equivalent to   
 $
\inf_{\IK\in L^2_{\rho_L}, \|\IK\|_{L^2_{\rho_L}}=1} \E[ | \E[ \IK(\mathbf{r}_{12}) \mid \calF^1]|^2 ] =0.
$ 

In contrast, the kernel coercivity, reducing to  $\E[\trCov( \IK(\mathbf{r}_{12}) \mid \calF^1)]  \geq c_{\mathcal H}^0 \E[|\IK(\mathbf{r}_{12})|^2]
$ 
after taking into account exchangeability, is equivalent to 
$
\inf_{\IK\in L^2_{\rho_L}, \|\IK\|_{L^2_{\rho_L}}=1}  \E[ | \E[ \IK(\mathbf{r}_{12}) \mid \calF^1]|^2 ] \leq (1- c_{\mathcal H}^0). 
$ 
Hence, the classical coercivity sets a lower bound for the term $\E[ | \E[ \IK(\mathbf{r}_{12}) \mid \calF^1]|^2 ] $, whereas the kernel coercivity sets an upper bound for this term so that the loss of dropping this terms (in \eqref{eq:kernelCC-drop}) is controlled. In general, it is easier to prove the upper bound than the lower bound.

\section{Details and additional numerical results} \label{sec:addi-num}

\subsection{Computational costs} \label{s:emp_comp_perforamance}
The detailed breakdown of the computational costs, leading to the overall costs in table \ref{tab:complexity}, is as follows. 
For both algorithms, the data processing involves $MLdN^2{\p}$ flops on evaluating $\{\psi_k(X_{t_l}^{j,m}-X_{t_l}^{i,m} ),1\leq i,j\leq{N}\}_{1\leq k\leq{N}}^{1\leq m\leq M}$, where these computations can be done in parallel in $M,L$ or $N$.  
\begin{figure}[thb]\vspace{-6mm}
\centering
	\includegraphics[width=0.4\textwidth]{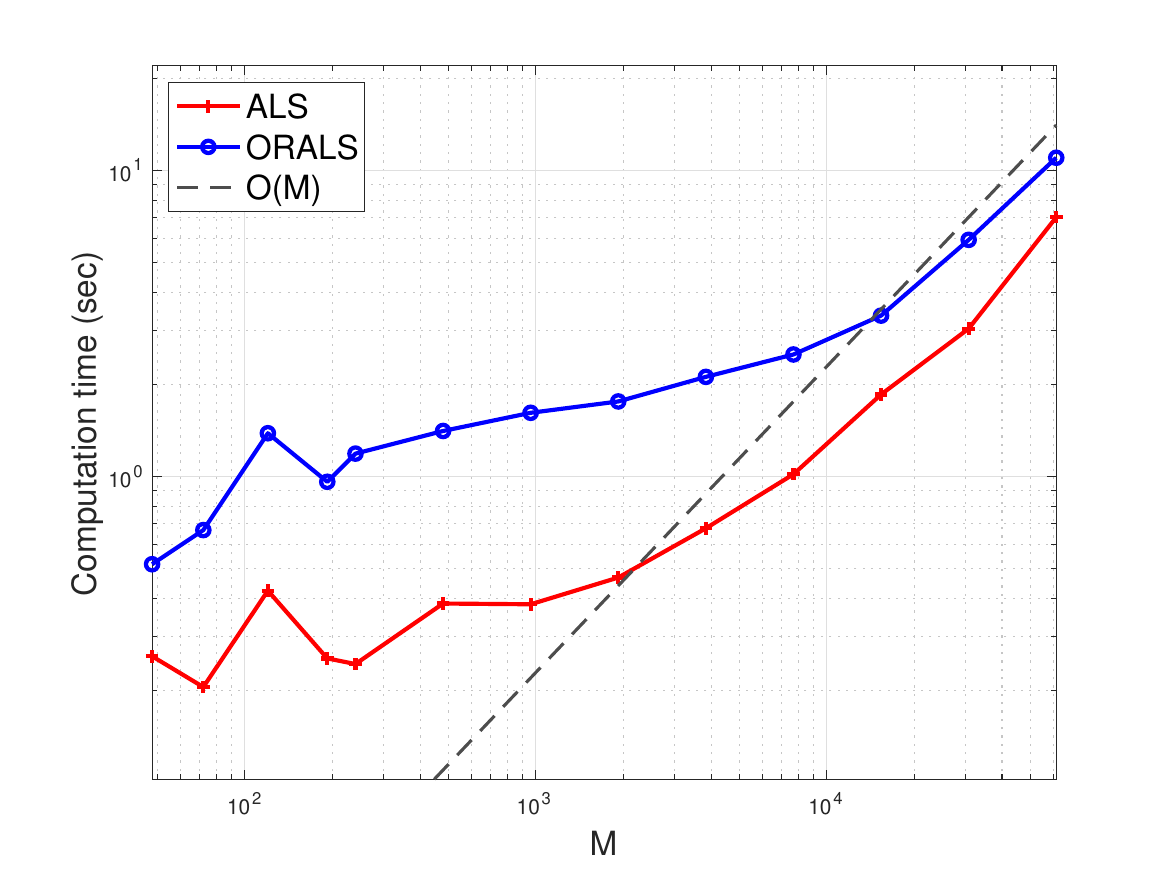}
	\includegraphics[width=0.4\textwidth]{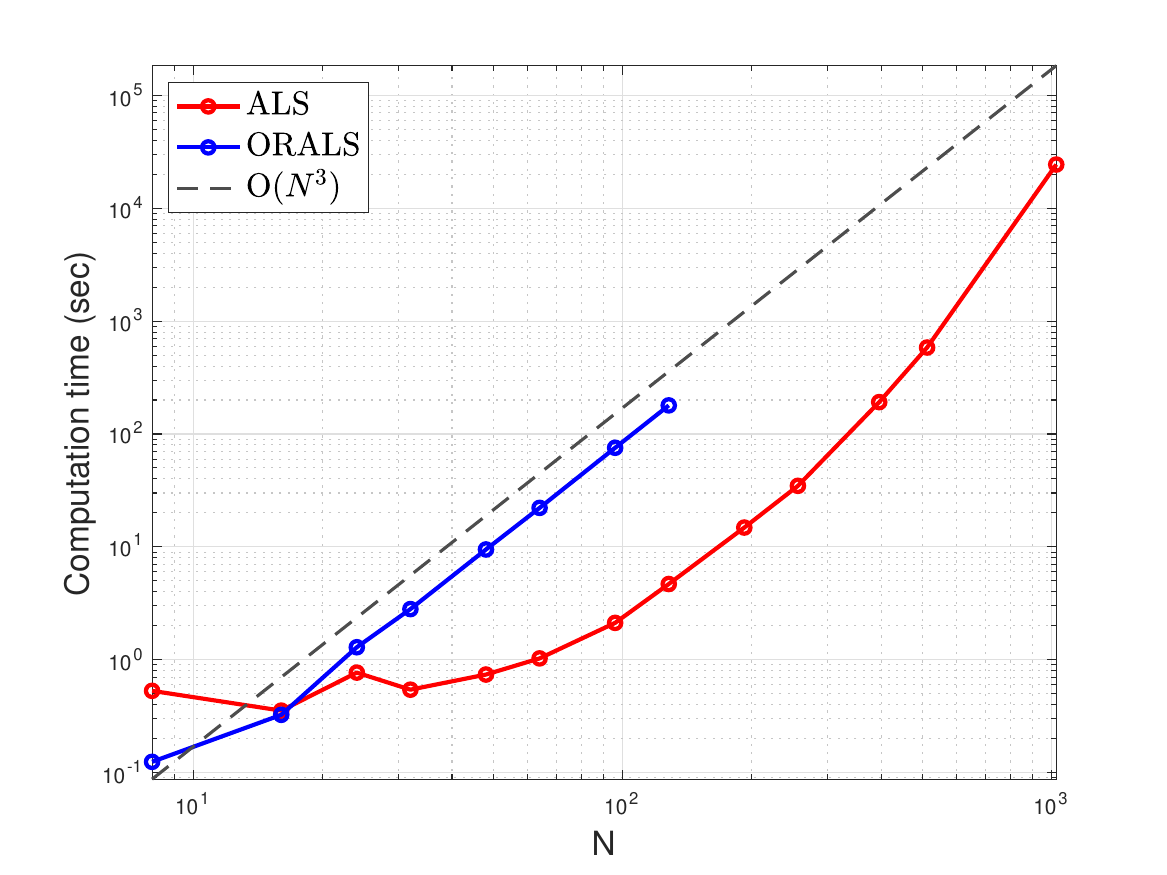}\vspace{-2mm}
	\caption{Computation time for the construction of the ALS and ORALS estimators, as a function of $M$ (left) and of $N$ (right). In both plots, the other parameters are set as: $L=2$, $d=1$, $n=8$; in the first plot $N=16$, and in the second plot $M=1024$; the interaction kernel is the inverse Fourier transform of a random vector with decaying coefficients, and no regularization is imposed. The scaling of ORALS as $N^3$ in the figure on the right, instead of the expected $N^4$, as the term $MLdN^3$ overcomes $N^4$ for the values of the parameters we have here; we could not perform runs with larger $N$ than what is represented in the figure due to excessive memory requirements. Tests are run on a machine with 2 processors with 12 cores each, and $448$GB of RAM.}
	\label{fig:compute_test}\vspace{-5mm}
\end{figure}

The ALS computation consists of two additional parts: solving the least square problems to estimate $\mathbf{a}$ and $\IK$, and iterating. 
In each iteration, when solving the least squares for the rows of the weight matrix via the $MLd\times N$ matrices, it takes $O(MLdNpN_{par})$ flops to assemble the regression matrices and $O((MLdN^2\wedge (MLd)^2N)N_{par})$ flops to solve the least squares problems; when solving the coefficient $c$ via the $MLdN^2\times \p$ matrix, it takes $O(MLdNN_{par}\p)$ flops to assemble the regression matrix and $O(MLdN^2\p^2\wedge(MLdN^2)^2\p)$ flops to solve the least squares problem.  Here $N _{par}$ means that the computation can be done trivially in parallel. Lastly, the number of iterations is often below, say, $20$, independent of $M,N,{\p}$, but we do not have any theoretical guarantees for this phenomenon. The total computational cost of ALS is $O(MLdN^2 (N_{par}+{\p}^2))$ flops, in the natural regime $M\ge N^2+\p$. 

The ORALS computation consists of three parts: data extraction, solving the least squares, and matrix factorization. The data extraction involves $MLdN^2\p$ flops, and the matrix factorization for the $Z_i$'s takes a negligible cost of $O((N^2+{\p^2})N_{par})$ flops.   The major cost takes place in solving the least squares. 
The long-matrix approach takes about $O(MLd(N\p)^2N_{par})$ flops to solve all the $Z_i$'s, in which assembling the $MLd\times Np$ regression matrix does not take extra time since it is simply reading the extracted array. 
The normal equation approach would require $O((MLN)_{par}dN^2\p^2+(N\p)^3N_{par})$ flops, which consists of $O((MLN)_{par}dN^2p^2)$ flops to assemble the normal matrices and $O((Np)^3)N_{par})$ flops to solve the equations. 
Therefore, the total computational cost for ORALS is of order $O(MLd(N\p)^2N_{par})$ for the long-matrix approach and $O((MLN)_{par}dN^2\p^2+(N\p)^3N_{par})$ for the normal matrix approach. 
When $ML>N^2+\p$, the normal equation approach is more efficient since the computation can be in parallel in $ML$.

We corroborate the computational complexity of ALS and ORALS discussed in section \ref{ss:compcomplexity} and reported in table \ref{tab:complexity} with the measurements in wall-clock runtime reported in figure \ref{fig:compute_test}.

\subsection{Regularization}\label{sec:regu_append}
Regularization  is helpful to produce stable solutions when the matrix in the least squares of ALS or ORALS is ill-conditioned, and the data is noisy. We have tested five methods to solve the ill-posed linear equations: direct backslash (denoted by ``NONE''), pseudo-inverse, minimal norm least squares (denoted by ``lsqmininorm''), the Tikhonov regularization with Euclidean norm (denoted by ``ID''), and the data-adaptive RKHS Tikhonov regularization (denoted by ``RKHS''). 

The data-adaptive RKHS Tikhonov regularization uses the norm of an RKHS adaptive to data and the basis functions of the kernel. In estimating the kernel coefficients in ALS, in addition to the regression matrix and vector, it uses the basis matrix $B$ with entries  
\begin{equation}\label{eq:B_psi}
B_\psi = \frac{1}{ (N-1)NLM}\sum_{l=0}^{L-1}\sum_{m=1}^{M} \sum_{j\neq i}\innerp{\psi_k(\br_{ij,t_l}^m),\psi_l(\br_{ij,t_l}^m)}_{\R^d}, \quad \br_{ij,t}= X^j_t-X^i_t, 
\end{equation}
where $\{\psi_k\}_{k=1}^{\p}$ are the basis functions in the parametric form and recall that $\sum_{j\neq i}:= \sum_{i=1}^N \sum_{j=1,j\neq i}^N$. In ORALS for the estimation of $\widehat z_{i,M}$ in \eqref{eq:Ai_operator}, we supply the DARTR with basis matrix $I_N\otimes B_\psi\in \R^{N\p\times N{\p}}$ with $B_\psi$ in \eqref{eq:B_psi}, where $\otimes$ denotes the Kronecker product of matrices.

\begin{figure}[thb]
	\centering
	\includegraphics[width=0.4\textwidth]{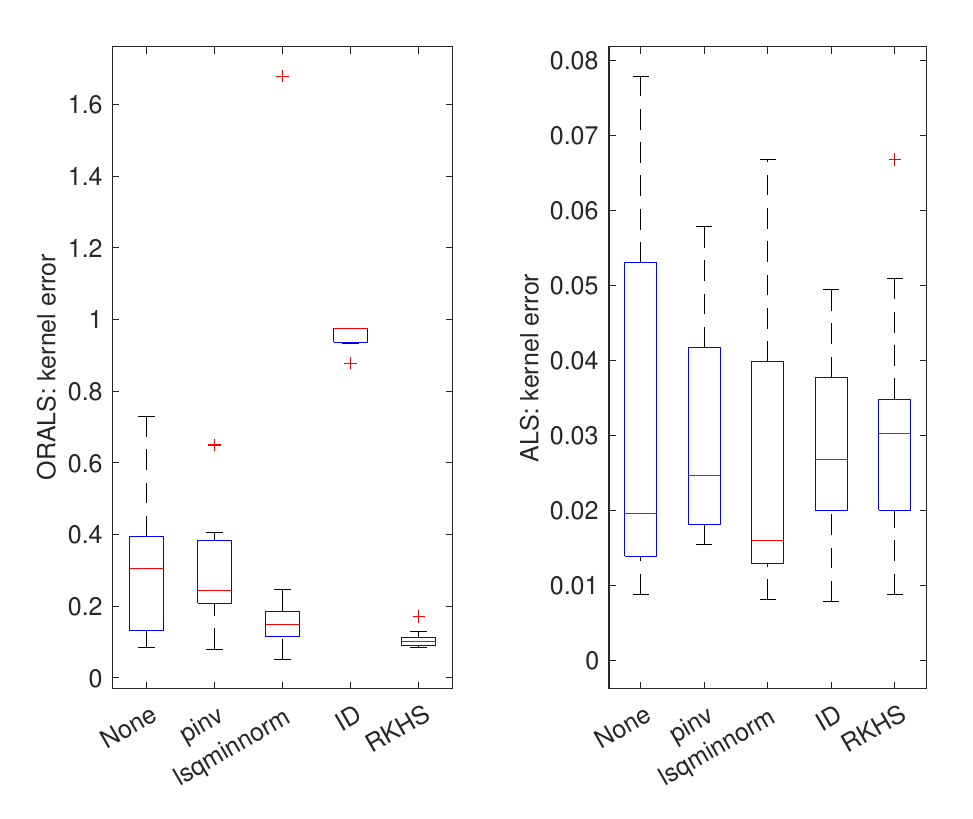}
	\includegraphics[width=0.4\textwidth]{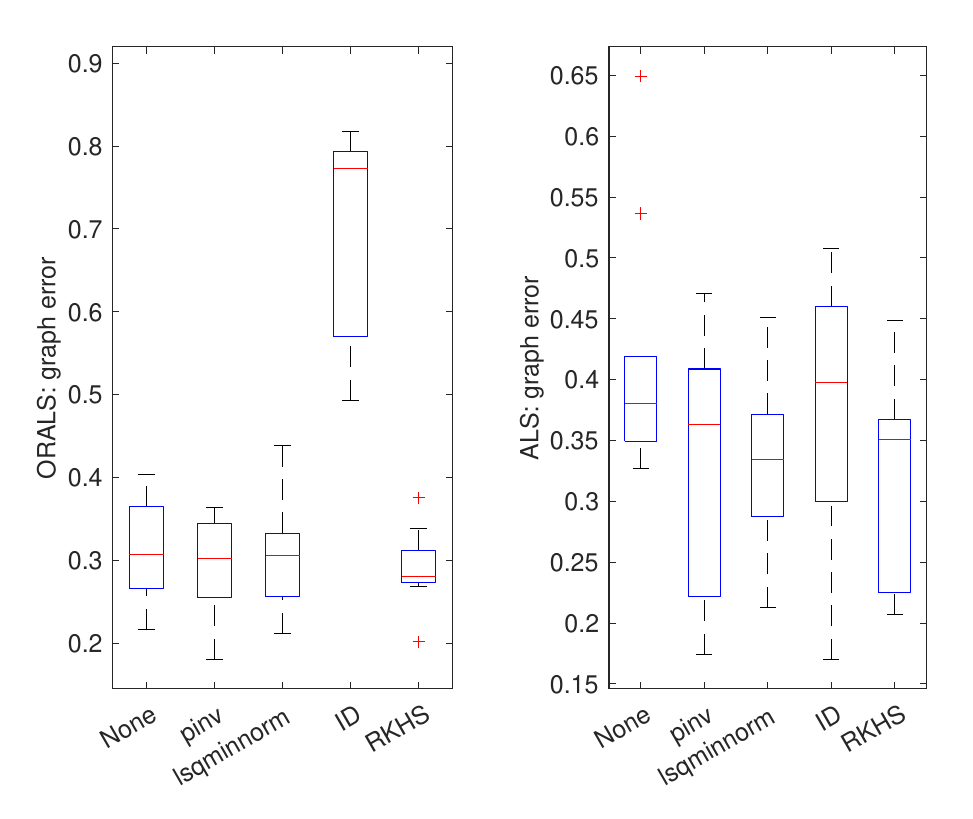}
	\caption{Errors of estimators in 10 simulations for different regularization methods. Here the regression matrices are deficient-ranked due to a small sample size $M=64$. The other parameters are $N=20$, $\p=3$, $L=5$  and $d=2$.}
	\label{fig:regu}\vspace{-3mm}
\end{figure} 
Figure \ref{fig:regu} shows the errors of regularized estimators in 10 simulations of the Lennard-Jones model. The model parameters are $N=20$, $\p=3$, $L=5$  and $d=2$. Here, the sample size $M=64$ is relatively small, so the regression matrices in ORALS tend to be deficient-ranked; in contrast, the regression matrices in ALS are well-conditioned. The results show that the minimal norm least squares and DARTR lead to more robust and accurate estimators than the other methods for the ORALS, but all methods perform similarly for ALS. Additional numerical tests show that as the sample size increases, the regression matrices for both ORALS and ALS become well-posed, and the direct backslash and the pseudo-inversion lead to accurate solutions robust to noise.

In short, regularization is helpful when the regression matrices are ill-conditioned and the data is noisy; otherwise, either the direct backslash operator or the pseudo-inversion is adequate. In the parametric estimation of the kernel, the regression matrices are often well-conditioned. However, in nonparametric estimation, the regression matrices are often ill-conditioned and even rank-deficient in the process of selecting an optimal dimension for the hypothesis space to achieve the optimal bias-variance tradeoff. 

Another type of regularization, different from those above that regularize the least squares in ALS or ORALS, 
is to enforce the low-rank property. Such regularizers include minimizing the nuclear norm \cite{RFP2010} or adding a term maintaining the norm-preserving property of the Hessian of the loss function \cite{GJZ2017}. They could be beneficial to the operation regression stage of the ORALS algorithm. We leave further exploration of these regularizers for future work.

\subsection{Dependence on noise level and stochastic force}\label{sec:decay_force_noise}
To examine robustness to stochastic force and observation noise, we test the estimation error(s) as a function of the stochastic force and the noise level.  
Figure \ref{fig:conv_vis} shows that for both ALS and ORALS estimators, the error decays linearly in the stochastic force level $\sigma$, with the reported mean and std computed over 100 simulations. In each simulation, we set the observation noise $\sigma_{obs} = 10^{-7}$, and the sample size is $M = 1000$. To see the effects of the stochastic force, we use long trajectories with time length $T = 100$.
Figure \ref{fig:conv_vis} also shows that for both ALS and ORALS estimators, the estimation error(s) decay linearly in the noise level $\sigma_{obs}$. In each of 100 simulations, we take $\sigma=0$, $M = 1000$, and $T = 1$. 
\begin{figure}[thb] \vspace{-4mm}
\centering
 \includegraphics[width=.9\textwidth]{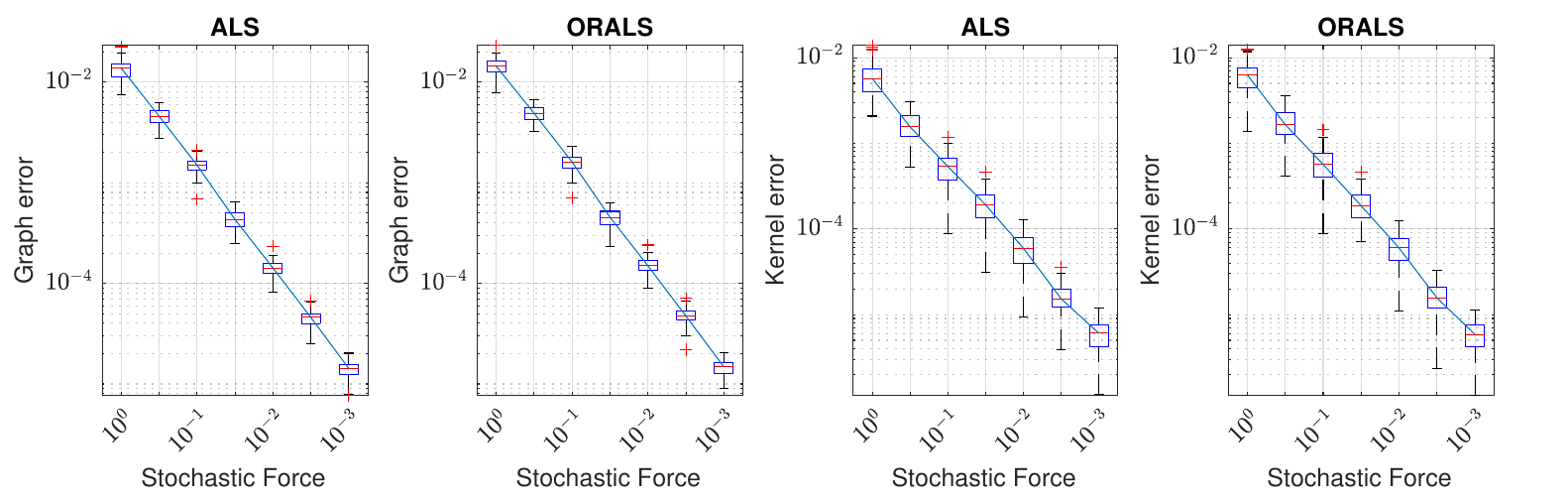}
 \includegraphics[width=0.9\textwidth]{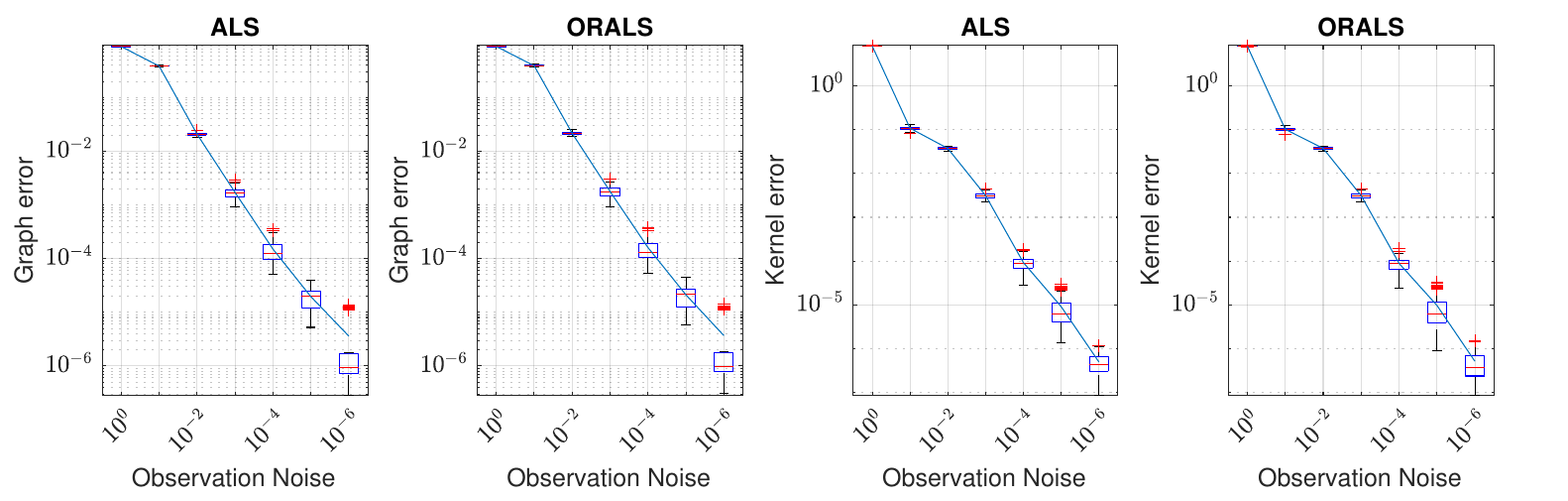}
\caption{Top: Decay of the estimation errors as the stochastic force decreases; bottom: decay of the estimation errors as the observation noise level decreases.}
\label{fig:conv_vis}\vspace{-6mm}
\end{figure}

\subsection{Additional tests on a directed graph on a circle}

We also provide an example with a very simple graph in our admissible set $\mathcal{M}$, i.e., a directed circle graph.  We present the graph, kernel estimation, and true trajectory in Figure \ref{fig:traj_kernel_graph_short}; the rest of the results are very similar to the previous settings and are hence omitted. 

\begin{figure}[thb] 
	\centering
	\includegraphics[width=\textwidth]{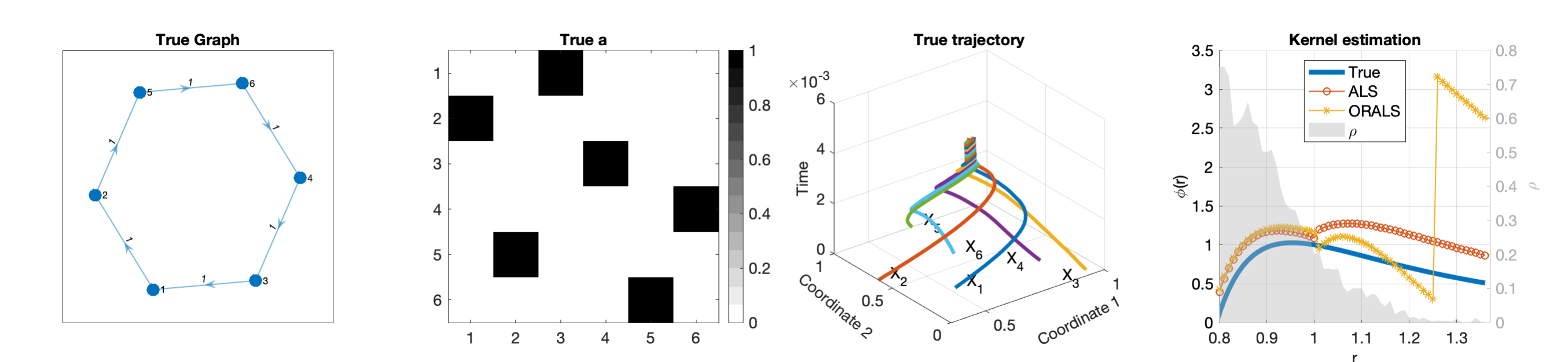}
	\caption{ An example of the simplest graph, the true trajectory, and the kernel estimators. Each particle only follows one other particle, forming a spiral dynamical behavior.  There is limited data of $r\in[1.2, 1.4]$ since particles quickly converge together, leading to small values of $\rho$ in the region. As a result, the kernel estimators have large errors in the region; yet, their overall $L^2_\rho$ errors remain small with $\varepsilon_K$ for ALS is $1.19\times 10^{-2}$ and for ORALS is $1.89\times 10^{-2}$. 
}
	\label{fig:traj_kernel_graph_short}
\end{figure}

\subsection{Additional details for identifying the leader-follower model}\label{Append:addition}

 We examine a Leader-follower system where leaders have significant impacts on others; see the left panel of Figure \ref{Fig:LF_cluster}. In the Impact-Influence coordinate, as shown in the middle panel of Figure \ref{Fig:LF_cluster}, one can observe that the leaders $A1$ and $A6$ stand out from the rest. As the sample size $M$ increases, both the estimated graph $\widehat{\ba}$ and the estimated interaction kernel $\widehat{\IK}$ in the top of \eqref{Fig:leader-follower} become more precise. It becomes evident that a more accurate estimator $\widehat{\ba}$ contributes to more precise identifications of leaders and their followers.  The Leader-follower network estimated with $M=100$ almost recovers the true network (the left one in Figure \ref{Fig:LF_cluster}). 
 Thus, the clustering result of $M=100$ shown in the last row of the right panel in Figure \ref{Fig:LF_cluster} aligns with the ground truth depicted in the first row. Nevertheless, it's noteworthy that identifying leaders and properly classifying followers remain feasible even when the estimator $\widehat{\ba}$ is not highly precise. 

\begin{figure}[thb] 
	\centering
	\includegraphics[width=0.275\textwidth]{figures/leader_follower_true.pdf}
	\includegraphics[width=0.338\textwidth]{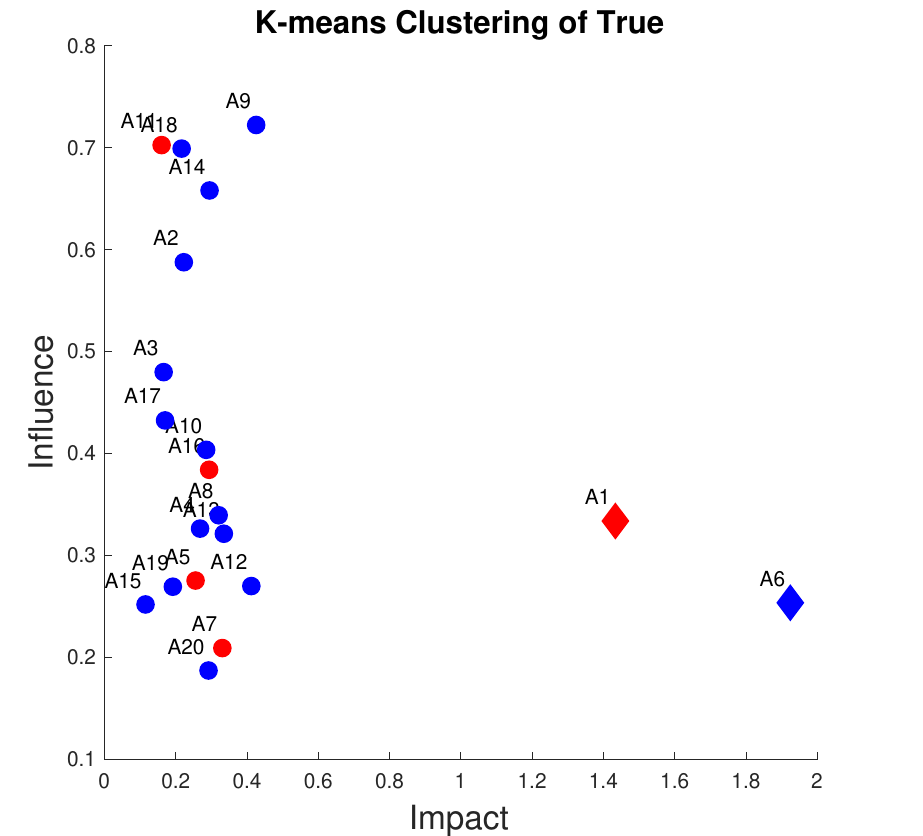}
	\includegraphics[width=0.36\textwidth]{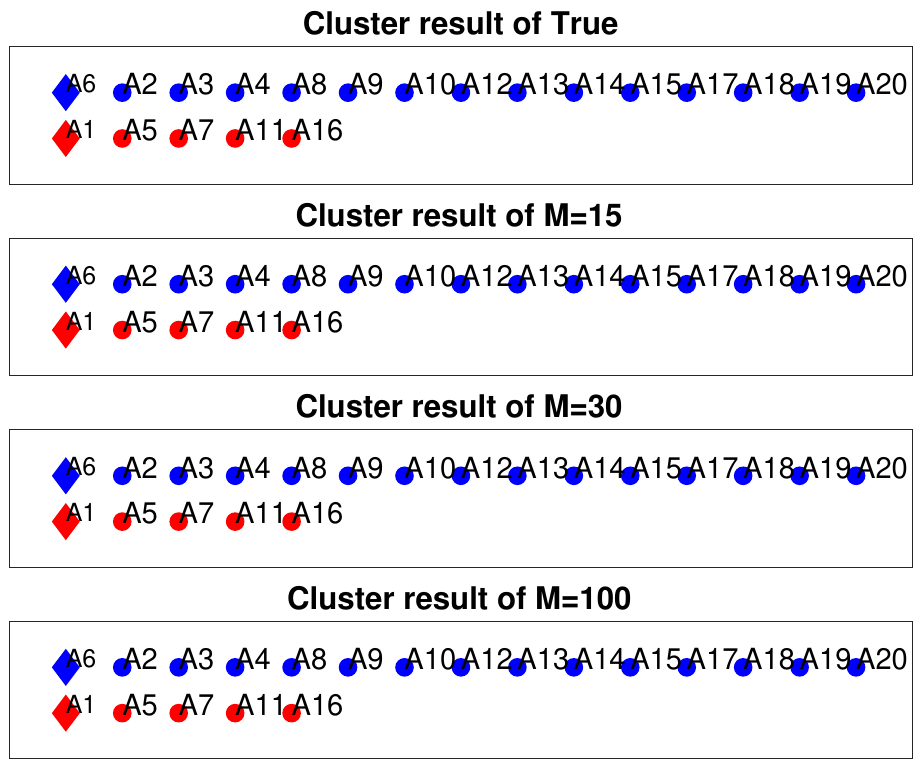}
	\caption{Left: the true Leader-follower network; Middle: the clustering of the true system, with two groups led by $A1$ (red group) and $A6$ (blue group); Right: the results of clustering based on the estimates with sample sizes  $M\in \{15,30,100\}$. The graph errors (in Frobenius norm) are $0.1254$, $9.8\times 10^{-3}$, $1.8\times 10^{-3}$; and the kernel errors are $0.0115$, $1.4\times 10^{-3}$, $3\times 10^{-4}$.}
	\label{Fig:LF_cluster}
\end{figure}

\section{Connection with matrix sensing and RIP}\label{Appen_RIP}
In this section, we connect our joint inference problem with matrix sensing (see \cite{GJZ2017,ZSL2017,RFP2010} for example) and study the restricted isometry property (RIP) of the joint inference. 

\paragraph{Matrix sensing and RIP}
The matrix sensing problem aims to find a low-rank matrix $Z^*\in \R^{n_1\times n_2}$ from data $b_m=\langle A_m,Z^*\rangle_{F}$, where $A_1,\cdots,A_M\in \R^{n_1\times n_2}$ are sensing matrices. 
To find $Z^*$ with rank $r\ll n_1 \wedge n_2$, one solves the following non-convex optimization problem
\begin{equation}\label{Mat_Sens}
	\min_{Z\in\R^{n_1\times n_2},\text{rank}(Z)=r} F(Z) = \frac{1}{M}\sum_{m=1}^M |\langle A_m,Z\rangle_{F}-b_m|^2=\frac{1}{M}\sum_{m=1}^M |\Tr(A_m^\top Z)-b_m|^2\,.
\end{equation}
The constrained optimization problem \eqref{Mat_Sens} is NP-hard. A common method of factorization is introduced by \cite{BM2003,BM2005} to treat \eqref{Mat_Sens}. Namely, we express $Z=UV^\top$ where $U\in\R^{n_1\times r}$ and $V\in\R^{n_2\times r}$. Then \eqref{Mat_Sens} can be transformed to the unconstraint problem
\begin{align}\label{Mat_Factor}
	\min_{U\in\R^{n_1\times r},V\in\R^{n_2\times r}} F(U,V) &= \frac{1}{M}\sum_{m=1}^M |\langle A_m,UV^\top\rangle_{F}-b_m|^2 \\
	&=\frac{1}{M}\sum_{m=1}^M |\Tr(A_m^\top UV^\top)-b_m|^2=\frac{1}{M}\sum_{m=1}^M |\Tr(U^\top A_m V)-b_m|^2\,. \nonumber
\end{align}
To simplify the notations, let us define a linear sensing operator $\sA :\R^{n_1\times n_2}\to \R^M$ by 
\begin{equation}\label{Def:SensOp}
	\sA(Z)= M^{-1/2} \left(\langle A_1,Z\rangle_{F},\cdots,\langle A_M,Z\rangle_{F}\right)\,.
\end{equation}

\begin{definition}[Restricted isometry property (RIP)]\label{Def:RIP} 
The linear map $\sA$  satisfy the $(r,\delta_{r})$-RIP condition with the RIP constant $\delta=\delta_{r}\in[0,1)$ if there is a (strictly) positive constant $C$  
\begin{equation}\label{Cond:RIP}
	(1-\delta) \norm{Z}_F^2 \leq \frac{1}{C} \norm{\sA (Z)}_2^2=\frac{1}{CM}\sum_{m=1}^M \langle A_m,Z\rangle_{F}^2 \leq (1+\delta) \norm{Z}_F^2
\end{equation}
holds for all $Z$ with rank at most $r$. We also simply say that $\sA$ satisfies the rank-$r$ RIP condition without specifying the RIP constant $\delta_{r}\in[0,1)$.
\end{definition}
\begin{remark}\label{Rmk:RIP_discuss}
The normalization factor $C$ in the condition \eqref{Cond:RIP} was introduced in \cite{RohdeTsybakov2011}. It enables the application of RIP to a larger class of sensing matrices that can be scaled to a near-isometry. In particular, in our setting, the sensing matrices are mostly far from an isometry, even up to a rescaling.  
\end{remark}
The restricted isometry property and the restricted isometry constant are powerful tools in the theory of matrix sensing \cite{RFP2010}, a generalization of compressed sensing \cite{CandesTao2005}. For example, they can characterize the identifiability of matrix sensing problems.
\begin{theorem}[Theorem 3.2 in \cite{RFP2010}]\label{Thm:RFP2010}
	Suppose that $\delta_{2r}<1$ for some integer $r\geq 1$, i.e., $\sA$ satisfy the rank-$2r$ RIP condition for $r\geq 1$. Then $Z^*$ is the only matrix of rank at most $r$ satisfying $\sA(Z)=b=[b_1,\cdots,b_M]^\top$.
\end{theorem}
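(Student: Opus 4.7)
The plan is to argue by contradiction in the standard manner for such identifiability results in matrix sensing. Suppose there exists another matrix $Z^{**}\in\R^{n_1\times n_2}$ with $\mathrm{rank}(Z^{**})\le r$ such that $\sA(Z^{**})=b$. Define the difference $W := Z^* - Z^{**}$. Since $\sA$ is linear, we immediately have $\sA(W)=\sA(Z^*)-\sA(Z^{**})=b-b=0$.

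The key step is the rank bound on $W$. Because both $Z^*$ and $Z^{**}$ have rank at most $r$, and since for any two matrices $\mathrm{rank}(A-B)\le\mathrm{rank}(A)+\mathrm{rank}(B)$ (the column space of the difference is contained in the sum of the column spaces), we obtain $\mathrm{rank}(W)\le 2r$. This is precisely the setting in which we are allowed to invoke the rank-$2r$ RIP hypothesis $\delta_{2r}<1$.

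Applying the lower inequality in the RIP condition \eqref{Cond:RIP} to the matrix $W$ of rank at most $2r$, we get
\begin{equation*}
(1-\delta_{2r})\,\|W\|_F^2 \;\le\; \frac{1}{C}\,\|\sA(W)\|_2^2 \;=\; 0.
\end{equation*}
Since $\delta_{2r}<1$ by assumption, the factor $1-\delta_{2r}$ is strictly positive, which forces $\|W\|_F=0$, i.e., $W=0$ and hence $Z^{**}=Z^*$. This proves uniqueness.

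The argument is short and structurally simple; there is no serious obstacle. The only subtlety worth highlighting is the rank-doubling observation: the RIP constant must be imposed at rank $2r$ (not $r$) precisely because the natural object appearing in the uniqueness argument is a difference of two rank-$r$ matrices, whose rank may double. This is also what makes the rank-$2$ joint coercivity condition (Definition \ref{def:jointCC}) — rather than only its rank-$1$ counterpart — the relevant hypothesis for identifiability in the present paper's setting, mirroring the classical rank-doubling mechanism in matrix sensing.
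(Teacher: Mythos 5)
Your proof is correct and is exactly the standard argument for this uniqueness result: the difference $W = Z^*-Z^{**}$ of two rank-$\le r$ matrices has rank at most $2r$ by subadditivity, lies in the kernel of $\sA$ by linearity, and the lower RIP inequality with $\delta_{2r}<1$ then forces $W=0$. The paper itself does not reprove this theorem (it is cited directly from \cite{RFP2010}, where the same two-line argument appears as the proof of their Theorem 3.2), so there is nothing to contrast; your concluding remark linking the rank-doubling phenomenon to the role of the rank-$2$ joint coercivity condition is an accurate and useful observation.
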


This article establishes a connection between the rank-1 and rank-2 Restricted Isometry Property (RIP) conditions and their counterparts in joint coercivity conditions.

\paragraph{Joint inference of $\ba$ and $c$ as matrix sensing problems}
In our setting \eqref{eq:ips_K}, the estimator $(\widehat{\ba},  \widehat{c} )$ is the minimizer of the following loss function 
\begin{align}
\calE_{L, M}(\ba,c) &= \frac{1}{MT}\sum_{l=0, m = 1}^{L-1,M}\norm{\Delta\bX_{t_l}^m -\ba \bB (\bX_{t_l}^m) c\Delta t}_F^2 = \sum_{i=1}^N \calE^{(i)}_{L, M}(\ba_{i\cdot},c)\,, \nonumber \\
\text{where}\quad \calE^{(i)}_{L, M}(\ba_{i\cdot},c)&= \frac{1}{MT}\sum_{l=0, m = 1}^{L-1,M} \norm{(\Delta\bX_{t_l}^m)_i -\ba_{i\cdot} \bB (\bX_{t_l}^m)_i c\Delta t}_F^2\,.\label{Def:LossFun_ac}
\end{align}
If we minimize the loss function $\calE_{L, M}(\ba,c)$ row by row, i.e. by minimizing the loss functions $\calE^{(i)}_{L, M}(\ba_{i\cdot},c)$ for $i=1$ to $N$, each minimization is a rank-one matrix sensing problems \eqref{Mat_Factor} by substituting $U=\ba_{i\cdot}$, $V=c$, $b_m=(\Delta\bX_{t_l}^m)_i$ and $A_m=\bB (\bX_{t_l}^m)_i$ for each row $i$. To illustrate the idea, consider $d=1$, $L=1$, and $\Delta t=1$. Thus we set the rank-one decomposition $Z=UV^\top$ where $U=\ba_{i\cdot}\in \R^{N-1}$ and $V=c\in\R^{\p}$. Also, we define the sensing operator $\sA :\R^{(N-1)\times \p }\to \R^M$ with the sensing matrices 
\begin{align}\label{Def:SensMat_IPS}
	A_m = \bB (\bX_{t_0}^m)_i =\Big[ \psi_k\big(X_{t_0}^{j,m}-X_{t_0}^{i,m}\big) \Big]_{j \neq i \atop 1\leq k\leq \p}\,, \quad m=1,\cdots,M\,,	
\end{align}
where $\bX_{t_0}=(X_{t_0}^{1},\cdots,X_{t_0}^{N})$ is the initial condition and $\{\psi_k\}_{k=1}^p$ is the set of basis functions. Therefore, 
\begin{align*}
	\ba_{i\cdot} \bB (\bX_{t_0}^m)_i c = \langle A_m,Z\rangle_{F}= \Tr(A_m^\top UV^\top)\,. 
\end{align*}
In Section \ref{sec:multi_type}, the model \eqref{eq:ips_K} has $Q$ types of interaction kernels. We shall take $Q=2$ as an example to explain the connection with higher-rank matrix sensing problems. Namely, $\kappa(i)=1$ or $2$ indicating the type of kernel for agent $i$, and the coefficients are 
\begin{equation*}
	c_{ki}=c_{k}^{\kappa(i)}:=
	\begin{cases}
		c_{k}^{(1)}\,, & \kappa(i)=1\,;\\
		c_{k}^{(2)}\,, & \kappa(i)=2\,.
	\end{cases}
\end{equation*}
Without loss of generality, we still set  $d=1$, $L=1$, and consider $i$-th row. Thus, the interacting part in the system \eqref{eq:ipog_multitype} can be rewritten to be
\begin{align}\label{eq:ipog_multitype_i=1}
	\ba_{i\cdot}\bB(\bX_t^m)_i \bc_{\cdot i}  &= \sum_{j\neq i} \ba_{ij}\sum_{k=1}^{\p} \psi_k(X^{j,m}_t-X^{j,m}_t)c_{k}^{\kappa(i)} \nonumber \\
	&= \sum_{j\neq i} \ba_{ij}^{(1)}\sum_{k=1}^{\p} \psi_k(X^{j,m}_t-X^{i,m}_t)c_{k}^{(1)} + \sum_{j=2}^N \ba_{ij}^{(2)}\sum_{k=1}^{\p} \psi_k(X^{j,m}_t-X^{i,m}_t)c_{k}^{(2)}, 
\end{align}
where $\ba_{ij}^{(1)}=\ba_{ij}$ if $\kappa(i)=1$, $\ba_{ij}^{(1)}=0$ if $\kappa(i)=2$ and $\ba_{ij}^{(2)}$ is defined similarly. So, selecting a rank-two decomposition $Z=UV^\top$ with
$U=[\ba_{i\cdot}^{(1)},\ba_{i\cdot}^{(2)}]\in \R^{(N-1)\times 2}$ and $V=[c^{(1)},c^{(2)}]\in \R^{p\times 2}$,
\eqref{eq:ipog_multitype_i=1} can be expressed as
\begin{align*}
	\ba_{i\cdot}\bB(\bX_t^m)_i \bc_{\cdot i}=\langle A_m,Z\rangle_{F}= \Tr(A_m^\top UV^\top)\,. 
\end{align*}
Here, $A_m$ is the same sensing matrix defined in \eqref{Def:SensAm}. Also, for another multitype kernel model where the type of interacting kernel depends on agent $j$
\begin{align*}
	\ba_{i\cdot}\bB(\bX_t^m)_i \tilde{\bc}_{\cdot j}  &= \sum_{j\neq i} \ba_{ij}\sum_{k=1}^{\p} \psi_k(X^{j,m}_t-X^{j,m}_t)\tilde{c}_{k}^{\kappa(j)}\,,
\end{align*}
we have the same expression with $\ba_{ij}^{(1)}$ and  $\ba_{ij}^{(2)}$ adapted accordingly.

In the classical matrix sensing problem (refer to, for example, \cite{BM2003,RFP2010,LeeStoger2022}), the entries of the sensing matrix are i.i.d. standard Gaussian random variables. 
In our case, the entries of $A_m$ in \eqref{Def:SensMat_IPS} exhibit high correlations. This prevents us from employing the ``leave-one-out'' tool, as successfully applied in \cite{LeeStoger2022,Chandrasekher2022alternating}, to prove the convergence of the alternating least square algorithm.

\paragraph{RIP and joint coercivity conditions}
The lower bound in the RIP condition is closely related to the joint coercivity conditions in Definition \ref{def:jointCC}. In the following, we illustrate that the rank-1 and rank-2 RIP conditions lead to the rank-1 and rank-2 joint coercivity conditions, respectively, when  $\calH$ is finite-dimensional.
\begin{proposition}\label{Prop:RIPtoJCC}
	Let $\calH=\textrm{span} \{\psi_k\}_{k=1}^p$ with $\{\psi_k\}_{k=1}^p$ being orthonormal in $L^2_{\rho_L}$ for $p\geq 1$. Let  $\sA_i:\R^{(N-1)\times \p }\to \R^M$ be (row-wise) linear sensing operators in \eqref{Def:SensOp}  with sensing matrices in \eqref{Def:SensMat_IPS}. 
	Let $r\in \{1,2\}$. Suppose $\sA_i$ satisfies the rank-r RIP condition with a constant $\delta$ for all $i\in[N]$ uniform for all $M\to \infty$. Then, the rank-r joint coercivity condition holds.  
\end{proposition}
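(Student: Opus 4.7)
The plan is to choose, for each row index $i$, rank-$r$ test matrices $Z \in \R^{(N-1)\times p}$ in the RIP inequality \eqref{Cond:RIP} so that $\langle A_m, Z\rangle_F$ reproduces exactly the sum appearing inside the expectation in the rank-$r$ joint coercivity condition, and so that $\|Z\|_F^2$ reproduces the target right-hand side. Since the lower RIP bound holds with a constant $\delta$ uniform in $M$, the strong law of large numbers applied to the i.i.d.\ samples $\bX_{t_0}^m$ converts the empirical average $\tfrac{1}{CM}\sum_m \langle A_m, Z\rangle_F^2$ into its expectation, preserving the inequality; this yields the desired coercivity constant $c_\calH = C(1-\delta) > 0$.

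For the rank-$1$ case, fix $i\in[N]$, $\ba_{i\cdot}\in\mathcal{M}$, and $\IK = \sum_{k=1}^p c_k \psi_k \in \calH$, and take $Z := \ba_{i\cdot}^\top c^\top$. From \eqref{Def:SensMat_IPS} one reads off $\langle A_m, Z\rangle_F = \sum_{j\neq i}\ba_{ij}\,\IK(\br_{ij}(t_0)^m)$, while orthonormality of $\{\psi_k\}_{k=1}^p$ in $L^2(\rho_L)$ gives $\|Z\|_F^2 = |\ba_{i\cdot}|^2\,|c|^2 = |\ba_{i\cdot}|^2\,\|\IK\|_{\rho_L}^2$. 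Applying the lower RIP bound and letting $M\to\infty$ then produces \eqref{eq:jointCC} with constant $c_\calH = C(1-\delta)$. For $L > 1$ the only change is to stack the sensing matrices across the $L$ time instances and normalize by $LM$, which in the limit yields the time-averaged expectation $\tfrac{1}{L}\sum_{l=0}^{L-1}\E[\,\cdot\,]$ required by Definition \ref{def:jointCC}.

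For the rank-$2$ case, take two pairs $(\ba^{(s)}, c^{(s)})$, $s=1,2$, with $\langle \IK_1, \IK_2\rangle_{L^2(\rho_L)} = 0$, and form $Z := (\ba^{(1)}_{i\cdot})^\top (c^{(1)})^\top + (\ba^{(2)}_{i\cdot})^\top (c^{(2)})^\top$, which has rank at most two. The Frobenius cross term between the two rank-$1$ summands factors as $\langle \ba^{(1)}_{i\cdot}, \ba^{(2)}_{i\cdot}\rangle\,\langle c^{(1)}, c^{(2)}\rangle$, and the second factor equals $\langle \IK_1, \IK_2\rangle_{L^2(\rho_L)} = 0$ by orthonormality of $\{\psi_k\}$; hence $\|Z\|_F^2 = |\ba^{(1)}_{i\cdot}|^2\,\|\IK_1\|_{\rho_L}^2 + |\ba^{(2)}_{i\cdot}|^2\,\|\IK_2\|_{\rho_L}^2$. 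Meanwhile $\langle A_m, Z\rangle_F$ is exactly the bracketed expression inside the expectation of \eqref{eq:jointCC2}. Applying the rank-$2$ RIP and passing to the limit as above yields \eqref{eq:jointCC2} with the same constant $c_\calH = C(1-\delta)$.

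The one delicate point is the limit passage in $M$, but it is essentially free: the uniformity of $\delta$ in $M$ means $\tfrac{1}{CM}\sum_m \langle A_m, Z\rangle_F^2 \geq (1-\delta)\|Z\|_F^2$ holds for every $M$ (with probability one, since the sensing matrices depend on the random initial conditions), and the SLLN for the i.i.d.\ scalars $\langle A_m, Z\rangle_F^2$ transfers the inequality to the expectation. The rest of the argument is the purely algebraic identification of $\|Z\|_F^2$ and $\langle A_m, Z\rangle_F$ with the quantities in \eqref{eq:jointCC}--\eqref{eq:jointCC2}, via orthonormality of the basis in $L^2(\rho_L)$.
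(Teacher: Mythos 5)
Your proof is correct and follows essentially the same route as the paper's: the same choice of rank-$r$ test matrices $Z$, the same use of basis orthonormality to identify $\|Z\|_F^2$ with $|\ba_{i\cdot}|^2\|\IK\|_{\rho_L}^2$ (and to kill the cross term in the rank-2 case), and the same passage from the empirical RIP bound to its expectation via the law of large numbers. Your version is slightly more explicit about the factorization of the Frobenius cross term and about the $L>1$ stacking, but the argument is the same.
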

\begin{proof}
	Without loss of generality, we set $i=1$ and $L=1$ and abbreviate $\sA_1$ as $\sA$. We consider the rank-1 case first. For all rank-1 matrices $Z=uv^\top$, it is equivalent to consider any $u=\ba_{1\cdot}\in \mathcal{M}$ (defined in \eqref{eq:aMat_set}) and any $v=c\in \R^p$.  Then, substituting \eqref{Def:SensMat_IPS} into \eqref{Cond:RIP} and sending $M$ to infinity, we  get the lower bound by the Law of Large Numbers:
\begin{align*}
	\norm{\sA (Z)}_2^2 &= \E \bigg[\bigg|  \sum_{j=2}^N \ba_{1j} \IK(X_{t_0}^{j+1}-X_{t_0}^{1})\bigg|^2 \bigg]
	\geq C(1-\delta) \norm{Z}_F^2 = C(1-\delta) |\ba_{1\cdot}|^2  \|c\|^2
	= C(1-\delta) |\ba_{1\cdot}|^2  \|\IK\|_{\rho}^2
\end{align*}
for any $\IK=\sum_{k} c_k \psi_k\in \calH$. Thus,  the coercivity constant in \eqref{eq:jointCC} is $c_{\calH}=C(1-\delta)$ for a finite-dimensional hypothesis space, where $C$ is the normalization constant in the RIP condition when the kernel is represented on an orthonormal basis.

Next, we consider the rank-2 case. Recall that lower bound in rank-2 RIP condition implies that $\norm{\sA (Z)}_2^2 \geq C(1-\delta) \norm{Z}_F^2$ for all matrices with rank equal or less than two, i.e., $Z=u_1 v_1^\top+u_2 v_2^\top$ for all $u_1,u_2\in \R^{N-1}$ and $v_1,v_2\in\R^p$. We aim to show that 
\begin{align}\label{Ineq:RIP_discuss}
	\E \bigg[\bigg|  \sum_{j=2}^N [ \ba_{1j}^{(1)} \IK_1(X_{t_0}^{j+1}-X_{t_0}^{1})+\ba_{ij}^{(2)} \IK_2(X_{t_0}^{j+1}-X_{t_0}^{1})] \bigg|^2 \bigg] \geq c_\calH [|\ba_{i\cdot}^{(1)}|^2  \|\IK_1\|_{\rho_L}^2+ |\ba_{i\cdot}^{(2)}|^2  \|\IK_2\|_{\rho_L}^2]
\end{align}
with $c_{\calH}=C(1-\delta)$ for all $\IK_1,\IK_2\in\calH$ being orthogonal and for all weight matrices $\ba^{(1)},\ba^{(2)}\in \mathcal{M}$. For any $\IK_1=\sum_{k} c_{1,k} \psi_k\in \calH$ and $\IK_2=\sum_{k} c_{2,k} \psi_k\in \calH$ being orthogonal to each other, we have $c_1\perp c_2$ and $\norm{\ba_{1\cdot}^{(1)} c_1^\top +\ba_{1\cdot}^{(2)} c_2^\top}_F^2= |\ba_{1\cdot}^{(1)}|^2  |c_1|^2+|\ba_{1\cdot}^{(2)}|^2  |c_2|^2 $. Thus, with $u_1=\ba_{1\cdot}^{(1)}$, $u_2=\ba_{1\cdot}^{(2)}$ and $v_1=c_1$, $v_2=c_2$, the lower bound of rank-2 RIP amounts to
\begin{align*}
	\norm{\sA (Z)}_2^2 &= \E \bigg[\bigg|  \sum_{j=2}^N [ \ba_{1j}^{(1)} \IK_1(X_{t_0}^{j+1}-X_{t_0}^{1})+\ba_{ij}^{(2)} \IK_2(X_{t_0}^{j+1}-X_{t_0}^{1})] \bigg|^2 \bigg] \nonumber \\
	&\geq C(1-\delta) \norm{Z}_F^2= C(1-\delta)\norm{\ba_{1\cdot}^{(1)} c_1^\top +\ba_{1\cdot}^{(2)} c_2^\top}_F^2 \\
	&= C(1-\delta)[|\ba_{1\cdot}^{(1)}|^2  |c_1|^2+|\ba_{1\cdot}^{(2)}|^2  |c_2|^2] \nonumber 
	=C(1-\delta)[|\ba_{i\cdot}^{(1)}|^2  \|\IK_1\|_{\rho_L}^2+ |\ba_{i\cdot}^{(2)}|^2  \|\IK_2\|_{\rho_L}^2]\,. 
\end{align*}
So, we get \eqref{Ineq:RIP_discuss} and finish the proof.
\end{proof}

\paragraph{Large RIP constants and local minima in our setting}
The RIP constant $\delta$ plays a crucial role in characterizing the presence of spurious local minima and the convergence of search algorithms; see for example, \cite{BahmaniRomberg17a,GJZ2017,LeeStoger2022,Chandrasekher2022alternating}.  Notably, when the rank $r=1$ and in the symmetric setting $U=V$ in equation \eqref{Mat_Factor}, a precise RIP threshold of $\delta=\frac{1}{2}$ serves to establish both necessary and sufficient conditions for the exact recovery of $U=V$ in the matrix sensing problem \eqref{Mat_Factor}. For example, readers can find the interesting result in \cite{ZSL2017}.
\begin{theorem}[Theorem 3 in \cite{ZSL2017}]\label{Thm:ZSL2017}
	Let the sensing operator $\sA$ satisfy $(2,\delta)$-RIP condition and the loss function $F(U)=\|\sA(UU^\top -Z^*)\|^2$.
	\begin{enumerate}
		\item[(a)] If $\delta<1/2$, then $F$ has no spurious local minima.
		\item[(b)] If $\delta\geq 1/2$, then there exists a counterexample admitting a spurious local minima. 
	\end{enumerate}
\end{theorem}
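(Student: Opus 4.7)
The plan is to analyze the first and second-order optimality conditions of $F(U) = \|\sA(UU^\top - Z^*)\|_2^2$ at a critical point, and exploit the rank-$2$ RIP bound on $\sA$ to either rule out spurious local minima (part (a)) or exhibit one (part (b)). First I would compute the gradient $\nabla F(U) = 4\,\sA^*\sA(UU^\top - Z^*)\,U$ and the Hessian quadratic form
\[
\nabla^2 F(U)[D,D] \;=\; 2\,\|\sA(UD^\top + DU^\top)\|_2^2 \;+\; 4\,D^\top \sA^*\sA(UU^\top - Z^*)\,D,
\]
so that a first-order critical point is characterized by $\sA^*\sA(UU^\top - Z^*)\,U = 0$.

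For part (a), at any critical point $U$ with $UU^\top \neq Z^* = U^*(U^*)^\top$, I would test the Hessian in the direction $D = \pm U^*$, with the sign chosen so that $U^\top U^* \geq 0$. The first Hessian term is bounded above via the upper RIP estimate $(1+\delta)\|UD^\top + DU^\top\|_F^2$, since $UD^\top + DU^\top$ has rank at most $2$. The second term is controlled by the standard polarization consequence of RIP: for matrices $W_1, W_2$ such that both $W_1 + W_2$ and $W_1 - W_2$ have rank at most $2$,
\[
\bigl|\langle \sA^*\sA W_1, W_2\rangle - \langle W_1, W_2\rangle\bigr| \;\leq\; \delta\,\|W_1\|_F\,\|W_2\|_F.
\]
I would apply this with $W_1 = UU^\top - Z^*$ (rank $\leq 2$) and $W_2 = U^*(U^*)^\top$ (rank $1$), checking that $W_1 \pm W_2$ also have rank at most $2$. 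Combining these with scalar contractions of the critical-point identity (multiplying $\sA^*\sA(UU^\top-Z^*)U = 0$ on the left by $U^\top$ and by $(U^*)^\top$) will yield a quantitative upper bound on $\nabla^2 F(U)[U^*,U^*]$ as a linear combination of $\|U\|^2$, $(U^\top U^*)^2$, and $\|UU^\top-Z^*\|_F^2$ whose dependence on $\delta$ collapses onto the factor $(2\delta - 1)$. Whenever $\delta < 1/2$ and $UU^\top \neq Z^*$, this bound is strictly negative, so $U$ is a strict saddle and hence cannot be a local minimum.

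For part (b), I would construct an explicit low-dimensional counterexample, for instance with $n = 2$ or $n = 3$. The approach is to parameterize a family of sensing operators whose rank-$2$ RIP constant is exactly $1/2$ (this is typically achieved by choosing $\sA^*\sA$ to act as identity plus $1/2$ times a rank-one perturbation aligned with a prescribed direction), and then tune the parameters so that some $\widehat U$ with $\widehat U\widehat U^\top \neq Z^*$ satisfies the first-order equation $\sA^*\sA(\widehat U\widehat U^\top - Z^*)\,\widehat U = 0$. I would then verify directly that the Hessian at $\widehat U$ is positive semidefinite, so that $\widehat U$ is a genuine spurious local minimum.

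The main obstacle is the bookkeeping in part (a): the critical-point condition only gives $\sA^*\sA(UU^\top - Z^*)U = 0$, so converting it into a useful estimate for $\langle \sA^*\sA(UU^\top - Z^*),\,U^*(U^*)^\top\rangle$ requires carefully tracking the RIP error terms after all of the scalar contractions, and then algebraically arranging the bound so that the coefficient of $(1-2\delta)$ is manifestly nonnegative and vanishes exactly when $U = \pm U^*$. Part (b) is conceptually lighter but requires simultaneously verifying both the RIP constant and local minimality, which makes the choice of the ``tight'' example delicate.
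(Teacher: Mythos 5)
The statement you are proving is quoted in the paper as Theorem~3 of the reference \cite{ZSL2017}; the paper itself offers no proof, so there is no ``paper's own argument'' to compare against. What can be checked is whether your sketch would yield the sharp constant $\delta < 1/2$, and as written it would not.

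Your Hessian computation and the test direction $D=U^*$ are fine as far as they go. At a critical point $U$, the first-order condition $\sA^*\sA(E)U=0$ (with $E=UU^\top-Z^*$) gives $\langle\sA(E),\sA(UU^\top)\rangle=0$, so
\[
\nabla^2 F(U)[U^*,U^*] \;=\; 2\|\sA(U(U^*)^\top+U^*U^\top)\|^2 \;-\;4\|\sA(E)\|^2 .
\]
Bounding the first term above by $(1+\delta)\|U(U^*)^\top+U^*U^\top\|_F^2$ and the second below by $(1-\delta)\|E\|_F^2$ and setting $a=\|U\|$, $c=U^\top U^*$ (with $\|U^*\|=1$), one gets
\[
\tfrac14\nabla^2 F(U)[U^*,U^*]\;\le\;(1+\delta)(a^2+c^2)-(1-\delta)(a^4+1-2c^2)\,.
\]
Specialize to $c=0$. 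The contraction $U^\top\sA^*\sA(E)U=0$, after the (valid) polarization bound, forces $a^4\le\delta\,a^2\sqrt{a^4+1}$, i.e.\ $a^2\le\delta/\sqrt{1-\delta^2}$; but substituting $a^2=\delta/\sqrt{1-\delta^2}$ into the right-hand side above gives a quantity that becomes \emph{positive} already for $\delta$ slightly above $0.43$ (e.g.\ at $\delta=0.45$ it is $\approx +0.04$). Thus the single test direction $D=U^*$, together with the elementary RIP polarization, cannot certify strict negativity of the Hessian all the way up to $\delta<1/2$; your claim that the bound ``collapses onto the factor $(2\delta-1)$'' does not survive this computation. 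The sharp threshold $\delta<1/2$ in \cite{ZSL2017} is obtained by a genuinely different, global argument: rather than plugging one direction into the Hessian, they characterize the worst-case RIP constant admitting a spurious critical point as the value of a (tractable) optimization problem over the sensing operator, which is what makes the constant exactly $1/2$. Your strategy would only reproduce the weaker thresholds of earlier works (e.g.\ Bhojanapalli--Neyshabur--Srebro, Ge--Jin--Zheng). Your plan for part~(b) --- building an explicit tight counterexample with $\sA^*\sA$ a rank-one perturbation of the identity and verifying both RIP and the second-order conditions --- is the right idea and is consistent with how such examples are produced in the literature.
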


However, the non-symmetric case introduces additional complexity, and achieving exact recovery with a sharp threshold for the RIP constant remains an open challenge. The noisy case is another open question \cite{ZSL2017}.  
Our joint inference problem is in a noisy, non-symmetric setting. Thus, the sharp results on $\delta$ in \cite{ZSL2017} for the symmetric noiseless setting do not apply. Nevertheless, the RIP constant $\delta$ provides insights into our problem, specifically regarding the existence of local minima and the convergence of the ALS algorithm. 

As an example, we consider an interacting particle system with $N=3$ particles in $\R^d$ with $d=1$ and $L=1$. We consider Gaussian i.i.d. initial conditions $\bX_{t_0}=(X^{1},X^{2},X^{3})\overset{i.i.d.}{\sim}\calN(0,I_3)$. To make it easy to present the results, we only consider two basis functions $\{\psi_1(x), \psi_2(x)\}$. Thus, giving $M$ samples,  the sensing matrices $\{A_m\}_{m=1}^M$ \eqref{Def:SensMat_IPS} are 
\begin{align}
	A_m = \Big[ \psi_k\big(X^{j+1,m}-X^{1,m}\big) \Big]_{1\leq j \leq 2 \atop 1\leq k\leq 2}
	= \begin{bmatrix}
		\psi_1\big(X^{2,m}-X^{1,m}\big) & \psi_2\big(X^{2,m}-X^{1,m}\big) \\
		\psi_1\big(X^{3,m}-X^{1,m}\big) & \psi_2\big(X^{3,m}-X^{1,m}\big)
	\end{bmatrix}\,,\label{Def:SensAm}
\end{align}
and the sensing operator $\sA$ is defined as in \eqref{Def:SensOp} correspondingly. 
Verifying Restricted Isometry Property \eqref{Cond:RIP} and finding the RIP constant $\delta$ for the operator $\sA$ are NP-hard problems in general.

We numerically estimate the RIP constant $\delta$ for rank $r=1$ as follows. First, compute the RIP ratios:  
\begin{align*}
	R_{\ell}=\frac{\norm{\sA (u_0^{\ell} (v_0^{\ell})^\top)}_2^2}{\norm{u_0^{\ell} (v_0^{\ell})^\top}_2^2}=\frac{1}{M}\sum_{m=1}^M \left|(u_0^{\ell})^\top A_m v_0^{\ell}\right|^2\,,\quad \ell=1,\cdots,2000\,, 
\end{align*}
where $\{u_0^{\ell},v_0^{\ell}\}_{\ell=1}^{2000}$ are unit vectors  randomly sampled in $\R^2$.  Next, normalize the RIP ratios to be in $[0,2]$. We choose $C=\frac{\max(\{R_{\ell}\})+\min(\{R_{\ell}\})}{2}$ in \eqref{Cond:RIP} so that $\widetilde{R}_{\ell}=\frac{R_{\ell}}{C}\in [0,2]$ and the RIP constant is given by
\begin{align*}
	\delta = \frac{\max(\{R_{\ell}\})-\min(\{R_{\ell}\})}{\max(\{R_{\ell}\})+\min(\{R_{\ell}\})} \in (0,1)\,.
\end{align*}
To highlight the effects of the basis functions on the RIP constant, we choose three sets of basis functions listed in Table \ref{Table:RIPTest}. 
Figure \ref{Fig:RIPTest} shows the distributions of the normalized RIP ratios for these three basis functions when $M=2000$. As a reference, we also present numerical tests of the RIP ratios for the classical Gaussian sensing operator, where the entries of $A_m$  are i.i.d.~standard Gaussian random variables. For the case of the Gaussian sensing operator, the normalized RIP ratios are clustered in the interval $[1-\delta,1+\delta]$ with the computed values for $\delta$ being $0.0461$,  $0.0399$, and $0.0534$; these values agree with the well-established result in \cite{RFP2010,LeeStoger2022} that $\delta \to 0$ when $M$ turns to infinity. In contrast, the normalized RIP ratios for the IPS spread widely in $[0,2]$ for all three sets of basis functions, and their RIP constants are $0.4151$, $0.8937$, and $0.9462$, which are relatively large. 
\begin{figure}[htb]
	\centering
	\includegraphics[width=0.3\textwidth]{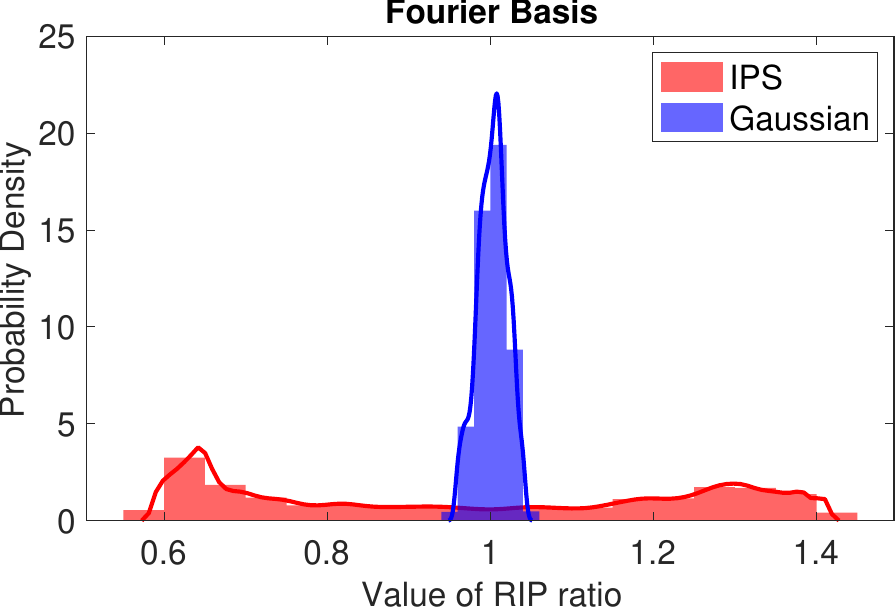}
	\includegraphics[width=0.3\textwidth]{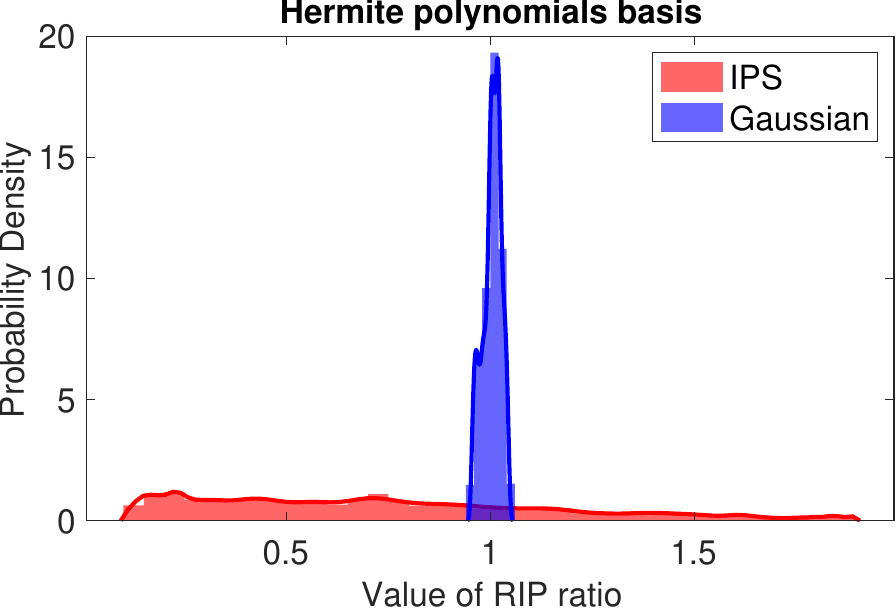}
	\includegraphics[width=0.3\textwidth]{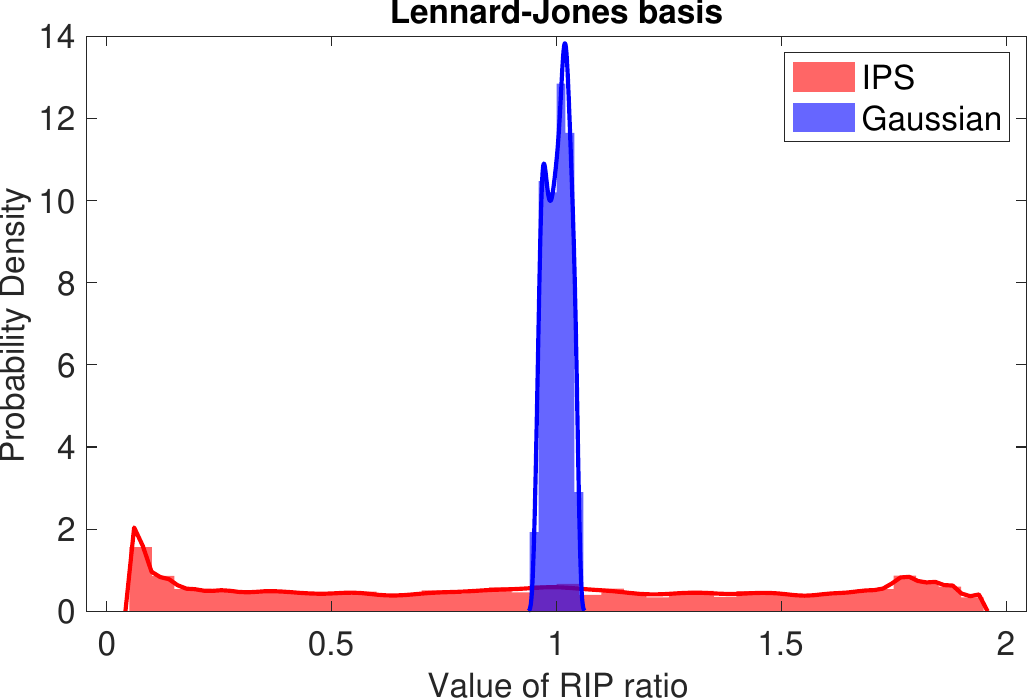}
	\caption{ Distributions of the RIP ratios of the interacting particle system (IPS) in red color. The basis functions and the estimated RIP constants are in Table \ref{Table:RIPTest}. The wide spread of the ratios in an interval $[1-\delta,1-\delta]$ indicates a large RIP constant $\delta$, particularly in the middle and right figures.  
	As a reference, we also present distributions of the RIP ratios for the Gaussian sensing operator in blue color, for which the RIP constants, from left to right, are  $0.0418$,  $0.0456$, and $0.0474$. }
	\label{Fig:RIPTest}
\end{figure}

\begin{table}[htbp]
\centering
\begin{tabular}{r|ccc}
\hline
\hline
                  & Left & Middle & Right \\ \hline
 $\psi_1(x)$ & $\sin(x)$ & $x^4-6x^2+3$ & $x^{-9}\mathbbm{1}_{[0.75, +\infty]}$ \\
                 $\psi_2(x)$ & $\cos(x)$ & $x^5-10x^3+15x$ & $x^{-3}\mathbbm{1}_{[0.25, +\infty]}$ \\ \hline \hline
             RIP constant     & 0.4151 & 0.8937 & 0.9462 \\ \hline \hline
\end{tabular}
\caption{The basis functions and the testing RIP constants in Figure \ref{Fig:RIPTest}. Left: Fourier basis; middle: Hermite polynomials basis; right: as in Section \ref{sec:numExper} for the Lennard-Jones interaction kernel. 
 }
\label{Table:RIPTest}
\end{table}

A large RIP value indicates that the matrix sensing problem may involve local minima, as highlighted by Theorem \ref{Thm:ZSL2017} and consistent with findings in non-symmetric scenarios in \cite{BahmaniRomberg17a,GJZ2017}. Therefore, local minima may exist in the joint inference, and we provide explicit examples. Let $U=\ba_{1\cdot}=(u_1,u_2)\in \R^{2}$ and $V=c=(v_1,v_2)\in\R^{2}$ be unit vectors. We then have 
\begin{align*}
	(u_1,u_2)=(\cos(\theta_1),\sin(\theta_1))\,,\quad (v_1,v_2)=(\cos(\theta_2),\sin(\theta_2))\,,\quad \theta_1,\theta_2 \in[0,2\pi)\,,
\end{align*} 
and the ground truth $(U^*,V^*)$ with $U^*=(u_1^*,u_2^*)=(\cos(\theta_1^*),\sin(\theta_1^*))$ and $V^*=(v_1^*,v_2^*)=(\cos(\theta_2^*),\sin(\theta_2^*))$.
The loss function denoted by $\calE_{M}(U,V)$ depends on $\theta_1$ and $\theta_2$:
\begin{align}\label{Def:LossFunRIP}
	\calE_{M}(\theta_1,\theta_2)=\calE_{M}(U,V)= \frac{1}{M} \sum_{m=1}^M \big|(U^*)^\top A_m V^*- U^\top A_m V  \big|^2
\end{align}
where the sensing matrices $\{A_m\}$ are defined in \eqref{Def:SensAm} and the basis functions are listed in Table \ref{Table:RIPTest}. It is clear that $(-U^*,-V^*)$ forms another global minimum pair, resulting in the loss function $\calE_{M}$ being zero. The corresponding angles are referred to as $(\widetilde{\theta}_1^*,\widetilde{\theta}_2^*)$.

\begin{figure}
	\centering
	\includegraphics[width=0.32\textwidth]{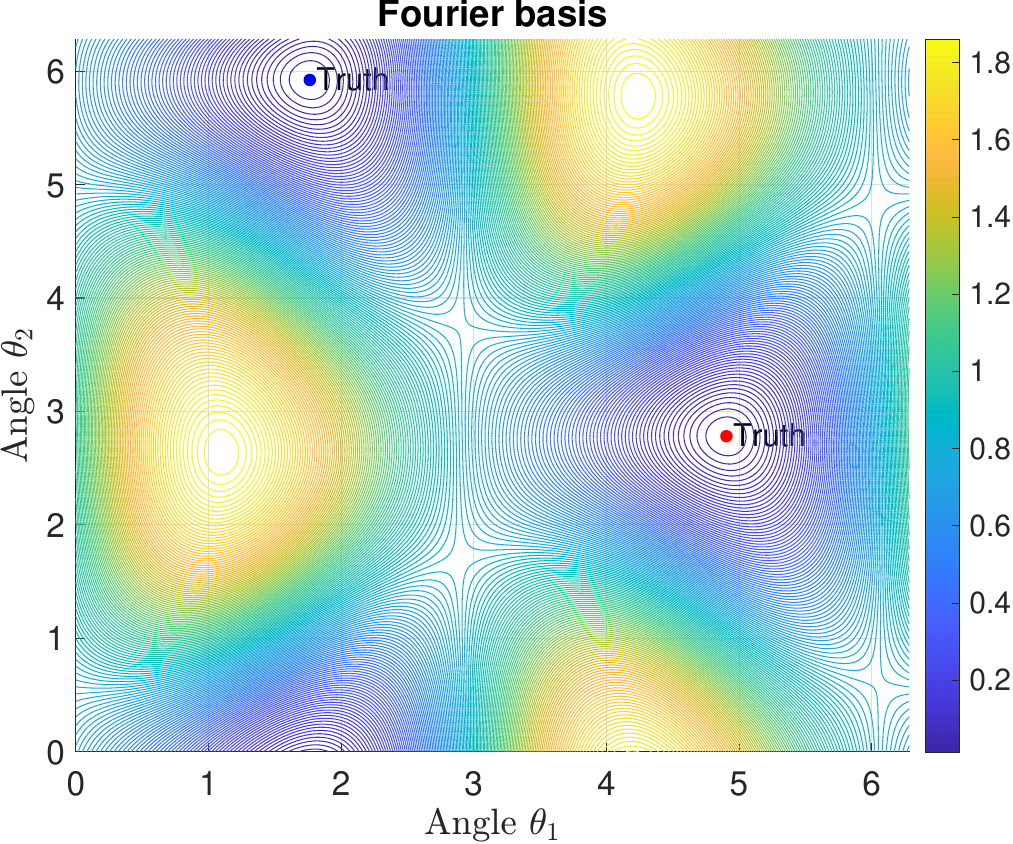}
	\includegraphics[width=0.32\textwidth]{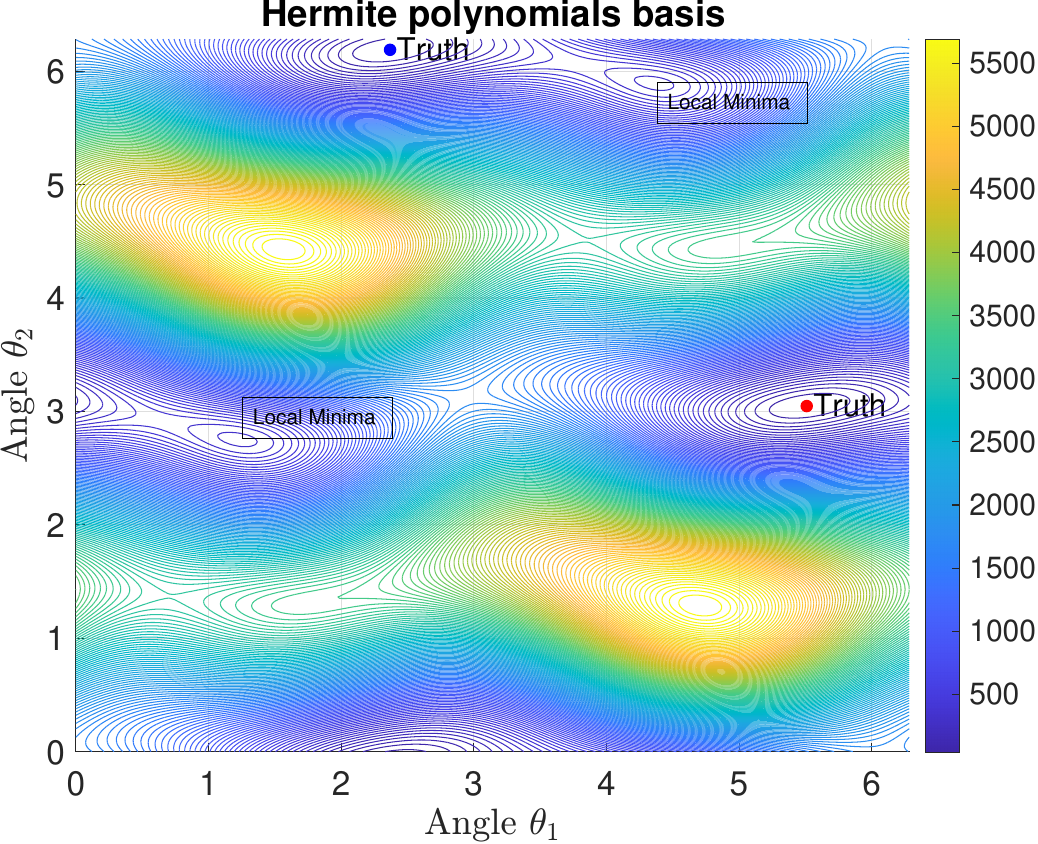}
	\includegraphics[width=0.32\textwidth]{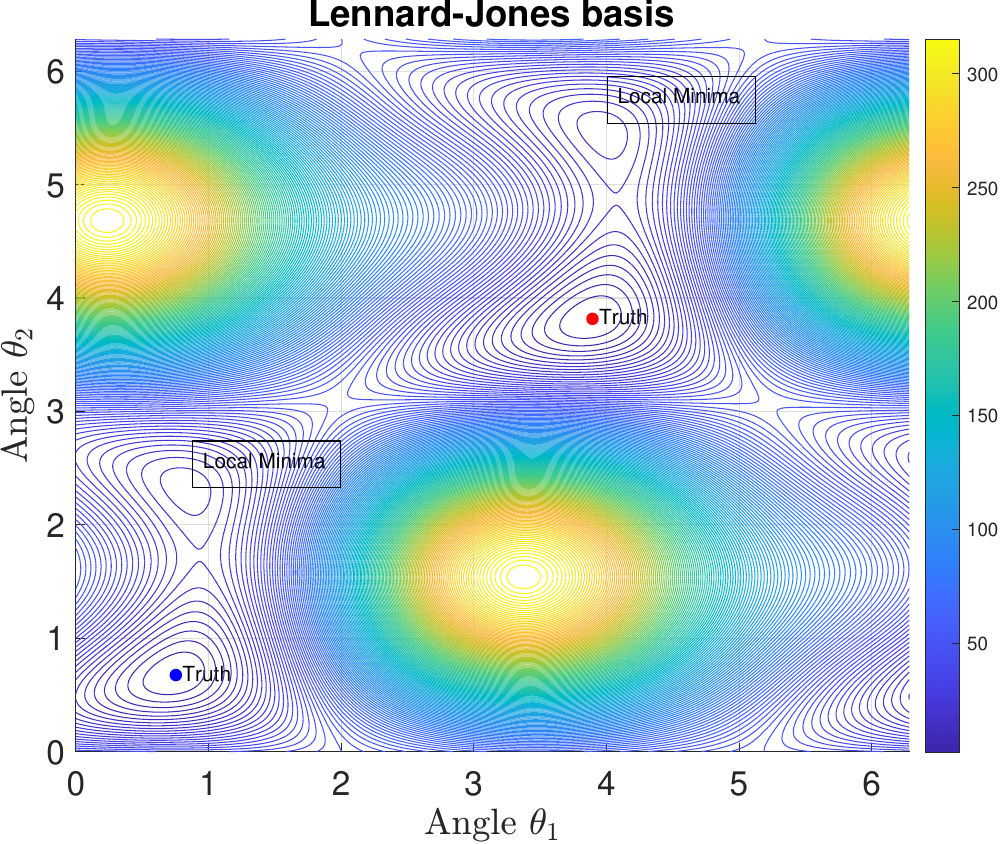}
	\caption{Contour plots of the loss functions for the three sets of basis functions. Local minima are present in the right two plots.}
	\label{Fig:ErrFuns}
\end{figure}

In Figure \ref{Fig:ErrFuns}, the red and blue dots locate the ground truths $(\theta_1^*,\theta_2^*)$ and $(\widetilde{\theta}_1^*,\widetilde{\theta}_2^*)$, respectively. Text boxes label the local minima. The basis functions are set as Hermite polynomials basis in the middle panel Figure \ref{Fig:ErrFuns} and are set as basis for the Lennard-Jones interaction kernel in the right panel of Figure \ref{Fig:ErrFuns}. The corresponding error functions $\calE_{M}(\theta_1, \theta_2)$ are plotted with $M=100$ samples and random choices of ground truth. Upon conducting a limited number of tests, the presence of local minima is not rarely observed, even when imposing normalization constraints on $U$ and $V$. This observation is expected, given that both scenarios exhibit high RIP constants (see Table \ref{Table:RIPTest}). 
However, we do not witness the existence of local minima with the selection of Fourier basis $\{\psi_1(x)=\sin(x),\psi_2(x)=\cos(x)\}$; see an example of the error function with $M=200$ samples in the left panel of Figure \ref{Fig:ErrFuns}. This may be a bit surprising, as the RIP value of the non-symmetric case may be expected to be half that of the symmetric case to ensure the absence of spurious local minima phenomena, as discussed in, for instance, \cite{GJZ2017}. The disappearance of local minima of the error function $\calE_{M}(\theta_1,\theta_2)=\calE_{M}(U,V)$ may be due to the constraints that $U$ and $V$ are unit vectors.  
Investigating the sharpness of the Restricted Isometry Property (RIP), exploring the non-existence of local minima, and understanding the convergence of the ALS algorithm for the joint inference in interacting particle systems on graphs are key subjects for future research.

\section{Algorithm: Three-fold ALS}\label{sec:3fold_ALS}

The three-fold ALS algorithm finds the minimizers of the loss function in \eqref{eq:est_joint_multitype_AUV} with an additional condition that $\bc = \bu \bv^\top$ has only $Q$ distinct columns.

Notice that the loss function \eqref{eq:est_joint_multitype_AUV} is quadratic in each of the unknowns $\ba, \bu, \bv$ if we fix the other two. Thus, we can apply ALS to alternatively solve for each of the unknowns while fixing the other two, and we call this algorithm  \emph{three-fold ALS}. In each iteration, this algorithm proceeds with the following three steps. To ensure that  $\bc$ has only $Q$ distinct columns, we add an optional step of $K$-means. 

\noindent{\em Step 1: Inference of the weight matrix $\ba$. } Given a coefficient matrix $\bu$ and a type matrix $\bv$, we estimate the weight matrix $\ba$ by least squares.  
For every $i \in [N]$, we obtain the minimizer of $\mE_{L, M}(\ba, \bu, \bv)$ in \eqref{eq:est_joint_multitype_AUV} by solving $\nabla_{\ba_{i \cdot}}\mE_{L, M}(\ba, \bu, \bv) = 0$, which is a linear equation in $\ba_{i \cdot}$:
\begin{equation}\label{eq:ALS_graphest_multitype}
\begin{aligned}
\widehat\ba_{i\cdot} \mathcal{A}^{\textrm{ALS}}_{\bu,\bv,M,i} := \widehat \ba_{i\cdot} ([\bB(\bX_{t_l}^m)_{i}]_{l,m} \bu \bv_{i\cdot}^\top) = [(\Delta \bX_{t_l})_i]_{l,m}/\Delta t\,\ \ ,
\end{aligned}
\end{equation}
using least squares with nonnegative constraints. The solution is then row-normalized to obtain an estimator $\widehat{\ba}_{i, \cdot}$ in the admissible set.

\noindent{\em Step 2: Inference of the coefficient matrix $\bu$. } Next, we estimate the coefficient matrix $\bu$ by minimizing the loss function $\mE_{L, M}(\ba, \bu, \bv)$ in \eqref{eq:est_joint_multitype_AUV} with the (estimated) weight matrix $\ba$ and a type matrix $\bv$. The minimizer is a solution to 
\begin{equation}\label{eq:ALS_multitype_uest_pre}
	\ba_{i\cdot}[\bB(\bX_{t_l}^m)_i]_{l,m} \widehat\bu \bv_{i \cdot}^\top = [\Delta \bX^i_{t_l}]_{l,m}/\Delta t, \quad i\in [N]. 
\end{equation}
Noting that for each $i\in [N]$, 
$\innerp{\mathcal{A}^{\textrm{ALS}}_{\ba, \bv, M, i}, \widehat\bu}_F:= \innerp{\ba_{i\cdot}[\bB(\bX_{t_l}^m)_i]_{l,m}^\top \otimes \bv_{i, \cdot}, \widehat\bu}_F = \ba_{i\cdot}[\bB(\bX_{t_l}^m)_i]_{l,m} \widehat\bu \bv_{i \cdot}^\top,$
we can write a linear equation for $\widehat\bu$ using Frobenius inner product: 
\begin{equation}\label{eq:ALS_multitype_uest}
	\mathcal{A}_{\ba, \bv, M}^{\textrm{ALS}} \widehat{\bu}:= \left(\innerp{\mathcal{A}^{\textrm{ALS}}_{\ba, \bv, M, i}, \widehat{\bu}}_F\right)_i = [\Delta \bX_{t_l} ]_{l,m}/\Delta t. 
\end{equation}

\noindent{\em Step 3: Inference of the type matrix $\bv$. } At last, we estimate the type matrix $\bv$ by minimizing the loss function $\mE_{L, M}(\ba, \bu, \bv)$ in \eqref{eq:est_joint_multitype_AUV} with the (estimated) weight matrix $\ba$ and coefficient matrix $\bu$. Firstly we solve the linear equation, 
\begin{equation}\label{eq:ALS_multitype_vest}
	\mathcal{A}^{\textrm{ALS}}_{\ba,\bu,M,i} \widehat{\bv}_{i\cdot}^\top:= ( \ba_{i\cdot} [\bB(\bX_{t_l}^m)_{i}]_{l,m} \bu )\widehat{\bv}_{i\cdot}^\top = [(\Delta \bX_{t_l})_i]_{l,m}/\Delta t\,,  \quad i\in [N]
\end{equation}
with the result denoted as $\widehat{\bv}'$. Then,  we apply a final normalization step to ensure the orthogonality at the end. Namely, we find an orthogonal matrix $\widehat{\bv}$ such that
$
	\widehat{\bv} = \argmin{\bv^\top \bv = I_Q} \norm{\bv - \widehat{\bv}'}_F
$.
The above problem is known as the orthogonal Procrustes problem \cite{gower2004procrustes}, and the solution is given by normalizing the singular values of $\widehat{\bv}'$, namely, 
\begin{equation}\label{eq:3ALS_v_orthogonal_normalization}
	\widehat{\bv'} = U\Sigma V^\top \implies \widehat \bv = U V^\top.
\end{equation}

\noindent{\em Step 4 (optional): apply $K$-means to the estimated  $\widehat \bv$. }  To enforce the coefficient matrix $\bc$ to have $Q$ distinct columns, we cluster the rows of $\widehat \bv$ by $K$-means.  

Algorithm \ref{alg:ALS_multitype} summarizes the above iterative procedure.

\begin{algorithm}
\caption{Three-fold ALS}\label{alg:ALS_multitype}
\small{
\begin{algorithmic}[]
\Procedure{Three-fold ALS}{$\{\bX^m_{t_0:t_L}\}_{m=1}^M , \{\psi_k\}_{k=1}^{\p} , \epsilon ,{\p}_{maxiter}$}
  \State Construct the arrays $\{\bB(\bX_{t_l}^m)\}_{l,m}$ and $\{\Delta \bX_{t_l}^m\}$ in \eqref{eq:est_joint} for each trajectory.  
  \State Randomly pick initial conditions $\widehat{\bu}_0$ and $\widehat{\bv}_0$.
  \For{$\tau = 1,\dots,{\p}_{maxiter}$}  
  \State Estimate the weight matrix $\widehat{\mathbf{a}}_{\tau}$ by solving \eqref{eq:ALS_graphest_multitype} with $\bu=\widehat{\bu}_{\tau-1}$ and $\bv=\widehat{\bv}_{\tau-1}$, with nonnegative \indent \quad least squares followed by a row normalization.
  \State Estimate the coefficient matrix $\widehat\bu_\tau$ by solving \eqref{eq:ALS_multitype_uest} with $\mathbf{a}=\widehat{\mathbf{a}}_\tau$ and $\mathbf{v}=\widehat{\mathbf{v}}_{\tau-1}$ by least squares.
  \State Estimate the type matrix $\widehat \bv_\tau$ by solving \eqref{eq:ALS_multitype_vest} with $\mathbf{a}=\widehat{\mathbf{a}}_\tau$ and $\mathbf{u}=\widehat{\mathbf{u}}_\tau$ by least squares followed \indent \quad by normalization in singular values as in \eqref{eq:3ALS_v_orthogonal_normalization} and an optional step clustering the rows of $\widehat \bv$. 
  \State Exit loop if $||\widehat{\mathbf{a}}_{\tau}-\widehat{\mathbf{a}}_{\tau-1}||\le\epsilon ||\widehat{\mathbf{a}}_{\tau-1}||$, $||\widehat{\bu}_\tau-\widehat{\bu}_{\tau-1}||\le\epsilon||\widehat{\bu}_{\tau-1}\|$ and $||\widehat{\bv}_\tau-\widehat{\bv}_{\tau-1}||\le\epsilon||\widehat{\bv}_{\tau-1}\|$.
\EndFor
\Return  $\widehat{\mathbf{a}}_\tau,
\widehat{\bu}_\tau, \widehat{\bv}_\tau$.
\EndProcedure
\end{algorithmic}
}
\end{algorithm}

\section*{Acknowledgement}
{This work was supported by National Science Foundation Grants DMS-2238486; Air Force Oﬃce of Scientific Research Grant
AFOSR-FA9550-20-1-0288, FA9550-21-1-0317, and FA9550-23-1-0445; and the Johns Hopkins Catalyst Award; and the Sun Yat-sen
University research startup. }




\bibliographystyle{alpha}
\bibliography{ref_IPS_graph.bib,ref_FeiLU2023_8,ref_grn,ref_regularization23_09,MM_Publications}


\end{document}